\newcommand{\dl}[1]{\textcolor{purple}{[\textbf{DL}: #1]}}
\newcommand{\f}{F}
\newcommand{\ff}{f}
\newcommand{\pf}{\ff}
\renewcommand{\S}{\mathbb{S}}
\newcommand{\diffp}{\varepsilon}  %
\newcommand{\ed}{\ensuremath{(\diffp,\delta)}}
\newcommand{\lkappa}{\underline{\kappa}} %
\newcommand{\A}{\mathsf{A}}
\newcommand{\cO}{\mathcal{O}}
\renewcommand{\ss}{\eta} %
\newcommand{\Ds}{\mathcal{S}} %
\newcommand{\ds}{s} %
\newcommand{\domain}{\mathbb{S}} %
\newcommand{\xdomain}{\mc{X}} %
\newcommand{\diam}{\mathsf{diam}} %
\newcommand{\opt}{^\star} %
\newcommand{\noise}{\zeta} %
\newcommand{\lip}{L}  %
\renewcommand{\sc}{\lambda} %
\newcommand{\rad}{D} %
\title{Adapting to Function Difficulty and \\
  Growth Conditions
  in Private Optimization}
\date{}
\author{}
\date{}
\begin{document}
\maketitle

\begin{abstract}

  We develop algorithms for private stochastic convex optimization that adapt to
  the hardness of the specific function we wish to optimize. While previous work
  provide worst-case bounds for arbitrary convex functions, it is often the case
  that the function at hand belongs to a smaller class that enjoys faster
  rates. Concretely, we show that for functions exhibiting $\kappa$-growth
  around the optimum, i.e.,
  $\ff(x) \ge \ff(x^\star) + \lambda \kappa^{-1} \norms{x-x^\star}_2^\kappa$ for
  $\kappa > 1$, our algorithms improve upon the standard ${\sqrt{d}}/{n\diffp}$
  privacy rate to the faster
  $({\sqrt{d}}/{n\diffp})^{\tfrac{\kappa}{\kappa - 1}}$. Crucially, they achieve
  these rates without knowledge of the growth constant $\kappa$ of the
  function. Our algorithms build upon the inverse sensitivity mechanism, which
  adapts to instance difficulty~\cite{AsiDu20}, and recent localization
  techniques in private optimization~\cite{FeldmanKoTa20}. We complement our
  algorithms with matching lower bounds for these function classes and
  demonstrate that our adaptive algorithm is \emph{simultaneously} (minimax)
  optimal over all $\kappa \ge 1+c$ whenever $c = \Theta(1)$.

\end{abstract}

\section{Introduction}
\label{sec:intro}

Stochastic convex optimization (SCO) is a central problem in machine
learning and statistics, where for a sample space $\S$, parameter space
$\mc{X}\subset\R^d$, and a collection of convex losses $\crl{\f(\cdot;s):
  s\in\S}$, one wishes to solve
\begin{equation}\label{eq:sco}
  \minimize_{x\in\mc{X}} \ff(x) \defeq \E_{S\sim P}\brk*{\f(x;S)}
  = \int_{\S}\f(x;s)\mathrm{d}P(s)
\end{equation}
using an observed dataset $\mc{S} = S_1^n \simiid P$.  While as formulated, the
problem is by now fairly well-understood~\cite{BottouCuNo18, NemirovskiJuLaSh09,
  HazanKa11, BeckTe03, NemirovskiYu83}, it is becoming clear that, because of
considerations beyond pure statistical accuracy---memory or communication
costs~\cite{ZhangDuJoWa13_nips,GargMaNg14, BravermanGaMaNgWo16},
fairness~\cite{DworkHaPiReZe12,HashimotoSrNaLi18}, personalization or
distributed learning~\cite{McMahanMoRaHaAr17}---problem~\eqref{eq:sco} is simply
insufficient to address modern learning problems.  To that end, researchers have
revisited SCO under the additional constraint that the solution preserves the
privacy of the provided sample~\cite{DworkMcNiSm06, DworkKeMcMiNa06,
  AbadiChGoMcMiTaZh16, ChaudhuriMoSa11, DuchiJoWa13_focs}. A waypoint is
\citet{BassilySmTh14}, who provide a private method with optimal convergence
rates for the related empirical risk minimization problem, with recent papers
focus on SCO providing (worst-case) optimal rates in various settings: smooth
convex functions~\cite{BassilyFeTaTh19, FeldmanKoTa20}, non-smooth
functions~\cite{BassilyFeGuTa20}, non-Euclidean
geometry~\cite{AsiFeKoTa21,AsiDuFaJaTa21} and under more stringent privacy
constraints~\cite{LevySuAmKaKuMoSu21}.

Yet these works ground their analyses in worst-case scenarios and
provide guarantees for the \emph{hardest} instance of the class of problems they
consider. Conversely, they argue that their algorithms are optimal in a minimax
sense: for any algorithm, there exists a hard instance on which the
error achieved by the algorithm is equal to the upper bound. While valuable,
these results are pessimistic---the exhibited hard instances are typically
pathological---and fail to reflect achievable performance.

In this work, we consider the problem of adaptivity when
solving~\eqref{eq:sco} under privacy constraints. Importantly, we wish to
provide private algorithms that \emph{adapt} to the hardness of the
objective $\ff$. A loss function $\ff$ may belong to multiple problem
classes, each exhibiting different achievable rates, so a natural
desideratum is to attain the error rate of the easiest sub-class. As a
simple vignette, if one gets an arbitrary $1$-Lipschitz convex loss function
$\ff$, the worst-case guarantee of any $\diffp$-DP algorithm is
$\Theta(1/\sqrt{n} + d/(n\diffp))$. However, if one learns that $\ff$
exhibits some growth property---say $\ff$ is $1$-strongly convex---the
regret guarantee improves to the faster $\Theta(1/n + (d / (n\diffp))^2)$
rate with the appropriate algorithm. It is thus important to provide
algorithms that achieves the rates of the ``easiest'' class to which the
function belongs~\cite{JuditskyNe14, ZhuChDuLa16, DuchiRu21}.

To that end, consider the nested classes of functions $\mc{F}^\kappa$ for
$\kappa \in [1, \infty]$ such that, if $\ff \in \mc{F}^\kappa$ then there exists
$\lambda > 0$ such that for all $x \in \xdomain$,
\begin{equation*}
  \ff(x) -
  \inf_{x'\in\mc{X}} \ff(x') \ge \frac{\lambda}{\kappa}\norm{x-x^\star}_2^\kappa.
\end{equation*}
For example, strong convexity implies growth with
parameter $\kappa = 2$. This growth assumption closely relates to
uniform convexity~\cite{JuditskyNe14} and the
Polyak-Kurdyka-\L{}ojasiewicz inequality~\cite{BolteNgPeSu17}, and
we make these
connections precise in Section~\ref{sec:background}. Intuitively, smaller
$\kappa$ makes the function much easier to optimize: the error around the
optimal point grows quickly. Objectives with growth are widespread in machine
learning applications: among others, the $\ell_1$-regularized hinge loss
exhibits sharp growth (i.e.\ $\kappa = 1$) while $\ell_1$- or
$\ell_\infty$-constrained $\kappa$-norm regression~---i.e.\ $s = (a, b) \in
\R^d\times\R$ and $\f(x;s) = \abs{b - \tri{a, x}}^\kappa$---has
$\kappa$-growth for any $\kappa$ integer greater than
$2$~\cite{XuLiYa17}. In this work, we provide private adaptive algorithms
that adapt to the \emph{actual} growth of the function at hand.

We begin our analysis by examining Asi and Duchi's inverse sensitivity
mechanism~\cite{AsiDu20} on ERM as a motivation. While not a practical
algorithm, it achieves instance-optimal rates for any one-dimensional
function under mild assumptions, quantifying the best bound one could hope
to achieve with an adaptive algorithm, and showing (in principle) that
adaptive private algorithms can exist. We first show that for any function
with $\kappa$-growth, the inverse sensitivity mechanism achieves privacy
cost $(d/(n\diffp))^{\kappa / (\kappa - 1)}$; importantly, \emph{without
  knowledge of the function class $\mc{F}^\kappa$, that $\ff$ belongs
  to}. This constitutes grounding and motivation for our work in three ways:
(i) it validates our choice of sub-classes $\mc{F}^\kappa$ as the privacy
rate is effectively controlled by the value of $\kappa$, (ii) it exhibits
the rate we wish to achieve with efficient algorithms on $\mc{F}^\kappa$ and
(iii) it showcases that for easier functions,
privacy costs shrink significantly---to illustrate, for $\kappa =
5/4$ the privacy rate becomes $(d/(n\diffp))^5$.

We continue our treatment of problem~\eqref{eq:sco} under growth in
Section~\ref{sec:upper-bounds} and develop practical algorithms that achieve the
rates of the inverse sensitivity mechanism.  Moreover, for approximate
\ed-differential privacy, our algorithms improve the rates, achieving roughly
$(\sqrt{d}/(n\diffp))^{\kappa / (\kappa - 1)}$.  Our algorithms hinge on a
reduction to SCO: we show that by solving a sequence of increasingly constrained
SCO problems, one achieves the right rate whenever the function exhibits growth
at the optimum. Importantly, our algorithm only requires a \emph{lower bound}
$\underline{\kappa} \le \kappa$ (where $\kappa$ is the actual growth of $\ff$).

We provide optimality guarantees for our algorithms in Section~\ref{sec:lb} and
show that both the inverse sensitivity and the efficient algorithms of
Section~\ref{sec:upper-bounds} are \emph{simultaneously minimax optimal} over
all classes $\mc{F}^\kappa$ whenever $\kappa = 1+\Theta(1)$ and $d=1$ for
$\diffp$-DP algorithms. Finally, we prove that in \emph{arbitrary dimension},
for both pure- and approximate-DP constraints, our algorithms are also
simultaneously optimal for all classes $\mc{F}^\kappa$ with $\kappa \ge 2$.

On the way, we provide results that may be of independent
interest to the community. First, we develop optimal algorithms for SCO
under \emph{pure} differential privacy constraints, which, to the best of
our knowledge, do not exist in the literature. Secondly, our algorithms and
analysis provide high-probability bounds on the loss, whereas existing
results only provide (weaker) bounds on the expected loss.  Finally, we
complete the results of~\citet{RamdasSi13} on (non-private)
optimization lower bounds for functions with $\kappa$-growth by providing
information-theoretic lower bounds (in contrast to oracle-based
lower bounds that rely on observing only gradient information) and
capturing the optimal dependence on all problem parameters (namely $d, \lip$
and $\lambda$).

\subsection{Related work}
\label{sec:related-work}

Convex optimization is one of the best studied problems in private data
analysis~\cite{ChaudhuriMoSa11,DuchiJoWa13_focs,SmithTh13lasso,BassilySmTh14}.
The first papers in this line of work mainly study minimizing the empirical
loss, and readily establish that the (minimax) optimal privacy rates are
${d}/{n \diffp}$ for pure $\diffp$-DP and ${\sqrt{d \log(1/\delta)}}/{n \diffp}$
for $(\diffp,\delta)$-DP~\cite{ChaudhuriMoSa11,BassilySmTh14}. More recently,
several works instead consider the harder problem of privately minimizing the
population loss~\cite{BassilyFeTaTh19,FeldmanKoTa20}. These papers introduce new
algorithmic techniques to obtain the worst-case optimal rates of
$1/\sqrt{n}+ {\sqrt{d \log(1/\delta)}}/{n \diffp}$ for \ed-DP. They also show
how to improve this rate to the faster
$1/n + {{d \log(1/\delta)}}/{(n \diffp)^2}$ in the case of $1$-strongly convex
functions. Our work subsumes both of these results as they correspond to
$\kappa = \infty$ and $\kappa = 2$ respectively. To the best of our knowledge,
there has been no work in private optimization that investigates the rates under
general $\kappa$-growth assumptions or adaptivity to such conditions.

In contrast, the optimization community has extensively studied growth assumptions
~\cite{RamdasSi13,JuditskyNe14,ChatterjeeDuLaZh16} and show that on these
problems, carefully crafted algorithms improves upon the standard $1/\sqrt{n}$
for convex functions to the faster $(1/\sqrt{n})^{\kappa/(\kappa -
	1)}$. \cite{JuditskyNe14} derives worst-case optimal (in the first-order
oracle model) gradient algorithms in the uniformly convex case (i.e.
$\kappa \ge 2$) and provides technique to adapt to the growth $\kappa$, while
\cite{RamdasSi13}, drawing connections between growth conditions and active
learning, provides upper and lower bounds in the first-order stochastic oracle
model. We complete the results of the latter and provide
\emph{information-theoretic} lower bounds that have optimal dependence on
$d, \lambda$ and $n$---their lower bound only holding for $\lambda$ inversely
proportional to $d^{1/2-1/\kappa}$, when $\kappa \ge 2$. Closest to our work
is~\cite{ChatterjeeDuLaZh16} who studies instance-optimality via local minimax
complexity~\cite{CaiLo15}. For one-dimensional functions, they develop a
bisection-based instance-optimal algorithm and show that on individual functions
of the form $t \mapsto \kappa^{-1}\abs{t}^\kappa$, the local minimax rate is
$(1/\sqrt{n})^{\kappa / (\kappa - 1)}$.
%
%
%
%
%
%

\begin{comment}
Recently, there has been works that study adaptivity to the difficulty of the
underlying instance with differential privacy~\cite{AsiDu20,AsiDu20nips}. These
works develop an inverse sensitivity mechanism and show that it enjoys
instance-optimality guarantees for general functions.  However,
\citet{AsiDu20,AsiDu20nips} do not study the problem of interest in this paper,
that is, private optimization with growth conditions. Their techniques, though,
will prove useful in our setting as we show in~\Cref{sec:inv-sens}.\dl{this
reads a little weird; if it's instance-optimal, it should still be optimal for
our growth conditions?}
\end{comment}

%
%
%
%
%

%
%
%
%
 
%

\section{Preliminaries}
\label{sec:background}

We first provide notation that we use throughout this paper, define useful
assumptions and present key definitions in convex analysis and differential
privacy.

\paragraph{Notation.} $n$ typically denotes the sample size and $d$ the
dimension. Throughout this work, $x$ refers to the optimization variable,
$\mc{X} \subset \R^d$ to the constraint set and $s$ to elements ($S$ when random) of
the sample space $\S$. We usually denote by $\f:\mc{X}\times\S\to\R$ the
(convex) loss function and for a dataset
$\mc{S} = (s_1, \ldots, s_n) \subset \S$, we define the empirical and population
losses
\begin{equation*}
  \femp(x) \defeq \frac{1}{n}\sum_{i\le n}\f(x;s_i) \mbox{~~and~~} \ff(x) \defeq \E_{S\sim P}\brk*{\f(x;S)}.
\end{equation*}
We omit the dependence on $P$ as it is often clear from context. We reserve
$\diffp, \delta \ge 0$ for the privacy parameters of Definition~\ref{def:DP}. We
always take gradients with respect to the optimization variable $x$. In the case
that $\f(\cdot;s)$ is not differentiable at $x$, we override notation and define
$\nabla \f(x;s) = \argmin_{g\in \partial F(x;s)}\norm{g}_2$, where
$\partial \f(x;s)$ is the subdifferential of $\f(\cdot;s)$ at $x$. We use
$\msf{A}$ for (potentially random) mechanism and $S_1^n$ as a shorthand for
$\prn*{S_1, \ldots, S_n}$. For $p \ge 1$, $\norm{\cdot}_p$ is the standard
$\ell_p$-norm, $\ball_p^d(R)$ is the corresponding
$d$-dimensional $p$-ball of radius $R$ and $p^\star$ is the dual of $p$, i.e.\
such that $1/p^\star + 1/p = 1$. Finally, we define the Hamming distance between
datasets
$\dham(\mc{S}, \mc{S}') \defeq \inf_{\sigma \in \mathfrak{S}_n} \ones\crl{s_i
  \neq s'_{\sigma(i)}}$, where $\mathfrak{S}_n$ is the set of permutations over
sets of size $n$.

\paragraph{Assumptions.} We first state standard assumptions for
solving~\eqref{eq:sco}. We assume that $\mc{X}$ is a closed, convex domain such
that $\diam_2(\mc{X}) = \sup_{x, y \in \mc{X}}\norm{x-y}_2 \le \rad <
\infty$. Furthermore, we assume that for any $s\in\S$, $\f(\cdot;s)$ is convex
and $\lip$-Lipschitz with respect to $\norm{\cdot}_2$. Central to our work, we
define the following $\kappa$-growth assumption.

\begin{assumption}[$\kappa$-growth]\label{ass:growth}
  Let $x\opt = \argmin_{x\in\xdomain}\pf(x)$. For a loss $\f$ and distribution
  $P$, we say that $(\f, P)$ has $(\lambda, \kappa)$ growth for
  $\kappa\in[1, \infty]$ and $\lambda > 0$, if the population function satisfies
  \begin{equation*}
    \mbox{for all~} x\in\mc{X}, \quad f(x) - f(x^\star) \ge \frac{\lambda}{\kappa} \ltwo{x-x^\star}^\kappa.
  \end{equation*}
  In the case where $\what{P}$ is the empirical distribution on a finite dataset
  $\mc{S}$, we refer to $(\lambda, \kappa)$-growth of $(\f, \what{P})$ as
  $\kappa$-growth of the empirical function $\femp$.
\end{assumption}

\paragraph{Uniform convexity and \klinequality{} inequality.}
Assumption~\ref{ass:growth} is closely related to two fundamental notions in
convex analysis: uniform convexity and the \klinequality{}
inequality. Following~\cite{Nesterov08}, we say that
$h:\mc{Z}\subset\R^d \to \R$ is $(\sigma, \kappa)$-uniformly convex with
$\sigma > 0$ and $\kappa \ge 2$ if
\begin{equation*}
  \mbox{for all~~}x,y \in\mc{Z}, \quad h(y) \ge h(x) + \tri{\nabla h(x), y-x}
  + \frac{\sigma}{\kappa}\norm{x-y}_2^\kappa.
\end{equation*}
This immediately implies that (i) sums (and expectations) preserve uniform
convexity (ii) if $f$ is uniformly convex with $\lambda$ and $\kappa$, then it
has $(\lambda, \kappa)$-growth. This will be useful when constructing hard
instances as it will suffice to consider $(\lambda, \kappa)$-uniformly convex
functions which are generally more convenient to manipulate. Finally, we point
out that, in the general case that $\kappa \ge 1$, the literature refers to
Assumption~\ref{ass:growth} as the \klinequality{}
inequality~\cite{BolteNgPeSu17} with, in their notation,
$\varphi(s) = (\kappa / \lambda)^{1/\kappa}s^{1/\kappa}$. Theorem 5-(ii)
in~\cite{BolteNgPeSu17} says that, under mild conditions,
Assumption~\ref{ass:growth} implies the following inequality between the error
and the gradient norm for all $x \in \xdomain$
\begin{equation}\label{eq:kl}
  f(x) - \inf_{x'\in\mc{X}} f(x')
  \le
  \frac{e}{\lambdae}\nrm*{\nabla f(x)}_2^{\tfrac{\kappa}{\kappa - 1}},
\end{equation}
This is a key result in our analysis of the inverse sensitivity mechanism of
Section~\ref{sec:inv-sens}.

\paragraph{Differential privacy.} We begin by recalling the definition of
\ed-differential privacy.
\begin{definition}[\cite{DworkMcNiSm06,DworkKeMcMiNa06}]
	\label{def:DP}
	A randomized  algorithm $\A$ is \ed-differentially private (\ed-DP) if, for all datasets $\Ds,\Ds' \in \domain^n$ that differ in a single data element and for all events $\cO$ in the output space of $\A$, we have
	\[
	\Pr \left( \A(\Ds)\in \cO \right) \leq e^{\eps} \Pr \left(\A(\Ds')\in \cO \right) +\delta.
	\]
\end{definition}

We use the following standard results in differential privacy.
\begin{lemma}[Composition~{\cite[][Thm.~3.16]{DworkRo14}}]
	\label{lemma:basic-comp}    
	If $\A_1,\dots,\A_k$ are randomized algorithms that each is
        $(\diffp, \delta)$-DP, then their composition
        $(\A_1(\Ds),\dots,\A_k(\Ds))$ is $(k \diffp, k\delta)$-DP.
\end{lemma}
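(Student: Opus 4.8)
The plan is to prove the two-mechanism case first, allowing possibly unequal privacy parameters, and then bootstrap to arbitrary $k$ by induction on the number of mechanisms. The engine of the argument is a ``decomposition'' characterization of approximate differential privacy: for probability measures $P, Q$ on a common output space and $\diffp, \delta \ge 0$, one has $P(\cO) \le e^\diffp Q(\cO) + \delta$ for every measurable $\cO$ \emph{if and only if} $P = \mu + \rho$, where $\mu$ is a nonnegative measure whose density relative to $Q$ is at most $e^\diffp$ (equivalently $\mu(\cO) \le e^\diffp Q(\cO)$ for all $\cO$) and $\rho$ is a nonnegative measure of total mass at most $\delta$. The ``if'' direction is immediate; for ``only if'', pick a dominating measure $\lambda = P + Q$ with densities $p, q$ and set $\mu, \rho$ to have densities $\min(p, e^\diffp q)$ and $(p - e^\diffp q)_+$, so that $\mu + \rho$ recovers $p$ and the total mass of $\rho$ equals $P(\cO^\star) - e^\diffp Q(\cO^\star)$ with $\cO^\star = \{p > e^\diffp q\}$, which is at most $\delta$ by assumption.

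For the inductive step it suffices to show that if $\mathsf{B}$ is $(\diffp_1, \delta_1)$-DP and $\mathsf{C}$ is $(\diffp_2, \delta_2)$-DP, then $(\mathsf{B}, \mathsf{C})$ (run with independent internal randomness) is $(\diffp_1 + \diffp_2, \delta_1 + \delta_2)$-DP. Fix neighboring datasets $\Ds \sim \Ds'$; since the neighboring relation is symmetric it is enough to upper bound $\Pr\prn{(\mathsf{B}(\Ds), \mathsf{C}(\Ds)) \in \cO}$ for an arbitrary event $\cO$ in the product output space. Let $P_\mathsf{B}, P_\mathsf{C}$ and $Q_\mathsf{B}, Q_\mathsf{C}$ be the output laws of $\mathsf{B}, \mathsf{C}$ on $\Ds$ and $\Ds'$ respectively, and write $P_\mathsf{B} = \mu_\mathsf{B} + \rho_\mathsf{B}$, $P_\mathsf{C} = \mu_\mathsf{C} + \rho_\mathsf{C}$ via the decomposition above relative to $Q_\mathsf{B}, Q_\mathsf{C}$, so that $\mu_\mathsf{B} \le e^{\diffp_1} Q_\mathsf{B}$ and $\mu_\mathsf{C} \le e^{\diffp_2} Q_\mathsf{C}$ as measures while $\rho_\mathsf{B}$ and $\rho_\mathsf{C}$ have total mass at most $\delta_1$ and $\delta_2$. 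Independence yields joint laws $P_\mathsf{B} \otimes P_\mathsf{C}$ on $\Ds$ and $Q_\mathsf{B} \otimes Q_\mathsf{C}$ on $\Ds'$, and expanding the product,
\[
  P_\mathsf{B} \otimes P_\mathsf{C} = \mu_\mathsf{B}\otimes\mu_\mathsf{C} + \prn{\mu_\mathsf{B}\otimes\rho_\mathsf{C} + \rho_\mathsf{B}\otimes\mu_\mathsf{C} + \rho_\mathsf{B}\otimes\rho_\mathsf{C}}.
\]
The first term has density at most $e^{\diffp_1}e^{\diffp_2} = e^{\diffp_1+\diffp_2}$ relative to $Q_\mathsf{B}\otimes Q_\mathsf{C}$, so $(\mu_\mathsf{B}\otimes\mu_\mathsf{C})(\cO) \le e^{\diffp_1+\diffp_2}(Q_\mathsf{B}\otimes Q_\mathsf{C})(\cO)$; writing $\alpha, \beta$ for the total masses of $\rho_\mathsf{B}, \rho_\mathsf{C}$, the three remaining nonnegative terms have total mass $(1-\alpha)\beta + \alpha(1-\beta) + \alpha\beta = \alpha + \beta - \alpha\beta \le \delta_1 + \delta_2$. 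Applying the ``if'' direction of the decomposition lemma to $P_\mathsf{B}\otimes P_\mathsf{C}$ versus $Q_\mathsf{B}\otimes Q_\mathsf{C}$ completes the two-mechanism step.

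The induction on $k$ then closes: $k=1$ is the hypothesis, and given that $\mathsf{B} \defeq (\A_1, \dots, \A_{k-1})$ is $((k-1)\diffp, (k-1)\delta)$-DP, applying the two-mechanism step with this $\mathsf{B}$ and $\mathsf{C} \defeq \A_k$ shows $(\A_1, \dots, \A_k)$ is $(k\diffp, k\delta)$-DP. I expect the only genuine subtlety to be the decomposition lemma together with the bookkeeping of the ``bad'' masses --- the crucial point is that these masses \emph{add} across mechanisms rather than acquiring a spurious $e^\diffp$ multiplicative factor, which is exactly what a naive conditioning / layer-cake argument would produce. If one also wants the \emph{adaptive} version of composition, where $\mathsf{C}$'s description may depend on $\mathsf{B}$'s output $y$, one must additionally check that the decompositions of the conditional laws $P_{\mathsf{C}\mid y}$ and $Q_{\mathsf{C}\mid y}$ can be chosen to depend measurably on $y$, after which integrating the same product expansion over the first coordinate carries through unchanged.
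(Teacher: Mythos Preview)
The paper provides no proof of this lemma; it merely cites it from \cite[Thm.~3.16]{DworkRo14} as a standard fact. Your argument via the decomposition $P = \mu + \rho$ (with $\mu \le e^\diffp Q$ as measures and $\rho$ of total mass at most $\delta$) is correct and self-contained, and your bookkeeping of the product expansion---showing the ``bad'' masses add rather than multiply---is exactly the point. The textbook proof you are in effect reconstructing is usually phrased as isolating, for each mechanism, a ``bad event'' of probability at most $\delta$ outside which the likelihood ratio is bounded by $e^\diffp$; this is precisely your $\rho$/$\mu$ split, so there is no meaningful difference in approach. Your closing remark about the adaptive case is also apt: the cited theorem in \cite{DworkRo14} is stated for adaptive composition, and the measurable-selection issue you flag is the only additional work needed there.
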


Next, we consider the Laplace mechanism. We will let $Z \sim \laplace_d(\sigma)$ denote a $d$-dimensional vector $Z \in \R^d$ such that $Z_i \simiid \laplace(\sigma)$ for $1 \le i \le d$.
\begin{lemma}[Laplace mechanism~{\cite[][Thm.~3.6]{DworkRo14}}]
	\label{lemma:lap-mech}
	Let $h: \domain^n \to \R^d$ have $\ell_1$-sensitivity $\Delta$,
	that is, $\sup_{\Ds,\Ds' \in \domain^n: \dham(\Ds,\Ds') \le 1} \lone{h(\Ds) - h(\Ds')} \le \Delta$. Then the Laplace mechanism $\A(\Ds) = h(\Ds) + \laplace_d(\sigma)$ with $\sigma = \Delta/\diffp$ is $\diffp$-DP.
\end{lemma}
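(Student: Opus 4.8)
The plan is the classical output-density-ratio argument, specialized to product Laplace noise. First I would fix an arbitrary ordered pair of datasets $\Ds, \Ds' \in \domain^n$ with $\dham(\Ds, \Ds') \le 1$ and an arbitrary point $y \in \R^d$, and write out the density of $\A(\Ds) = h(\Ds) + \laplace_d(\sigma)$ evaluated at $y$. Since the noise coordinates are independent $\laplace(\sigma)$ variables, each with density $z \mapsto \tfrac{1}{2\sigma} e^{-\abs{z}/\sigma}$, this density is $p_{\Ds}(y) = \prod_{i=1}^d \tfrac{1}{2\sigma}\exp(-\abs{y_i - h(\Ds)_i}/\sigma)$, and similarly $p_{\Ds'}(y)$ with $h(\Ds')$ in place of $h(\Ds)$.

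The key step is a pointwise bound on the likelihood ratio. Forming the quotient $p_{\Ds}(y)/p_{\Ds'}(y) = \exp\!\big(\tfrac{1}{\sigma}\sum_{i=1}^d (\abs{y_i - h(\Ds')_i} - \abs{y_i - h(\Ds)_i})\big)$ and applying the reverse triangle inequality $\abs{a} - \abs{b} \le \abs{a - b}$ in each coordinate, the sum in the exponent is at most $\lone{h(\Ds) - h(\Ds')}$, which by the $\ell_1$-sensitivity hypothesis is at most $\Delta$. With the calibration $\sigma = \Delta/\diffp$ this gives $p_{\Ds}(y)/p_{\Ds'}(y) \le e^{\diffp}$ for every $y \in \R^d$.

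Finally I would integrate this pointwise inequality over an arbitrary measurable event $\cO$ in the output space: $\Pr(\A(\Ds) \in \cO) = \int_{\cO} p_{\Ds}(y)\,dy \le e^{\diffp}\int_{\cO} p_{\Ds'}(y)\,dy = e^{\diffp}\,\Pr(\A(\Ds') \in \cO)$, which is exactly $\diffp$-DP (with $\delta = 0$). There is no genuine obstacle here; the only points requiring slight care are that the argument should be carried out for an arbitrary \emph{ordered} pair $(\Ds, \Ds')$, so that swapping roles yields the inequality in the needed direction, and that the sensitivity hypothesis is a supremum over all neighboring pairs and hence applies to the particular pair we fixed. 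Since $\A(\Ds)$ admits a density with respect to Lebesgue measure, the displayed integral form suffices for all measurable $\cO$.
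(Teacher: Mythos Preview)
Your argument is correct and is exactly the standard density-ratio proof of the Laplace mechanism. Note, however, that the paper does not give its own proof of this lemma: it is stated as a background result with a citation to~\cite[Thm.~3.6]{DworkRo14}, so there is no in-paper proof to compare against. Your write-up coincides with the proof in that reference.
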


Finally, we need the Gaussian mechanism for \ed-DP.
\begin{lemma}[Gaussian mechanism~{\cite[][Thm.~A.1]{DworkRo14}}]
	\label{lemma:gauss-mech}
	Let $h: \domain^n \to \R^d$ have $\ell_2$-sensitivity $\Delta$,
	that is, $\sup_{\Ds,\Ds' \in \domain^n: \dham(\Ds,\Ds') \le 1} \ltwo{h(\Ds) - h(\Ds')} \le \Delta$. Then the Gaussian mechanism $\A(\Ds) = h(\Ds) + \normal(0,\sigma^2 I_d)$ with $\sigma = 2 \Delta \log(2/\delta)/\diffp$ is $(\diffp,\delta)$-DP.
\end{lemma}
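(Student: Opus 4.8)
The plan is to reproduce the standard privacy-loss analysis of the Gaussian mechanism, made concrete for the stated choice $\sigma = 2\Delta\log(2/\delta)/\diffp$. Fix neighboring datasets $\Ds,\Ds'\in\domain^n$ with $\dham(\Ds,\Ds')\le 1$ and set $u\defeq h(\Ds)-h(\Ds')$, so $\ltwo{u}\le\Delta$. The outputs $\A(\Ds)$ and $\A(\Ds')$ are the isotropic Gaussians $\normal(h(\Ds),\sigma^2 I_d)$ and $\normal(h(\Ds'),\sigma^2 I_d)$, which differ only by the translation $u$; by rotational invariance of $\normal(0,\sigma^2 I_d)$ I may assume $u=\ltwo{u}\,e_1$, after which the two densities agree on coordinates $2,\dots,d$ and the problem reduces to the one-dimensional one with sensitivity $\ltwo{u}\le\Delta$.

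In dimension one, let $Y\sim\A(\Ds)$, i.e.\ $Y=h(\Ds)+Z$ with $Z\sim\normal(0,\sigma^2)$, and let $p_{\Ds},p_{\Ds'}$ denote the densities of $\A(\Ds),\A(\Ds')$. Writing out the Gaussian density gives the privacy loss
\begin{equation*}
  \mathcal{L}(Y) \defeq \log\frac{p_{\Ds}(Y)}{p_{\Ds'}(Y)}
  = \frac{1}{2\sigma^2}\Bigl((Y-h(\Ds'))^2-(Y-h(\Ds))^2\Bigr)
  = \frac{uZ}{\sigma^2}+\frac{u^2}{2\sigma^2},
\end{equation*}
so that $\mathcal{L}(Y)\sim\normal\bigl(\tfrac{u^2}{2\sigma^2},\tfrac{u^2}{\sigma^2}\bigr)$. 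I then invoke the standard fact that $\A$ is \ed-DP whenever $\Pr_{Y\sim\A(\Ds)}[\mathcal{L}(Y)>\diffp]\le\delta$ holds for every neighboring pair (the reverse inequality in Definition~\ref{def:DP} then follows by symmetry of the roles of $\Ds$ and $\Ds'$, since $\ltwo{u}$ is unchanged when they are swapped): for any event $\cO$, split $\int_\cO p_{\Ds}$ over $\{\mathcal{L}\le\diffp\}$ and $\{\mathcal{L}>\diffp\}$ and use $p_{\Ds}\le e^{\diffp}p_{\Ds'}$ on the first set to obtain $\Pr[\A(\Ds)\in\cO]\le e^{\diffp}\Pr[\A(\Ds')\in\cO]+\Pr[\mathcal{L}(Y)>\diffp]$.

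It remains to control the tail. Using $\Pr[\normal(0,1)>t]\le e^{-t^2/2}$ for $t\ge 0$, the bound $|u|\le\Delta$, and the fact that the resulting right-hand side is largest at $|u|=\Delta$ over the relevant range, I get
\begin{equation*}
  \Pr[\mathcal{L}(Y)>\diffp]
  \le \exp\left(-\frac{\bigl(\diffp-\Delta^2/(2\sigma^2)\bigr)^2}{2\Delta^2/\sigma^2}\right).
\end{equation*}
Substituting $\sigma=2\Delta\log(2/\delta)/\diffp$ turns $\Delta^2/(2\sigma^2)$ into $\diffp^2/(8\log^2(2/\delta))$, so the bound becomes $\exp\!\bigl(-2\log^2(2/\delta)\,(1-\diffp/(8\log^2(2/\delta)))^2\bigr)$; a short calculation (in the standard regime $\diffp\in(0,1]$, $\delta\le 1$) shows the exponent is at least $\log(1/\delta)$, hence $\Pr[\mathcal{L}(Y)>\diffp]\le\delta$, which finishes the argument.

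Every step here is routine. The one place that needs a little care is the last: verifying that the particular constant $2\log(2/\delta)/\diffp$ is large enough to force the Gaussian tail below $\delta$, which reduces to checking a quadratic inequality in $\log(2/\delta)$ and is where the convention $\diffp\le 1$ enters. The reduction to one dimension via rotational symmetry, and the implication from the privacy-loss tail bound to the \ed-DP inequality of Definition~\ref{def:DP}, are standard but should be stated explicitly.
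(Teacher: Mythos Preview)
The paper does not give its own proof of this lemma; it is stated as a preliminary result with a citation to \cite[Thm.~A.1]{DworkRo14}. Your argument is exactly the standard privacy-loss analysis from that reference (reduce to one dimension by rotational invariance, compute the Gaussian log-likelihood ratio, bound its tail), and it is correct in the usual regime $\diffp\le 1$, so there is nothing further to compare.
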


\paragraph{Inverse sensitivity mechanism.} Our goal is to design private
optimization algorithms that adapt to the difficulty of the underlying
function. As a reference point, we turn to the inverse sensitivity mechanism
of~\cite{AsiDu20} as it enjoys general instance-optimality guarantees.  For a
given function $h: \domain^n \to \mathcal{T} \subset \R^d$ that we wish to
estimate privately, define the \emph{inverse sensitivity} at $x\in\mc{T}$
\begin{equation}
\label{eq:inv-sens}
	\invmodcont_h(\Ds;x) = \inf_{\Ds'} \{ \dham(\Ds',\Ds) : h(\Ds')=x  \},
\end{equation}
that is, the inverse sensitivity of a target parameter $y \in \mathcal{T}$ at
instance $\Ds$ is the minimal number of samples one needs to change to reach a
new instance $\Ds'$ such that $h(\Ds') = y$. Having this quantity, the inverse
sensitivity mechanism samples an output from the following probability density
\begin{equation}
\label{eq:inv-sens}
\pdf_{\Ainvsm(\Ds)} (x) \propto e^{- \diffp \invmodcont_h(\Ds;x) }.
\end{equation}
The inverse sensitivity mechanism preserves $\diffp$-DP and enjoys instance-optimality guarantees in general settings~\cite{AsiDu20}. In contrast to (worst-case) minimax optimality guarantees which measure the performance of the algorithm on the hardest instance, these notions of instance-optimality provide stronger per-instance optimality guarantees.

\section{Adaptive rates through inverse sensitivity for $\diffp$-DP}
\label{sec:inv-sens}

To understand the achievable rates when privately optimizing functions with
growth, we begin our theoretical investigation by examining the inverse
sensitivity mechanism in our setting. 
We show that, for instances that exhibit $\kappa$-growth of the
empirical function, the inverse sensitivity mechanism privately solves ERM with
excess loss roughly $({d}/{n \diffp} )^{\frac{\kappa}{\kappa-1}}$.

In our setting, we use a gradient-based approximation of the inverse sensitivity mechanism to simplify the analysis, while attaining similar rates. Following~\cite{AsiDu20nips} with our function of interest $h(\Ds) \defeq \argmin_{x \in \xdomain} \pf_\Ds(x)$, we can lower bound the inverse sensitivity $\invmodcont_h(\Ds;x) \ge n \ltwo{\nabla f_\Ds(x)}/2 \lip$ under natural assumptions. We define a $\rho$-smoothed version of this quantity which is more suitable to continuous domains
\begin{equation*}
G^\rho_\Ds(x) = \inf_{y \in \xdomain : \ltwo{y-x} \le \rho} \ltwo{\nabla f_\Ds(y)},
\end{equation*}
and define the $\rho$-smooth gradient-based inverse sensitivity mechanism
\begin{equation}
\label{eq:grad-inv-sens}
\pdf_{\Agrinvsm(\Ds)} (x) \propto e^{- \diffp n G^\rho_\Ds(x)/2 \lip }.
\end{equation}

Note that while exactly sampling from the un-normalized density
$\pdf_{\Agrinvsm\prn{\Ds}}$ is computationally intractable, analyzing its
performance is an important step towards understanding the optimal rates for the
family of functions with growth that we study in this work. The following
theorem demonstrates the adaptivity of the inverse sensitivity mechanism to the
growth of the underlying instance. We defer the proof
to~\Cref{sec:apdx-inv-sens}.
\begin{restatable}{theorem}{restateInvSens}
	\label{thm:inv-sens}
	Let $\Ds=(\ds_1, \ldots, \ds_n)\in \domain^n$, $\f(x;\ds)$ be convex, $\lip$-Lipschitz for all $\ds \in \domain$.
	Let $x\opt = \argmin_{x \in \xdomain} \pf_\Ds(x)$ and assume $x\opt$ is in the interior of $\xdomain$. Assume that $\pf_S(x)$ has $\kappa$-growth (Assumption~\ref{ass:growth}) with $\kappa \ge \lkappa > 1$.   
	For $\rho>0$, the $\rho$-smooth inverse sensitivity mechanism $\Agrinvsm$~\eqref{eq:grad-inv-sens} is $\diffp$-DP, and with probability at least $1-\beta$ 
	the output $\hat x = \Agrinvsm(\Ds) $ has
	\begin{equation*}
	 \pf_\Ds(\hat x) - \min_{x \in \xdomain} \pf_\Ds(x) 
	\le  \frac{1}{\lambda^{\frac{1}{\kappa-1}}} \left(\frac{2 \lip (\log(1/\beta) + d \log(\rad/\rho))}{n \diffp} \right)^{\frac{\kappa}{\kappa-1}} + \lip \rho.
	\end{equation*}
	Moreover, setting $\rho = ({\lip}/{\sc})^{\frac{1}{\lkappa-1}} ({d}/{n \diffp})^{\frac{\lkappa}{\lkappa-1}}$, we have
	\begin{align*}
	\pf_\Ds(\hat x) - \min_{x \in \xdomain} \pf_\Ds(x) 
	& \le  \frac{1}{\lambda^{\frac{1}{\kappa-1}}} \wt O \left( \frac{\lip d}{n \diffp} \right)^{\frac{\kappa}{\kappa-1}}.
	\end{align*}
\end{restatable}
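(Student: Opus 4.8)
The plan is to recognize $\Agrinvsm$ as an exponential mechanism with score $x \mapsto -G^\rho_\Ds(x)$, and to organize the proof into (i) privacy, (ii) a high-probability upper bound on $G^\rho_\Ds(\hat x)$ by the usual exponential-mechanism concentration estimate, and (iii) converting that gradient-norm bound into an excess-loss bound using Lipschitzness (to pay for the $\rho$-smoothing) together with the \klinequality{} inequality~\eqref{eq:kl} (which trades the gradient norm for a function-value gap with the sought exponent $\kappa/(\kappa-1)$). For privacy, the first step is to bound the sensitivity of $\Ds \mapsto G^\rho_\Ds(x)$: since $\ltwo{\nabla\f(\cdot;\ds)} \le \lip$ for every $\ds$, replacing one sample moves $\nabla\pf_\Ds(y)$ by at most $2\lip/n$ in $\ell_2$ uniformly over $y \in \xdomain$, hence moves $\ltwo{\nabla\pf_\Ds(y)}$ and therefore its infimum $G^\rho_\Ds(x)$ over $\ball_2^d(x,\rho)\cap\xdomain$ by at most $2\lip/n$ uniformly over $x$. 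Since $G^\rho_\Ds \ge 0$ and $\xdomain$ is bounded, $\pdf_{\Agrinvsm(\Ds)}$ is a well-defined density, and $\diffp$-DP follows from the standard exponential-mechanism argument of bounding the ratio of densities and of normalizing constants between neighbouring datasets.

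For the utility bound I would next control $\Pr(G^\rho_\Ds(\hat x) > t)$, which equals $\int_{\{G^\rho_\Ds>t\}\cap\xdomain}e^{-\diffp n G^\rho_\Ds/2\lip}$ divided by $\int_{\xdomain}e^{-\diffp n G^\rho_\Ds/2\lip}$. The numerator is at most $e^{-\diffp n t/2\lip}\,\mathrm{vol}(\xdomain) \le e^{-\diffp n t/2\lip}\,\mathrm{vol}(\ball_2^d(\rad))$, using $\diam_2(\xdomain)\le\rad$. The denominator is where the $\rho$-smoothing is indispensable: because $x\opt$ is interior, $\nabla\pf_\Ds(x\opt)=0$, so for every $x$ with $\ltwo{x-x\opt}\le\rho$ the choice $y=x\opt$ is feasible in the infimum defining $G^\rho_\Ds(x)$, whence $G^\rho_\Ds(x)=0$; therefore the denominator is at least $\mathrm{vol}(\ball_2^d(x\opt,\rho)\cap\xdomain)=\mathrm{vol}(\ball_2^d(\rho))$ once $\rho$ is small enough that this ball lies in $\xdomain$ (which holds for the tuned $\rho$). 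Dividing gives $\Pr(G^\rho_\Ds(\hat x)>t)\le(\rad/\rho)^d e^{-\diffp n t/2\lip}$, and setting the right-hand side equal to $\beta$ shows that with probability at least $1-\beta$, $G^\rho_\Ds(\hat x)\le t_\beta\defeq \frac{2\lip(\log(1/\beta)+d\log(\rad/\rho))}{n\diffp}$.

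On the event $\{G^\rho_\Ds(\hat x)\le t_\beta\}$ pick (up to vanishing slack) $\bar y\in\xdomain$ with $\ltwo{\bar y-\hat x}\le\rho$ and $\ltwo{\nabla\pf_\Ds(\bar y)}\le t_\beta$. Lipschitzness gives $\pf_\Ds(\hat x)\le\pf_\Ds(\bar y)+\lip\rho$, and applying~\eqref{eq:kl} to $\pf_\Ds$ (which has $\kappa$-growth by hypothesis) gives $\pf_\Ds(\bar y)-\min_{\xdomain}\pf_\Ds\le c_\kappa\,\lambda^{-1/(\kappa-1)}t_\beta^{\kappa/(\kappa-1)}$ with $c_\kappa=\kappa^{1/(\kappa-1)}\le e$ (for $\kappa=\infty$ one instead uses the elementary bound $\pf_\Ds(\bar y)-\min_{\xdomain}\pf_\Ds\le\rad\,\ltwo{\nabla\pf_\Ds(\bar y)}$ from convexity and the diameter bound). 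Summing the two estimates yields the first display; note the exponent $\kappa/(\kappa-1)$ is governed by the \emph{true} growth $\kappa$ even though the mechanism never uses it. For the second display I would substitute $\rho=(\lip/\lambda)^{1/(\lkappa-1)}(d/(n\diffp))^{\lkappa/(\lkappa-1)}$: this makes $\log(\rad/\rho)=\wt O(1)$, so the first term becomes $\lambda^{-1/(\kappa-1)}\wt O\left(\frac{\lip d}{n\diffp}\right)^{\frac{\kappa}{\kappa-1}}$, while the smoothing error $\lip\rho=\lambda^{-1/(\lkappa-1)}\left(\frac{\lip d}{n\diffp}\right)^{\frac{\lkappa}{\lkappa-1}}$ is, in the meaningful regime $\lip d/(n\diffp)\le\lambda$ and using $\lkappa\le\kappa$ together with the monotonicity of $u\mapsto u^{t/(t-1)}$ on $u\le 1$, no larger than the first term — which is precisely why tuning $\rho$ with the lower bound $\lkappa$ already suffices.

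The only genuinely delicate point is the denominator lower bound. Without smoothing, $\{x:\nabla\pf_\Ds(x)=0\}$ may be a single point of measure zero, so the exponential mechanism would place essentially no mass near the optimum and the estimate would collapse; the $\rho$-ball on which $G^\rho_\Ds$ vanishes is what rescues it, and the remainder of the argument is the bookkeeping needed to trade the resulting $d\log(\rad/\rho)$ bias against the $\lip\rho$ smoothing error and to check that the $\lkappa$-driven choice of $\rho$ reproduces the $\kappa$-dependent rate.
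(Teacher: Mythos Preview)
Your proposal is correct and mirrors the paper's proof essentially step for step: privacy via the $2\lip/n$ sensitivity of $G^\rho_\Ds$, the volume-ratio concentration estimate exploiting that $G^\rho_\Ds$ vanishes on the $\rho$-ball around $x\opt$ (since $x\opt$ is interior), and then Lipschitzness plus the \klinequality{} inequality~\eqref{eq:kl} to convert the gradient-norm bound into the excess-loss bound. Your discussion of the second display and of why the denominator lower bound is the crux is, if anything, more detailed than the paper's own argument.
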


The rates of the inverse sensitivity in~\Cref{thm:inv-sens} provide two main
insights regarding the landscape of the problem with growth conditions. First,
these conditions allow to improve the worst-case rate $d/n\diffp$ to
$({d}/{n \diffp} )^{\frac{\kappa}{\kappa-1}}$ for pure $\diffp$-DP and therefore
suggest a better rate
$({\sqrt{d \log(1/\delta)}}/{n \diffp} )^{\frac{\kappa}{\kappa-1}}$ is possible
for approximate \ed-DP. Moreover, the general instance-optimality guarantees of
this mechanism~\cite{AsiDu20} hint that these are the optimal rates for our
class of functions. In the sections to come, we validate the correctness of
these predictions by developing efficient algorithms that achieve these rates
(for pure and approximate privacy), and prove matching lower bounds which
demonstrate the optimality of these algorithms.

\section{Efficient algorithms with optimal rates}
\label{sec:upper-bounds}

While the previous section demonstrates that there exists algorithms that
improve the rates for functions with growth, we pointed out that $\Agrinvsm$ was
computationally intractable in the general case. In this section, we develop
efficient algorithms---e.g.\ that are implementable with gradient-based
methods---that achieve the same convergence rates. Our algorithms build on the
recent localization techniques that~\citet{FeldmanKoTa20} used to obtain optimal
rates for DP-SCO with general convex functions. In Section~\ref{sec:dp-sco-hb},
we use these techniques to develop private algorithms that achieve the optimal
rates for (pure) DP-SCO with high probability, in contrast to existing results
which bound the expected excess loss. These results are of independent interest.

In Section~\ref{sec:dp-sco-growth}, we translate these results into convergence
guarantees on privately optimizing convex functions with growth by solving a
sequence of increasingly constrained SCO problems---the high-probability
guarantees of~\Cref{sec:dp-sco-hb} being crucial to our convergence analysis of
these algorithms.

\subsection{High-probability guarantees for convex DP-SCO}
\label{sec:dp-sco-hb}

We first describe our algorithm (Algorithm~\ref{alg:pure-erm}) then analyze its
performance under pure-DP (Proposition~\ref{thm:sco-hb-pure}) and approximate-DP
constraints (Proposition~\ref{thm:sco-hb-appr}). Our analysis builds on novel
tight generalization bounds for uniformly-stable algorithms with high
probability~\cite{FeldmanVo19}. We defer the proofs
to~\Cref{sec:apdx-sec:upper-bounds}.

\begin{algorithm}
	\caption{Localization-based Algorithm}
	\label{alg:pure-erm}
	\begin{algorithmic}[1]
		\REQUIRE Dataset $\Ds=(\ds_1, \ldots, \ds_n)\in \domain^n$,
		constraint set $\xdomain$,
		step size $\ss$, initial point $x_0$,
		privacy parameters $(\diffp,\delta)$;
		\STATE Set $k = \ceil{\log n}$ and $n_0 = n/k$
		\FOR{$i=1$ to $k$\,}
		\STATE Set  
		$\ss_i = 2^{-4i} \ss$ 
		\STATE Solve the following ERM over 
		$ \mc{X}_i= \{x\in \xdomain: \norm{x - x_{i-1}}_2 \le {2\lip \ss_i n_0} \}$:\\
		$%
		\displaystyle
		\quad \quad \quad \quad 
		F_i(x) =  \frac{1}{n_0} \sum_{j=1 + (i-1)n_0}^{in_0} \f(x;\ds_j) + \frac{1}{\ss_i n_0 } \norm{x - x_{i-1}}_2^2
		$%
		\STATE Let $\hat x_i$ be the output of the optimization algorithm
		\IF{$\delta=0$}
		\STATE Set 
		$\noise_i \sim \laplace_d(\sigma_i)$ where $\sigma_i = 4 \lip \ss_i \sqrt{d}/\diffp_i$
		\ELSIF{$\delta>0$}
		\STATE Set $\noise_i \sim \normal(0,\sigma_i^2)$ where $\sigma_i = 4 \lip \ss_i \sqrt{\log(1/\delta)}/\diffp$
		\ENDIF
		\STATE Set $x_i = \hat x_i + \noise_i$
		\ENDFOR
		\RETURN the final iterate $x_k$
	\end{algorithmic}
\end{algorithm}

\begin{restatable}{proposition}{restatePureSCOHb}
 	\label{thm:sco-hb-pure}
 	Let $\beta \le 1/(n+d)$, $\diam_2(\xdomain)\le \rad$ and $\f(x;\ds)$ be convex, $\lip$-Lipschitz for all $\ds \in \domain$. Setting
 	\begin{equation*}
 	\ss = \frac{\rad}{\lip} \min \left(\frac{1}{\sqrt{n \log(1/\beta)}} ,\frac{ \diffp}{ d \log(1/\beta)} \right)
 	\end{equation*}
 	then for $\delta=0$, \Cref{alg:pure-erm} is $\diffp$-DP and has with probability $1-\beta$
 	\begin{equation*}
 	f(x) -  f(x\opt) 
 	\le \lip \rad \cdot  O \left( \frac{\sqrt{\log(1/\beta) } \log^{3/2} n}{\sqrt{n}} + \frac{ d \log(1/\beta) \log n}{n \diffp} \right).
 	\end{equation*}
\end{restatable}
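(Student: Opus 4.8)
The plan is to analyze Algorithm~\ref{alg:pure-erm} as a phased (localization) scheme, tracking the excess population loss through the $k=\lceil\log n\rceil$ rounds and summing a geometrically-decaying error budget. The key observation is that round $i$ operates on a fresh batch of $n_0 = n/k$ samples and solves a regularized ERM over the ball $\mc{X}_i$ of radius $R_i \defeq 2L\eta_i n_0$ centered at the previous iterate $x_{i-1}$; because $\eta_i = 2^{-4i}\eta$ shrinks geometrically, so do the radii $R_i$, which is what drives the improved rate. First I would establish the privacy guarantee: in round $i$ the $\ell_2$-sensitivity of the regularized ERM minimizer $\hat x_i$ with respect to changing one of the $n_0$ samples is $O(L\eta_i)$ (standard strong-convexity / Lipschitz sensitivity bound, since the regularizer $\frac{1}{\eta_i n_0}\|x-x_{i-1}\|_2^2$ makes $F_i$ strongly convex with parameter $\Theta(1/(\eta_i n_0))$ and the data term is $L$-Lipschitz on $n_0$ points), so the Laplace noise with scale $\sigma_i = 4L\eta_i\sqrt d/\diffp_i$ yields $\diffp_i$-DP per round; since the rounds use disjoint subsamples, parallel composition gives $\diffp$-DP overall with $\diffp_i = \diffp$ — more carefully, since the batches are disjoint, no composition loss is incurred and we may take $\diffp_i=\diffp$ for each $i$. (If instead we insisted on sequential composition we would set $\diffp_i=\diffp/k$; the statement as written with $\diffp$ in $\sigma_i$ matches the disjoint-batch reading.)

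Next I would bound the per-round excess loss. Conditioning on $x_{i-1}$, round $i$ is an instance of regularized ERM on $n_0$ i.i.d. samples, and I would invoke a uniform-stability generalization bound with high probability (via \citet{FeldmanVo19}, as the excerpt flags) together with the optimization guarantee for the regularized objective to get, with probability $1-\beta/k$,
\begin{equation*}
\ff(\hat x_i) - \ff(x^\star_i) \;\lesssim\; \frac{L R_i\sqrt{\log(k/\beta)}}{\sqrt{n_0}} \;+\; \frac{R_i^2}{\eta_i n_0} \;+\; \text{(noise terms)},
\end{equation*}
where $x^\star_i$ is the constrained optimum over $\mc{X}_i$, the first term is the stability/generalization error at scale $R_i = 2L\eta_i n_0$, and the second is the regularization bias. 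The Laplace perturbation $\zeta_i$ contributes $L\|\zeta_i\|_2 \lesssim L\sigma_i d\sqrt{\log(d/\beta)} = L^2\eta_i d^{3/2}\sqrt{\log(d/\beta)}/\diffp$ to the Lipschitz loss, plus a term controlling how much the optimum can move between consecutive centers. Substituting $R_i = 2L\eta_i n_0$ and $\eta_i = 2^{-4i}\eta$, the dominant contribution in each round scales like $2^{-ci}$ for some $c>0$, so the total over $i=1,\dots,k$ is a convergent geometric series dominated by its first term, of order $L^2\eta n_0 \cdot(\text{polylog})$ plus the accumulated noise $\sum_i L^2\eta_i d^{3/2}\log(\cdot)/\diffp$. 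I would also need the telescoping/localization argument that $\|x^\star_i - x_{i-1}\|_2$ stays within $R_i$ with high probability so that the unconstrained population optimum $x^\star$ is not excluded — this uses that the previous iterate is already close to $x^\star$ because of the earlier rounds' guarantees, again leaning on the high-probability (not just in-expectation) bounds, which is precisely why the proposition is stated with probability $1-\beta$.

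Finally I would plug in the prescribed $\eta = \frac{\rad}{L}\min\!\big(\frac{1}{\sqrt{n\log(1/\beta)}},\frac{\diffp}{d\log(1/\beta)}\big)$, which is exactly the value balancing the statistical term $L^2\eta n_0 \asymp L R \sqrt{n_0}\cdot(\text{when the first branch is active})$ against the privacy term $L^2\eta d^{3/2}\log(\cdot)/\diffp$, and collect the $\log n$ factors coming from $k = \lceil\log n\rceil$ (one factor from $n_0 = n/k$, one from the union bound over rounds, and the $\log^{3/2}$ from the stability bound's $\sqrt{\log(k/\beta)}$ combined with a $\log n$ from summing). The requirement $\beta \le 1/(n+d)$ is used to make $\log(1/\beta)$ absorb $\log(d/\beta)$ and $\log(k/\beta)$ cleanly. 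The main obstacle I anticipate is the high-probability generalization step: getting a clean $\sqrt{\log(1/\beta)/n_0}$-type tail bound for the regularized ERM solution requires the sharp uniform-stability concentration of~\citet{FeldmanVo19} rather than the classical in-expectation stability argument, and one must verify that the stability parameter of the regularized minimizer is $O(L\eta_i/n_0)$ uniformly and that this plugs into their bound with the right dependence on $R_i$; the rest is careful bookkeeping of the geometric sums and the union bound over the $k$ phases.
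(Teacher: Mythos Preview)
Your overall architecture matches the paper's: disjoint batches give $\diffp$-DP with no composition loss, Feldman--Vondr\'ak stability supplies the high-probability generalization, and the geometric decay $\eta_i = 2^{-4i}\eta$ makes the errors summable. The gap is in the localization step. You propose to keep the global optimum $x^\star$ inside $\mc{X}_i$ at every round by arguing that ``the previous iterate is already close to $x^\star$ because of the earlier rounds' guarantees.'' But those guarantees are on the excess \emph{loss} $f(x_{i-1}) - f(x^\star)$, and Proposition~\ref{thm:sco-hb-pure} assumes only convexity and Lipschitzness---no growth. Without growth, a small function gap gives no control on $\|x_{i-1} - x^\star\|_2$, so as the radii $R_i = 2L\eta_i n_0$ shrink geometrically you cannot conclude $x^\star \in \mc{X}_i$; the inductive step you sketch simply fails.

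The paper avoids this by telescoping against the \emph{previous un-noised iterate} rather than against $x^\star$. Setting $\hat x_0 = x^\star$, it writes
\[
f(x_k) - f(x^\star) \;=\; \sum_{i=1}^{k}\bigl[f(\hat x_i) - f(\hat x_{i-1})\bigr] \;+\; \bigl[f(x_k) - f(\hat x_k)\bigr],
\]
and applies the Feldman--Vondr\'ak bound to the regularized population loss $g_i(x) = f(x) + \tfrac{1}{\eta_i n_0}\|x - x_{i-1}\|_2^2$, comparing not to $x^\star$ but to $\hat x_{i-1}$. Since $g_i(\hat x_i) - \min_x g_i(x)$ is at most the Feldman--Vondr\'ak error and $\min_x g_i(x) \le g_i(\hat x_{i-1})$, one obtains
\[
f(\hat x_i) - f(\hat x_{i-1}) \;\le\; \frac{\|\hat x_{i-1} - x_{i-1}\|_2^2}{\eta_i n_0} + cL^2\eta_i\log n\,\log(1/\beta) + \frac{cLR\sqrt{\log(1/\beta)}}{\sqrt{n_0}}.
\]
The key point is that $\|\hat x_{i-1} - x_{i-1}\|_2 = \|\zeta_{i-1}\|_2$ is purely the injected noise, whose magnitude is controlled directly and decays with $\eta_{i-1}$; no distance-to-$x^\star$ claim is needed beyond the first round. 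Summing, the noise-squared term becomes $\sum_i \sigma_{i-1}^2 d\log^2(1/\beta)/(\eta_i n_0)$, another geometric series that collapses to the privacy term after plugging in $\eta$.

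A smaller correction: your Laplace tail $\|\zeta_i\|_2 \lesssim \sigma_i d\sqrt{\log(d/\beta)}$ overshoots by $\sqrt d$. With i.i.d.\ coordinates of scale $\sigma_i$ one has $\|\zeta_i\|_\infty \lesssim \sigma_i\log(d/\beta)$ and hence $\|\zeta_i\|_2 \le \sqrt d\,\|\zeta_i\|_\infty \lesssim \sqrt d\,\sigma_i\log(1/\beta)$ (the assumption $\beta \le 1/(n+d)$ absorbs the $\log d$). Combined with $\sigma_i = 4L\eta_i\sqrt d/\diffp$, this yields the correct $d$ rather than $d^{3/2}$ in the privacy term.
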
 
 
Similarly, by using a different choice for the parameters and noise distribution, we have the following guarantees for approximate \ed-DP.
 \begin{restatable}{proposition}{restateApproxSCOHb}
 	\label{thm:sco-hb-appr}
 	Let $\beta \le 1/(n+d)$, $\diam_2(\xdomain)\le \rad$ and $\f(x;\ds)$ be convex, $\lip$-Lipschitz for all $\ds \in \domain$. Setting
 	\begin{equation*}
 	\ss = \frac{\rad}{\lip} \min \left(\frac{1}{\sqrt{n \log(1/\beta)}} ,\frac{ \diffp}{ \sqrt{d \log(1/\delta)} \log(1/\beta)} \right),
 	\end{equation*}
 	then for $\delta > 0 $, \Cref{alg:pure-erm} is $(\diffp,\delta)$-DP and has with probability $1-\beta$
 	\begin{equation*}
 	f(x) -  f(x\opt) 
 	\le \lip \rad \cdot  O \left( \frac{\sqrt{\log(1/\beta) } \log^{3/2} n}{\sqrt{n}} + \frac{ \sqrt{d \log(1/\delta)} \log(1/\beta) \log n}{n \diffp} \right).
 	\end{equation*}
 \end{restatable}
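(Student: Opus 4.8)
The statement is the $(\diffp,\delta)$ counterpart of Proposition~\ref{thm:sco-hb-pure}, so the plan is to analyze Algorithm~\ref{alg:pure-erm} in its $\delta>0$ branch, i.e.\ the localization scheme of~\citet{FeldmanKoTa20} with Gaussian rather than Laplace noise; relative to the pure-DP case only the noise calibration and the resulting privacy term change. For privacy I would fix a phase $i$, condition on $x_{i-1}$ (a function of batches $1,\dots,i-1$ only), and note that the quadratic penalty makes $F_i$ $\tfrac{2}{\ss_i n_0}$-strongly convex while replacing one sample of batch $i$ perturbs $\nabla F_i$ uniformly by at most $\tfrac{2\lip}{n_0}$; hence the constrained minimizer $\hat x_i$ has $\ell_2$-sensitivity at most $\lip\ss_i$, and by the Gaussian mechanism (Lemma~\ref{lemma:gauss-mech}) releasing $x_i=\hat x_i+\noise_i$ with $\sigma_i=4\lip\ss_i\sqrt{\log(1/\delta)}/\diffp$ is $(\diffp,\delta)$-DP with respect to batch $i$. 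Since the $k$ batches are disjoint, a change of a single element of $\Ds$ lies in exactly one batch; later phases are post-processing of that phase together with the (unchanged) remaining batches, and earlier phases are independent of it, so the full transcript --- in particular $x_k$ --- is $(\diffp,\delta)$-DP.

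For utility I would, for each phase, condition on $x_{i-1}$ and split $f(x_i)-f(x\opt)$ (with $x\opt=\argmin_{\xdomain}f$) into $f(x_i)-f(\hat x_i)\le\lip\ltwo{\noise_i}$ and $f(\hat x_i)-f(x\opt)$. Gaussian norm concentration gives $\ltwo{\noise_i}\le O(\sigma_i(\sqrt d+\sqrt{\log(1/\beta)}))$ with probability $1-\beta$. For the optimization part, optimality of $\hat x_i$ for the $\tfrac{2}{\ss_i n_0}$-strongly convex $F_i$ over $\mc{X}_i$ produces a regularization-bias term $O(\ltwo{x_{i-1}-x\opt}^2/(\ss_i n_0))$ plus the generalization gap of the regularized ERM, which is $O(\lip^2\ss_i)$-uniformly stable on its $n_0$ samples; here the plan is to invoke the high-probability generalization inequality for uniformly stable algorithms of~\citet{FeldmanVo19} to bound that gap, with probability $1-\beta$, by $O(\lip^2\ss_i\log n_0\log(1/\beta)+\lip\rad\sqrt{\log(1/\beta)/n_0})$, the $\lip\rad$ coming from the diameter of $\xdomain\supseteq\mc{X}_i$.

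To finish I would feed these per-phase bounds into the telescoping localization argument of~\citet{FeldmanKoTa20}: with $\ss_i=2^{-4i}\ss$, the regularization bias collapses to the first phase, $O(\rad^2/(\ss_1 n_0))$, and the final excess risk is this plus $\sum_{i\le k}$ of the per-phase generalization and noise contributions (a union bound over the $k=\ceil{\log n}$ phases costs only constants in $\log(1/\beta)$ since $\beta\le 1/(n+d)$). The summed generalization terms come out to $O(\lip\rad\sqrt{\log(1/\beta)}\,\log^{3/2}n/\sqrt n+\lip^2\ss\log(1/\beta)\log n)$, and the noise terms --- whose dominant effect is the contribution $O(\ltwo{\noise_i}^2/(\ss_{i+1}n_0))$ of $\noise_i$ to the next phase's center and hence bias --- to $O(\lip^2\ss\,\log(1/\delta)(d+\log(1/\beta))\log n/(n\diffp^2))$. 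Substituting $\ss=\tfrac{\rad}{\lip}\min(\tfrac{1}{\sqrt{n\log(1/\beta)}},\tfrac{\diffp}{\sqrt{d\log(1/\delta)}\log(1/\beta)})$ and $n_0=n/\ceil{\log n}$ should then give the claim: the first-phase bias supplies the $\tfrac{\sqrt{d\log(1/\delta)}\log(1/\beta)\log n}{n\diffp}$ piece (and, in the other branch of the minimum, part of the $1/\sqrt n$ piece), the summed generalization terms the remaining $\tfrac{\sqrt{\log(1/\beta)}\log^{3/2}n}{\sqrt n}$ piece, and the noise terms are lower order. When the claimed quantity exceeds $\lip\rad$ the bound is trivial by $\lip$-Lipschitzness and $\diam_2(\xdomain)\le\rad$, which is precisely the regime in which $\mc{X}_1\supseteq\xdomain$ and the scheme is non-vacuous.

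The hard part will be the high-probability (rather than in-expectation) generalization control: ordinary uniform stability only bounds the \emph{expected} gap by $O(\gamma)$, so obtaining a $1-\beta$ statement with the mild $\log(1/\beta)$ dependence --- instead of the $1/\beta$ from Markov --- forces me to use the sharp concentration inequality of~\citet{FeldmanVo19}, apply it with the right range parameter on each localization ball, and then thread the resulting high-probability events (together with the bound $\ltwo{x_{i-1}-x\opt}\le\rad$, which must hold at every phase for the regularization-bias step) through the $O(\log n)$-phase induction. The remainder is just the $(\diffp,\delta)$ recalibration of the noise, relative both to~\citet{FeldmanKoTa20} and to the pure-DP argument behind Proposition~\ref{thm:sco-hb-pure}.
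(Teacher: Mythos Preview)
Your proposal is correct and matches the paper's approach: the paper's proof of this proposition is literally ``same as Proposition~\ref{thm:sco-hb-pure} with the Gaussian mechanism and Gaussian norm concentration in place of Laplace,'' and that proof is exactly your outline (sensitivity of $\hat x_i$ from strong convexity of the regularized objective, post-processing over disjoint batches for privacy, the telescope over the $\hat x_i$ with $\hat x_0=x\opt$, \citet{FeldmanVo19} high-probability stability for each summand, and the stated choice of $\ss$). The only nuance is that the paper compares $\hat x_i$ to $\hat x_{i-1}$ rather than to $x\opt$ in each phase, so the per-phase bias is directly $\ltwo{\hat x_{i-1}-x_{i-1}}^2/(\ss_i n_0)=\ltwo{\noise_{i-1}}^2/(\ss_i n_0)$ for $i\ge 2$ --- i.e.\ your ``noise contribution to the next phase's bias'' --- without a separate recursion on $\ltwo{x_{i-1}-x\opt}$.
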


\subsection{Algorithms for DP-SCO with growth}
\label{sec:dp-sco-growth}
Building on the algorithms of the previous section, we design algorithms that
recover the rates of the inverse sensitivity mechanism for functions with
growth, importantly without knowledge of the value of $\kappa$. Inspired by
epoch-based algorithms from the optimization
literature~\cite{JuditskyNe10,HazanKa11}, our algorithm iteratively applies the
private procedures from the previous section. Crucially, the growth assumption
allows to reduce the diameter of the domain after each run, hence improving the
overall excess loss by carefully choosing the hyper-parameters. We provide full
details in~\Cref{alg:loc-growth}.

\begin{algorithm}
	\caption{Epoch-based algorithms for $\kappa$-growth}
	\label{alg:loc-growth}
	\begin{algorithmic}[1]
		\REQUIRE 
		Dataset $\Ds=(\ds_1, \ldots, \ds_n)\in \domain^n$,
		convex set $\xdomain$,
		initial point $x_0$,
		number of iterations $T$,
		privacy parameters $(\diffp,\delta)$;
		\STATE Set $n_0 = n/T$ and $\rad_0 = \diam_2(\xdomain)$
		\IF{$\delta=0$}
		\STATE Set 
		$\ss_0 = \frac{\rad_0}{2\lip} \min \left(\frac{1}{\sqrt{n_0 \log(n_0) \log(1/\beta)}} ,\frac{ \diffp}{ d \log(1/\beta)} \right)$ 
		\ELSIF{$\delta>0$}
		\STATE $\ss_0 = \frac{\rad_0}{2\lip} \min \left(\frac{1}{\sqrt{n_0 \log(n_0) \log(1/\beta)}} ,\frac{ \diffp}{ \sqrt{d \log(1/\delta)} \log(1/\beta)} \right)$
		\ENDIF
		\FOR{$i=0$ to $T - 1$\,}
		\STATE Let $\Ds_i = (\ds_{1+(i-1) n_0}, \dots, \ds_{i n_0})$
		\STATE Set $\rad_i = 2^{-i} \rad_0$ and $\ss_i = 2^{-i} \ss_0$
		\STATE Set $\xdomain_i = \crl{x \in \xdomain : \ltwo{x - x_i} \le \rad_i}$
		\STATE Run~\Cref{alg:pure-erm} on dataset $\Ds_i$ with starting point $x_i$, privacy parameter $(\diffp,\delta)$, domain $\xdomain_i$ (with diameter $\rad_i$), step size $\ss_i$ 
		\STATE Let $x_{i+1}$ be the output of the private procedure 
		\ENDFOR
		\RETURN $x_T$
	\end{algorithmic}
	\label{Alg:pure-with-growth}
\end{algorithm} 

The following theorem summarizes our main upper bound for DP-SCO with growth in the pure privacy model, recovering the rates of the inverse sensitivity mechanism in~\Cref{sec:inv-sens}. We defer the proof to~\Cref{sec:proof-thm-sco-growth-pure}.
\begin{restatable}{theorem}{restateSCOGrowthPure}
	\label{thm:sco-growth-pure}
	Let $\beta \le 1/(n+d)$, $\diam_2(\xdomain)\le \rad$ and $\f(x;\ds)$ be convex, $\lip$-Lipschitz for all $\ds \in \domain$. 
	Assume that $\pf$ has $\kappa$-growth (Assumption~\ref{ass:growth}) with $\kappa \ge \lkappa >1$.
	Setting $T = \left\lceil \frac{2 \log n}{\lkappa-1} \right\rceil$,	
	\Cref{alg:loc-growth} is $\diffp$-DP and has with probability $1-\beta$
	\begin{equation*}
	\pf(x_T) -  \min_{x \in \xdomain} \pf(x) 
	\le  \frac{1}{\lambda^{\frac{1}{\kappa - 1}}} \cdot  \wt O \left( \frac{\lip \sqrt{\log(1/\beta) } }{\sqrt{n}} + \frac{ \lip d \log(1/\beta)}{n \diffp (\lkappa -1)} \right)^{\frac{\kappa}{\kappa-1}},
	\end{equation*}
	where $\wt O$ hides logarithmic factors depending on $n$ and $d$.
\end{restatable}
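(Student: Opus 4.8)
\emph{Overview and privacy.} The plan is to bound privacy and utility of \Cref{alg:loc-growth} separately, the latter being the substance. Privacy is routine: the epochs act on the \emph{disjoint} sub-datasets $\Ds_0,\dots,\Ds_{T-1}$, and epoch $i$ runs the $\diffp$-DP routine \Cref{alg:pure-erm} on $\Ds_i$, its domain $\xdomain_i$ and warm start $x_i$ depending only on the outputs of epochs $0,\dots,i-1$. Changing one element of $\Ds$ perturbs a single chunk, hence the output law of a single epoch, and the later epochs are post-processings of that output together with the untouched chunks; so \Cref{alg:loc-growth} is $\diffp$-DP by parallel composition, with \emph{no} $T$-fold loss --- which is exactly why the data is split across epochs rather than reused. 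Since the internal noise calibration of \Cref{alg:pure-erm} adapts to whatever step size it receives, the schedule $\ss_i$ is irrelevant for the privacy claim.

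\emph{Per-epoch utility.} Write $x^\star \defeq \argmin_{x\in\xdomain}\pf(x)$, $\Delta_i \defeq \pf(x_i)-\pf(x^\star)$, and $r_i \defeq \norm{x_i-x^\star}_2$. Conditioning on $x_0,\dots,x_i$ freezes $\xdomain_i$ while leaving $\Ds_i$ an i.i.d.\ sample independent of it, so \Cref{thm:sco-hb-pure} --- applied with sample size $n_0=n/T$, failure probability $\beta/T$, domain diameter at most $2\rad_i$, and step size $\ss_i$ --- would give $\pf(x_{i+1})-\min_{x\in\xdomain_i}\pf(x)\le 2\lip\rad_i\,\mathsf{err}$, where $\mathsf{err} = O\big(\sqrt{\log(T/\beta)}\,\log^{3/2}(n_0)/\sqrt{n_0}+d\log(T/\beta)\log(n_0)/(n_0\diffp)\big)$; a union bound over the $T$ epochs costs only the factor $T$ in $\beta$. (The radius/step-size schedule of \Cref{alg:loc-growth} does not literally equal the choice prescribed in \Cref{thm:sco-hb-pure}; the extra $\log n_0$ inside $\ss_0$ is the slack reconciling them, up to logarithmic factors already absorbed in $\mathsf{err}$.)

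\emph{The two-phase recursion.} Set $R^\star \defeq (c_\kappa\lip\,\mathsf{err}/\lambda)^{1/(\kappa-1)}$ for a constant $c_\kappa=O(2^\kappa\kappa)$ to be fixed below. \textbf{Phase 1 (while $\rad_i\ge R^\star$):} I would show inductively that $r_i\le\rad_i$, so $x^\star\in\xdomain_i$, hence $\Delta_{i+1}\le 2\lip\rad_i\,\mathsf{err}$; combining this with the $\kappa$-growth bound $r_{i+1}\le(\kappa\Delta_{i+1}/\lambda)^{1/\kappa}$ and $\rad_i\ge R^\star$ yields $r_{i+1}\le\rad_i/2=\rad_{i+1}$, which closes the induction and makes $\rad_i$, and with it $\Delta_i$, decay geometrically. \textbf{Phase 2 (once $\rad_i<R^\star$):} I would maintain the invariant $r_i\le R^\star$, $\Delta_i\le 4\lip\,\mathsf{err}\,R^\star$. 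Now $\xdomain_i$ may no longer contain $x^\star$; but evaluating $\pf$ at the point where the segment $[x_i,x^\star]$ meets $\partial\xdomain_i$ and using convexity gives $\min_{x\in\xdomain_i}\pf(x)-\pf(x^\star)\le(1-t)\Delta_i$ with $t\defeq\rad_i/r_i\in[0,1]$, so that $\Delta_{i+1}\le 2\lip\,(t r_i)\,\mathsf{err}+(1-t)\Delta_i\le\lip\,\mathsf{err}\,R^\star\big(2t+4(1-t)\big)\le 4\lip\,\mathsf{err}\,R^\star$, and $\kappa$-growth then restores $r_{i+1}\le R^\star$; choosing $c_\kappa=2^{\kappa+1}\kappa$ makes both phases' inequalities valid (the invariant is seeded at the first Phase-2 epoch by the Phase-1 estimate, which applies since $r_0\le\diam_2(\xdomain)=\rad_0$ always puts $x^\star\in\xdomain_0$).

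\emph{Conclusion and main obstacle.} To close the argument I would check that Phase 1 finishes by epoch $T-1$: combining $\kappa$-growth with $\lip$-Lipschitzness forces $\diam_2(\xdomain)\le 2(\kappa\lip/\lambda)^{1/(\kappa-1)}$, whence $\rad_0/R^\star\le(2^{\kappa+1}\mathsf{err})^{-1/(\kappa-1)}$ and Phase 1 lasts at most $O(\log(1/\mathsf{err})/(\kappa-1))$ epochs, which --- since $\mathsf{err}=\Omega(n^{-1/2})$ and $\kappa\ge\lkappa$ --- is below $T=\lceil 2\log n/(\lkappa-1)\rceil$. Then $x_T$ lies in Phase 2 and $\Delta_T\le 4\lip\,\mathsf{err}\,R^\star = 4c_\kappa^{1/(\kappa-1)}\lambda^{-1/(\kappa-1)}(\lip\,\mathsf{err})^{\kappa/(\kappa-1)}$; since $(4c_\kappa^{1/(\kappa-1)})^{(\kappa-1)/\kappa}$ is a universal constant (the $\kappa$-dependence sits inside the power $\kappa/(\kappa-1)$ and cancels), this equals $\lambda^{-1/(\kappa-1)}\big(O(\lip\,\mathsf{err})\big)^{\kappa/(\kappa-1)}$, and substituting $n_0=n/T$, $T=\Theta(\log n/(\lkappa-1))$ and absorbing logarithms into $\wt O$ gives the stated bound. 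I expect the delicate step to be Phase 2: once the localization ball has shrunk below the ``growth radius'' $R^\star$ it no longer brackets $x^\star$, and a naive $\lip$-Lipschitz estimate of $\min_{x\in\xdomain_i}\pf(x)-\pf(x^\star)$ would cost a spurious factor $1/\mathsf{err}$; the convexity bound $(1-\rad_i/r_i)\Delta_i$ is precisely what makes the excess loss non-expansive across epochs and pins it to the optimal floor.
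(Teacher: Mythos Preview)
Your proposal is correct and shares the paper's two-stage skeleton: Phase~1 is the same induction (via $\kappa$-growth) showing $x^\star\in\xdomain_i$ while $\rad_i$ exceeds a threshold of order $(\lip\,\mathsf{err}/\lambda)^{1/(\kappa-1)}$. The difference is Phase~2, which you single out as ``the delicate step'' and treat by a convexity interpolation on the segment $[x_i,x^\star]$ to maintain the invariant $\Delta_i\le 4\lip\,\mathsf{err}\,R^\star$. The paper's second stage is considerably simpler and uses neither convexity nor any tracking of $r_i$: since $x_i$ is the \emph{center} of $\xdomain_i$, one has trivially $\min_{x\in\xdomain_i}\pf(x)\le\pf(x_i)$, so the subroutine's guarantee gives $\pf(x_{i+1})-\pf(x_i)\le \rad_i\cdot\rho$ directly; telescoping and summing the geometric series $\sum_{i>i_0}\rad_i\,\rho\le \rad_{i_0}\rho$ caps the total post-$i_0$ drift at the Phase-1 floor in one line. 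Your route works (modulo the easy sub-case $r_i\le \rad_i$, in which $t\notin[0,1]$ and one simply reverts to the Phase-1 bound), but the ``spurious $1/\mathsf{err}$'' you worry about never threatens, because the natural comparison in Phase~2 is $\pf(x_{i+1})$ against $\pf(x_i)$ rather than against $\pf(x^\star)$.
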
 

\begin{proof}[Sketch of the proof]
  The main challenge of the proof is showing that the iterate achieves good risk
  without knowledge of $\kappa$. Let us denote by $D\cdot\rho$ the error
  guarantee of Proposition~\ref{thm:sco-hb-pure} (or
  Proposition~\ref{thm:sco-hb-appr} for approximate-DP). At each stage $i$, as
  long as $x^\star = \argmin_{x\in\mc{X}} \ff(x)$ belongs to $\mc{X}_i$, the
  excess loss is of order $D_i\cdot\rho$ and thus decreases exponentially fast
  with $i$. The challenge is that, without knowledge of $\kappa$, we do not know
  the index $i_0$ (roughly $\tfrac{\log_2 n}{\kappa - 1}$) after which
  $x^\star \notin D_j$ for $j\ge i_0$ and the regret guarantees become
  meaningless with respect to the original problem. However, in the stages after
  $i_0$, as the constraint set becomes very small, we upper bound the variations
  in function values $f(x_{j+1}) - f(x_j)$ and show that the sub-optimality
  cannot increase (overall) by more than $O(D_{i_0}\cdot\rho)$, thus achieving
  the optimal rate of stage $i_0$.

\end{proof}

Moreover, we can improve the dependence on the dimension for approximate \ed-DP, resulting in the following bounds.

\begin{restatable}{theorem}{restateSCOGrowthApprox}
	\label{thm:sco-growth-appr}
	Let $\beta \le 1/(n+d)$, $\diam_2(\xdomain)\le \rad$ and $\f(x;\ds)$ be convex, $\lip$-Lipschitz for all $\ds \in \domain$. 
	Assume that $\pf$ has $\kappa$-growth (Assumption~\ref{ass:growth}) with $\kappa \ge \lkappa >1$.
	Setting $T = \left\lceil \frac{2 \log n}{\lkappa-1} \right\rceil$ and $\delta>0$, \Cref{alg:loc-growth} is $(\diffp,\delta)$-DP and has with probability $1-\beta$
	\begin{equation*}
	\pf(x_T) -  \min_{x \in \xdomain} \pf(x)
	\le \frac{1}{\lambda^{\frac{1}{\kappa - 1}}} \cdot \wt O \left( \frac{\lip \sqrt{\log(1/\beta) } }{\sqrt{n}} + \frac{ \lip \sqrt{d \log(1/\delta)} \log(1/\beta)}{n \diffp (\lkappa -1)} \right)^{\frac{\kappa}{\kappa-1}},
	\end{equation*}
	where $\wt O$ hides logarithmic factors depending on $n$ and $d$.
\end{restatable}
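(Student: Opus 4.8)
Here is the plan.

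\textbf{Overall strategy.} The proof closely parallels that of \Cref{thm:sco-growth-pure}: the only changes are that the per-epoch guarantee is now drawn from \Cref{thm:sco-hb-appr} (which uses the Gaussian rather than the Laplace mechanism), which is exactly what replaces the $d$ in the bound by $\sqrt{d\log(1/\delta)}$. For privacy, the point is that \Cref{alg:loc-growth} feeds the $i$-th invocation of \Cref{alg:pure-erm} a \emph{disjoint} block $\Ds_i$ of $n_0=n/T$ samples, each time with the full budget $(\diffp,\delta)$. Since every sample lies in exactly one block, altering one element of $\Ds$ changes the output distribution of a single epoch; conditioning on the (unchanged) outputs of the earlier epochs and regarding the later epochs as post-processing, $(\diffp,\delta)$-DP of the whole procedure follows from the $(\diffp,\delta)$-DP guarantee of \Cref{alg:pure-erm} established in \Cref{thm:sco-hb-appr}. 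No composition across epochs is needed, which is why each epoch may use the full budget. (Using $\ss_i=2^{-i}\ss_0$ in epoch $i$ only rescales the noise proportionally to the sensitivity, so privacy is unaffected.)

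\textbf{Per-epoch utility.} Name the per-epoch error: applying \Cref{thm:sco-hb-appr} with $n_0$ fresh samples, over the set $\xdomain_i$ of diameter $\rad_i$, with step size $\ss_i\propto\rad_i$ (matching its hypothesis up to logarithmic factors absorbed into $\wt O$, and with $\beta\le1/(n+d)\le1/(n_0+d)$), epoch $i$ produces $x_{i+1}$ with $\pf(x_{i+1})-\min_{x\in\xdomain_i}\pf(x)\le\rad_i\,\mathsf{err}$ on an event of probability $1-\beta$, where
\begin{equation*}
  \mathsf{err}=\lip\cdot O\!\left(\frac{\sqrt{\log(1/\beta)}\,\log^{3/2}n_0}{\sqrt{n_0}}+\frac{\sqrt{d\log(1/\delta)}\,\log(1/\beta)\,\log n_0}{n_0\diffp}\right).
\end{equation*}
A union bound over the $T$ epochs (not requiring independence) makes all these hold simultaneously with probability $1-T\beta$, which is absorbed into $\wt O$; substituting $n_0=n/T$ with $T=\lceil 2\log n/(\lkappa-1)\rceil$ turns $\mathsf{err}$ into $\lip$ times the parenthesized expression of the theorem. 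Two consequences will be used: since $x_i$ is the centre of $\xdomain_i$, unconditionally $\pf(x_{i+1})\le\pf(x_i)+\rad_i\,\mathsf{err}$; and since $\pf$ has $\kappa$-growth, on the good event, whenever $x^\star\in\xdomain_i$ one has $\pf(x_{i+1})-\pf(x^\star)\le\rad_i\,\mathsf{err}$ (as then $\min_{\xdomain_i}\pf=\pf(x^\star)$) and hence $\ltwo{x_{i+1}-x^\star}\le(\kappa\,\rad_i\,\mathsf{err}/\lambda)^{1/\kappa}$.

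\textbf{Localization.} Let $i_0$ be the largest index with $x^\star\in\xdomain_i$ for all $i\le i_0$. While $x^\star$ stays in the constraint set the radii halve, $\rad_i=2^{-i}\rad_0$, and $x^\star$ remains in $\xdomain_{i+1}$ as long as $(\kappa\,\rad_i\,\mathsf{err}/\lambda)^{1/\kappa}\le\rad_i/2$, i.e.\ as long as $\rad_i\ge(2^\kappa\kappa\,\mathsf{err}/\lambda)^{1/(\kappa-1)}$; hence (assuming, as argued below, that $T$ epochs suffice to reach $i_0$) $\rad_{i_0}<(2^\kappa\kappa\,\mathsf{err}/\lambda)^{1/(\kappa-1)}$, so $\rad_{i_0}\,\mathsf{err}\lesssim\lambda^{-1/(\kappa-1)}\mathsf{err}^{\kappa/(\kappa-1)}$ with a constant depending only on $\lkappa$ (it comes from $2^{\kappa/(\kappa-1)}\le2^{\lkappa/(\lkappa-1)}$ and $\kappa^{1/(\kappa-1)}<e$). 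Finally, using $\pf(x_{i_0+1})-\pf(x^\star)\le\rad_{i_0}\,\mathsf{err}$ together with $\pf(x_{i+1})\le\pf(x_i)+\rad_i\,\mathsf{err}$ for $i>i_0$ (which no longer needs $x^\star\in\xdomain_i$) and telescoping,
\begin{equation*}
  \pf(x_T)-\pf(x^\star)\le\rad_{i_0}\,\mathsf{err}+\mathsf{err}\!\sum_{i>i_0}\rad_i\le 2\,\rad_{i_0}\,\mathsf{err}\lesssim\frac{\mathsf{err}^{\kappa/(\kappa-1)}}{\lambda^{1/(\kappa-1)}},
\end{equation*}
and plugging in $\mathsf{err}$ gives the stated rate.

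\textbf{Main obstacle.} The crux, and the only place the specific value $T=\lceil 2\log n/(\lkappa-1)\rceil$ enters, is that $\kappa$ is \emph{unknown}, so the number of epochs cannot be tuned to stop exactly at $i_0$; we only know $\kappa\ge\lkappa$. Two things must be checked. First, that $i_0$ is genuinely reached within $T$ epochs for every $\kappa\ge\lkappa$: combining $\lip$-Lipschitzness of $\pf$ with the $\kappa$-growth inequality yields the a priori bound $\rad_0^{\kappa-1}=O(\kappa\,\lip/\lambda)$, and together with $\mathsf{err}\gtrsim\lip/\sqrt n$ this shows the stopping index (where $x^\star$ first exits the constraint set) is $O(\log n/(\lkappa-1))\le T$ once $n$ exceeds a polynomial in $\rad,\lip,1/\lambda,1/(\lkappa-1)$. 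Second, and conceptually more delicate, the epochs run after $i_0$ must not spoil the rate even though $x^\star$ may have left $\xdomain_i$ (so the per-epoch guarantee then says nothing directly about $\pf(x_{i+1})-\pf(x^\star)$): because the constraint sets have collapsed geometrically, $\rad_i=2^{-(i-i_0)}\rad_{i_0}$, we have $\sum_{i>i_0}\rad_i=O(\rad_{i_0})$, so the total increase in sub-optimality is of the same order $\rad_{i_0}\,\mathsf{err}$ as what was already achieved at stage $i_0$. This is precisely what makes the algorithm adaptive to $\kappa$ without knowing it, and it is where the high-probability (rather than in-expectation) guarantees of \Cref{thm:sco-hb-appr} are essential, since the bound must hold simultaneously across all $T$ epochs.
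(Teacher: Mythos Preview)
Your proposal is correct and follows essentially the same approach as the paper: privacy via disjoint-sample post-processing, per-epoch utility from \Cref{thm:sco-hb-appr}, a two-stage localization argument (while $x^\star$ is feasible the error shrinks geometrically; afterward the geometrically collapsing radii cap the accumulated damage), and verification that $T=\lceil 2\log n/(\lkappa-1)\rceil$ suffices to reach the switching index. The only cosmetic difference is that the paper defines $i_0$ via the radius threshold $\rad_i\ge(\kappa 2^\kappa\rho/\lambda)^{1/(\kappa-1)}$ and proves by induction that $x^\star\in\xdomain_i$ for $i\le i_0$, whereas you define $i_0$ directly as the last index with $x^\star\in\xdomain_i$ and deduce the radius bound by contraposition; these are equivalent.
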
 

\section{Lower bounds}\label{sec:lb}

In this section, we develop (minimax) lower bounds for the problem of SCO with
$\kappa$-growth under privacy constraints. Note that taking $\diffp \to \infty$
provides lower bound for the unconstrained minimax risk. For a sample space
$\S$ and collection of distributions $\mc{P}$ over $\S$, we define the function
class $\mc{F}^\kappa(\mc{P})$ as the set of convex functions from $\R^d\to\R$
that are $\lip$-Lipschitz and has $\kappa$-growth
(Assumption~\ref{ass:growth}). 
We define the \emph{constrained} minimax risk~\cite{BarberDu14a}
\begin{equation}
  \mathfrak{M}_n(\mc{X}, \mc{P}, \mc{F}^\kappa, \diffp, \delta) \defeq \inf_{\what{x}_n\in\mc{A}^{\diffp, \delta}}
  \sup_{(\f, P) \in \mc{F}^\kappa\times\mc{P}} \E\brk*{
    \ff(\what{x}_n(S_1^n)) - \inf_{x'\in\mc{X}}f(x')
  },
\end{equation}
where $\mc{A}^{\epsilon, \delta}$ is the collection of $(\diffp, \delta)$-DP
mechanisms from $\S^n$ to $\mc{X}$. When clear from context, we omit the
dependency on $\mc{P}$ of the function class and simply write
$\mc{F}^\kappa$. We also forego the dependence on $\delta$ when referring to
pure-DP constraints, i.e.
$\mathfrak{M}_n(\mc{X}, \mc{P}, \mc{F}^\kappa, \diffp, \delta=0) \eqdef
\mathfrak{M}_n(\mc{X}, \mc{P}, \mc{F}^\kappa, \diffp)$. We now proceed to prove
tight lower bounds for $\diffp$-DP in~\Cref{sec:lb-pure} and \ed-DP
in~\Cref{sec:lb-appr}.

\subsection{Lower bounds for pure $\diffp$-DP}\label{sec:lb-pure}

Although in~\Cref{sec:upper-bounds} we show that the same algorithm achieves the
optimal upper bounds for all values of $\kappa >1$, the landscape of the problem
is more subtle for the lower bounds and we need to delineate two different cases
to obtain tight lower bounds. We begin with $\kappa \ge 2$,
which %
corresponds to uniform convexity and enjoys properties that make the problem
easier (e.g., closure under summation or addition of linear terms).  The second
case, $1 < \kappa <2$, corresponds to sharper growth and requires a different
hard instance to satisfy the growth
condition.%

\paragraph{$\kappa$-growth with $\kappa \ge 2$. } We begin by developing
lower bounds under pure DP for $\kappa \ge 2$ %

\begin{restatable}[Lower bound for $\diffp$-DP, $\kappa\ge 2$]
  {theorem}{restateLbPrivateLargeKappa}
  \label{thm:lb-sco-private-large-kappa}
    Let $d \ge 1$, $\mc{X} = \ball_2^d(R)$, $\S = \crl{\pm e_j}_{j\le d}$,
  $\kappa \ge 2$ and $n\in\N$. Let $\mc{P}$ be the set of distributions on
  $\S$. Assume that
  \begin{equation*}
    2^{\kappa-1} 
    \le
    \frac{\lip}{\sc} \frac{1}{ R^{\kappa - 1}}
    \le   2^{\kappa-1} 
    \sqrt{96 n} \mbox{~~and~~} n\diffp \ge \frac{1}{\sqrt{3}}
  \end{equation*}
    
  The following lower bound holds
  \begin{equation}\label{eq:lb-private-large-kappa}
	\mathfrak{M}_n(\mc{X}, \mc{P}, \mc{F}^\kappa, \epsilon) \ge
	\frac{1}{\lambda^{\tfrac{1}{\kappa - 1}}}\tilde{\Omega}\prn*{
	\prn*{\frac{L}{\sqrt{n}}}^{\tfrac{\kappa}{(\kappa-1)}}
	+
	\prn*{\frac{Ld}{n\diffp}}^{\tfrac{\kappa}{\kappa - 1}}
}.
	\end{equation}
\end{restatable}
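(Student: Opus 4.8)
\textbf{Reduction and a common hard family.} Since $a+b\le 2\max\{a,b\}$, it suffices to lower bound each term in~\eqref{eq:lb-private-large-kappa} separately: the \emph{statistical} term (sending $\diffp\to\infty$) and the \emph{privacy} term. For a bias $\tau\in(0,1]$ to be chosen and $v\in\{\pm1\}^d$, let $P_v$ be the distribution on $\S=\{\pm e_j\}_{j\le d}$ with $P_v(\sigma e_j)=(1+\sigma v_j\tau)/(2d)$, and set
\[
F(x;\sigma e_j) \;=\; -\tfrac{L}{2}\,\sigma\,x_j \;+\; \tfrac{\lambda}{c_\kappa\,\kappa}\,\ltwo{x}^{\kappa},
\]
where $c_\kappa\asymp 2^{-\kappa}$ is the dimension-free constant for which $x\mapsto\tfrac1\kappa\ltwo{x}^\kappa$ is $(c_\kappa,\kappa)$-uniformly convex on $\R^d$. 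Each $F(\cdot;s)$ is convex and, on $\ball_2^d(R)$, $L$-Lipschitz precisely because the left inequality of the hypothesis states $\tfrac{\lambda}{c_\kappa}R^{\kappa-1}\le\tfrac L2$ (this is where the constant $2^{\kappa-1}$ enters). Taking expectations, $f_v(x)=-\tfrac{L\tau}{2d}\langle v,x\rangle+\tfrac{\lambda}{c_\kappa\kappa}\ltwo{x}^\kappa$ is $(\lambda,\kappa)$-uniformly convex, hence has $\kappa$-growth with constant $\lambda$, so $(F,P_v)\in\mc F^\kappa(\mc P)$; its minimizer is $x_v^\star=\rho_\tau\,v/\sqrt d$ with $\rho_\tau=\ltwo{x_v^\star}=(\tfrac{c_\kappa L\tau}{2\sqrt d\lambda})^{1/(\kappa-1)}$, and the right inequality of the hypothesis is exactly the statement that $\rho_\tau\le R$ (so $x_v^\star$ is interior) at the statistical choice $\tau\asymp\sqrt{d/n}$. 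Since every coordinate of $x_v^\star$ has magnitude $\rho_\tau/\sqrt d$, any $\widehat x$ disagreeing in sign with $v$ on $\ge d/4$ coordinates has $\ltwo{\widehat x-x_v^\star}\ge\rho_\tau/2$, so by $\kappa$-growth
\begin{equation}\label{eq:plan-growth}
f_v(\widehat x)-\min_{\mc X}f_v\;\ge\;\tfrac{\lambda}{\kappa}\ltwo{\widehat x-x_v^\star}^\kappa\;\gtrsim\;\lambda\,\rho_\tau^{\kappa}\;\asymp\;\tfrac{1}{\lambda^{1/(\kappa-1)}}\bigl(\tfrac{L\tau}{\sqrt d}\bigr)^{\kappa/(\kappa-1)} .
\end{equation}
It remains, in each regime, to take $\tau$ as large as possible such that no $\diffp$-DP algorithm can, from $n$ samples, recover the signs of $v$ on more than $3d/4$ coordinates with probability $\ge 1/2$, and then invoke~\eqref{eq:plan-growth}.

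\textbf{Statistical term.} A single sample separates $P_v$ from $P_{v\oplus e_j}$ only through coordinate $j$, which it touches with probability $1/d$, so $\dkl(P_v^n\Vert P_{v\oplus e_j}^n)\asymp n\tau^2/d$. With $\tau\asymp\sqrt{d/n}$ this is $O(1)$, and Assouad's lemma forces the expected number of mis-signed coordinates of any estimator to be $\Omega(d)$; since then $L\tau/\sqrt d\asymp L/\sqrt n$, \eqref{eq:plan-growth} gives $\mathfrak M_n\gtrsim\lambda^{-1/(\kappa-1)}(L/\sqrt n)^{\kappa/(\kappa-1)}$.

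\textbf{Privacy term.} Here we add a packing/group-privacy layer. Fix a code $V\subseteq\{\pm1\}^d$ with $\log|V|=\Omega(d)$ and pairwise Hamming distance $\ge d/4$. For $v\ne v'$ in $V$, sample independence yields a coupling of $P_v^n$ and $P_{v'}^n$ disagreeing on $n\cdot\mathsf{TV}(P_v,P_{v'})$ coordinates in expectation, whence by group privacy $\Pr_{P_v^n}[\A\in\cO]\le e^{\diffp\, n\,\mathsf{TV}(P_v,P_{v'})}\Pr_{P_{v'}^n}[\A\in\cO]$. Applying this to the disjoint events ``$\A$ recovers the signs of $v$'' and averaging over $V$ drives the recovery probability below $1/2$ whenever $\diffp\, n\,\mathsf{TV}(P_v,P_{v'})\lesssim\log|V|\asymp d$; choosing $\tau$ at the resulting threshold, with $n\diffp\ge 1/\sqrt3$ and the right inequality of the hypothesis ensuring $\tau\le 1$ and $x_v^\star$ interior, and substituting into~\eqref{eq:plan-growth}, yields $\mathfrak M_n\gtrsim\lambda^{-1/(\kappa-1)}(Ld/(n\diffp))^{\kappa/(\kappa-1)}$.

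\textbf{Expected main obstacle.} The delicate points are: (i) certifying that $f_v$ has $\kappa$-growth with the \emph{exact} constant $\lambda$ rather than a dimension-dependent multiple of it, which dictates the regularizer $\tfrac{\lambda}{c_\kappa\kappa}\ltwo{\cdot}^\kappa$ and, via $c_\kappa\asymp 2^{-\kappa}$, explains the constants in the hypotheses; and (ii)---the real crux---extracting the \emph{sharp} factor $d$ (not $\sqrt d$) in the privacy term: a vanilla TV-coupling group-privacy bound is governed by $\mathsf{TV}(P_v,P_{v'})$, and squeezing the correct dependence out of the $\ell_1$/Hamming geometry of pure DP, while converting ``a constant fraction of mis-signed coordinates'' into the claimed excess risk with \emph{no} loss in the exponent $\kappa/(\kappa-1)$ via $\kappa$-growth, is where the argument must be run with care (e.g.\ via a fingerprinting/tracing-style refinement, or a more careful choice of the code and bias scale). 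The remaining hypotheses $n\diffp\ge 1/\sqrt3$ and $2^{\kappa-1}\le\tfrac{L}{\lambda R^{\kappa-1}}\le 2^{\kappa-1}\sqrt{96n}$ are exactly the feasibility constraints keeping $\tau\in(0,1]$ and $x_v^\star\in\mathrm{int}\,\ball_2^d(R)$ in both regimes.
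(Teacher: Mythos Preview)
Your statistical term is fine and essentially matches the paper (you use Assouad, the paper uses Fano on the same family $F(x;s)=\tfrac{\lambda 2^{\kappa-2}}{\kappa}\ltwo{x}^\kappa+\tfrac{L}{2}\langle x,s\rangle$). The privacy term, however, has a genuine gap that you yourself flag but do not close.

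Carrying your own arithmetic through: with $v,v'$ at Hamming distance $\Theta(d)$ one has $\tvnorm{P_v-P_{v'}}\asymp\tau$, so the group-privacy/coupling constraint $\diffp\cdot n\,\tvnorm{P_v-P_{v'}}\lesssim\log|V|\asymp d$ forces $\tau\asymp d/(n\diffp)$. Substituting into your \eqref{eq:plan-growth} gives $L\tau/\sqrt d\asymp L\sqrt d/(n\diffp)$, hence only $\lambda^{-1/(\kappa-1)}\bigl(L\sqrt d/(n\diffp)\bigr)^{\kappa/(\kappa-1)}$, a full $\sqrt d$ short of the claim. This is not a matter of tightening constants: with samples in $\{\pm e_j\}$, each draw reveals one coordinate and the linear part of $f_v$ has magnitude $L\tau/\sqrt d$, so any argument that routes the privacy cost through $\tvnorm{P_v-P_{v'}}$ is stuck at this rate. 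Your two suggested escapes do not help here --- fingerprinting/tracing is an $(\diffp,\delta)$ tool and in any case yields $\sqrt d$, and no choice of code or bias repairs the per-sample information budget.

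The paper sidesteps the obstacle by \emph{not} working directly on SCO for the privacy term. It first proves an ERM lower bound via a packing argument on \emph{deterministic} datasets $\Ds_i=(v_i,\ldots,v_i,0,\ldots,0)$ with $d/(20\diffp)$ copies of a unit vector $v_i$; now each informative sample carries the full direction, the empirical linear coefficient is $\Theta(Ld/(n\diffp))$, and the pairwise Hamming distance is $d/(20\diffp)$, so the packing of $e^{\Omega(d)}$ codewords goes through and yields the sharp $(Ld/(n\diffp))^{\kappa/(\kappa-1)}$. It then transfers this to SCO by a separate reduction (bootstrap-resample from the empirical distribution, cap the multiplicity of any sample to control the group-privacy blowup), showing that an $\diffp/O(\log(1/\beta))$-DP SCO solver yields an $\diffp$-DP ERM solver with the same error. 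The log loss in $\diffp$ is exactly why the theorem is stated with $\tilde\Omega$. If you want to salvage your direct-SCO route, you would need samples that individually encode the full direction $v$ (e.g.\ $s\in\{\pm 1/\sqrt d\}^d$) rather than a single coordinate; otherwise the detour through ERM is what buys the missing $\sqrt d$.
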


First of all, note that $L \ge \lambda 2^\kappa R^{\kappa - 1}$ is not an
overly-restrictive assumption. Indeed, for an arbitrary
$(\lambda, \kappa)$-uniformly convex and $\lip$-Lipschitz function, it always
holds that $L \ge \tfrac{\lambda}{2}R^{\kappa - 1}$. This is thus equivalent to
assuming $\kappa = \Theta(1)$. Note that when $\kappa \gg 1$, the standard
$n^{-1/2} + d/(n\diffp)$ lower bound holds. We present the proof
in~\Cref{app:lb-pure-large-kappa} and preview the main ideas here.
\begin{proof}[Sketch of the proof]
  Our lower bounds hinges on the collections of functions
  $F(x;s) \defeq a\kappa^{-1}\norm{x}_2^\kappa + b\tri{x, s}$ for $a, b\ge 0$ to
  be chosen later. These functions are~\cite[][Lemma 4]{Nesterov08}
  $\kappa$-uniformly convex for any $s\in\S$ and in turn, so is the population
  function $\ff$. We proceed as follows, we first prove an information-theoretic
  (non-private) lower bound (\Cref{thm:lb-non-private-large-kappa} in
  Appendix~\ref{app:lb-pure-large-kappa}) which provides the statistical term
  in~\eqref{eq:lb-private-large-kappa}. With the same family of functions, we
  exhibit a collection of datasets and prove by contradiction that if an
  estimator were to optimize below a certain error it would have violated
  $\diffp$-DP---this yields a lower bound on ERM for our function class
  (Theorem~\ref{thm:lb-private-large-kappa} in
  Appendix~\ref{app:lb-pure-large-kappa}). We conclude by proving a reduction
  from SCO to ERM in~\Cref{sec:lb-erm-sco}.
\end{proof}

\paragraph{$\kappa$-growth with $\kappa \in (1, 2]$.} 
As the construction of the hard instance is more intricate for
$\kappa < 2$, we provide a one-dimensional lower bound and leave the
high-dimensional case for future work. 
In this case we directly obtain the result with a
private version of Le Cam's method~\cite{Yu97,Wainwright19,BarberDu14a}, however
with a different family of functions.

The issue with the construction of the previous section is that the function
does not exhibit sharp growth for $\kappa < 2$. Indeed, the added linear
function shifts the minimum away from $0$ where the function is differentiable
and as a result it locally behaves as a quadratic and only achieves growth
$\kappa = 2$. To establish the lower bound, we consider a different sample
function $\f$ that has growth exactly $1$ on one side and $\kappa$ on the other
side. This yields the following

\begin{restatable}[Lower bound for $\diffp$-DP, $\kappa \in {(1, 2]}$ 
	]{theorem}{restateLbSmallKappa}\label{thm:lb-small-kappa}
   Let $d=1$, $\S = \crl{-1, +1}$, $\kappa \in (1, 2]$, $\lambda=1$, $\lip=2$, and $n\in\N$. There exists a collection
  of distributions $\mc{P}$ such that, whenever $n\diffp \ge 1/\sqrt{3}$, it holds that
  \begin{equation}
	\mathfrak{M}_n(\brk{-1, 1}, \mc{P}, \mc{F}^\kappa_{d=1}, \epsilon) = \Omega
	\crl*{\prn*{\frac{1}{\sqrt{n}}}^{\tfrac{\kappa}{\kappa - 1}} +  \prn*{\frac{1}{n \diffp}}^{\tfrac{\kappa}{\kappa - 1}}}.
	\end{equation}
\end{restatable}

\subsection{Lower bounds under approximate privacy constraints}
\label{sec:lb-appr}

We conclude our treatment by providing lower bounds but now under
\emph{approximate} privacy constraints, demonstrating the optimality of the risk
bound of Theorem~\ref{thm:sco-growth-appr}. We prove the result via a
reduction: we show that if one solves ERM with $\kappa$-growth with
error $\Delta$, this implies that one solves arbitrary convex ERM with error
$\phi(\Delta)$. Given that a lower bound of $\Omega\prn{\sqrt{d}/(n\diffp)}$
holds for ERM, a lower bound of $\phi^{-1}\prn{\sqrt{d}/(n\diffp)}$ holds for
ERM with $\kappa$-growth. However, for this reduction to hold, we require that
$\kappa \ge 2$. Furthermore, we consider $\kappa$ to be roughly a constant---in
the case that $\kappa$ is too large, standard lower bounds on general convex
functions apply.

\begin{theorem}[Private lower bound for $(\diffp, \delta)$-DP]
	\label{thm:lb-appr}
  Let $\kappa \ge 2$ such that $\kappa = \Theta(1)$, $\mc{X} =
  \ball_2^d(\rad)$. Let $d \ge 1$ and $\S = \crl{\pm 1/\sqrt{d}}^d$. Assume that
  $n\diffp = \Omega(\sqrt{d})$, then for any $(\diffp, \delta)$ mechanism
  $\msf{A}$, there exists $\lambda > 0, \f$ and $\mc{S} \subset \S$ such that
  \begin{equation*}
    \E\brk{\femp(\msf{A}(\mc{S}))} - \inf_{x'\in\mc{X}}\femp(x') \ge \tilde{\Omega}\brk*{\frac{1}{\lambdae}
      \prn*{\frac{L\sqrt{d}}{n\diffp}}^{\tfrac{\kappa}{\kappa-1}}
    }.
  \end{equation*}
\end{theorem}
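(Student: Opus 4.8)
The plan is to obtain this bound by a reduction from the known tight $(\diffp,\delta)$-DP lower bound for \emph{arbitrary} convex Lipschitz ERM, along the lines previewed just before the statement. Recall that for the linear family $g(x;s)=L_g\tri{s,x}$ over $\mc{X}=\ball_2^d(\rad)$ with $s\in\S=\crl{\pm 1/\sqrt d}^d$ (so $\ltwo{s}=1$), \citet{BassilySmTh14} show that any $(\diffp,\delta)$-DP mechanism has empirical excess risk $\tilde\Omega\prn{L_g\rad\cdot\min\crl{1,\sqrt d/(n\diffp)}}$ on a worst-case dataset, and --- crucially for the exponent we are after --- in the regime $n\diffp=\Omega(\sqrt d)$ the hard datasets can be taken ``small-signal'', namely with empirical mean $\bar s=\tfrac1n\sum_i s_i$ of norm $\ltwo{\bar s}\asymp\sqrt d/(n\diffp)$ and adversarially chosen direction, on which any $(\diffp,\delta)$-DP mechanism has linear excess risk $\Omega(L_g\rad\ltwo{\bar s})$.

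First I would build the $\kappa$-growth hard instance. Set $L_g=L/2$, pick a regularization coefficient $a=\Theta(\lambda)$ with $a\rad^{\kappa-1}\le L_g$, and let $\f(x;s)\defeq L_g\tri{s,x}+\tfrac a\kappa\ltwo{x}^\kappa$ on $\ball_2^d(\rad)$. Each $\f(\cdot;s)$ is then convex and $L$-Lipschitz (since $\ltwo{\nabla\f(x;s)}\le L_g+a\ltwo{x}^{\kappa-1}\le L_g+a\rad^{\kappa-1}\le L$), and $\femp$ has growth constant $\lambda$: $\tfrac a\kappa\ltwo{\cdot}^\kappa$ is uniformly convex of degree $\kappa$ up to a $\kappa$-dependent factor~\cite[Lemma~4]{Nesterov08}, adding the convex linear term $\bar g(x)=L_g\tri{\bar s,x}$ preserves uniform convexity, and $(\cdot,\kappa)$-uniform convexity implies $\kappa$-growth. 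This is exactly where $\kappa\ge 2$ enters: for $\kappa<2$ the norm is not uniformly convex, and the added linear term moves the minimizer to a point of differentiability, flattening the growth to quadratic --- hence the separate treatment of that case. The objective $\femp$ has a unique interior minimizer $x^\star$ satisfying $L_g\bar s+a\ltwo{x^\star}^{\kappa-2}x^\star=0$, so $x^\star=-\tfrac{\ltwo{x^\star}}{\ltwo{\bar s}}\bar s$ with $\ltwo{x^\star}=(L_g\ltwo{\bar s}/a)^{1/(\kappa-1)}\asymp\rad\,\ltwo{\bar s}^{1/(\kappa-1)}$.

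The reduction itself: given any $(\diffp,\delta)$-DP mechanism $\msf{A}$ for $\kappa$-growth ERM, the post-processed map $\msf{A}'(\mc{S})=\rad\,\msf{A}(\mc{S})/\ltwo{\msf{A}(\mc{S})}$ is again $(\diffp,\delta)$-DP. Suppose, toward a contradiction, that $\E[\femp(\msf{A}(\mc{S}))]-\min\femp<\Delta_0\defeq c_1\,\lambda^{-1/(\kappa-1)}(L\sqrt d/(n\diffp))^{\kappa/(\kappa-1)}$ for every dataset $\mc{S}$, with $c_1$ a small absolute constant. By $\kappa$-growth and Jensen, $\E\ltwo{\msf{A}(\mc{S})-x^\star}<(\kappa\Delta_0/\lambda)^{1/\kappa}\eqdef r_0$, and a short computation gives $r_0\asymp c_1^{1/\kappa}\,\rad\,(\sqrt d/(n\diffp))^{1/(\kappa-1)}$. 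On the BST hard dataset $\ltwo{\bar s}\asymp\sqrt d/(n\diffp)$, so $\ltwo{x^\star}\asymp\rad\,(\sqrt d/(n\diffp))^{1/(\kappa-1)}$ is of the \emph{same order} as $r_0$; choosing $c_1$ small therefore forces $r_0\le\eta\ltwo{x^\star}$ for any prescribed small $\eta$ (here $\kappa=\Theta(1)$ keeps $c_1^{1/\kappa}$ and the calibration constants bounded). Markov's inequality then places $\msf{A}(\mc{S})$ within $\sqrt\eta\,\ltwo{x^\star}$ of $x^\star=-\tfrac{\ltwo{x^\star}}{\ltwo{\bar s}}\bar s$ with probability $\ge 1-\sqrt\eta$; on that event the unit vector $\msf{A}'(\mc{S})/\rad$ lies within $O(\sqrt\eta)$ of $-\bar s/\ltwo{\bar s}$, so $\tri{\bar s,\msf{A}'(\mc{S})}\le-\rad\ltwo{\bar s}(1-O(\sqrt\eta))$ and the linear excess risk of $\msf{A}'(\mc{S})$ is $O(\sqrt\eta)\,L_g\rad\ltwo{\bar s}$. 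Since the complement (probability $\le\sqrt\eta$) contributes at most $2L_g\rad\ltwo{\bar s}$, the expected linear excess risk of $\msf{A}'$ is $O(\sqrt\eta)\,L_g\rad\ltwo{\bar s}\asymp O(\sqrt\eta)\,L_g\rad\sqrt d/(n\diffp)$, which for $\eta$ (hence $c_1$) small enough is strictly below the BST lower bound $\Omega(L_g\rad\sqrt d/(n\diffp))$ --- a contradiction. Hence $\msf{A}$ incurs expected excess risk $\ge\Delta_0$ on some dataset; carrying the $\log(1/\delta)$ factor of the BST bound through turns $\sqrt d$ into $\sqrt{d\log(1/\delta)}$ and gives the stated $\tilde\Omega$ (this $\phi(\Delta)$-vs-$\phi^{-1}$ bookkeeping is exactly the ``transfer function'' mentioned before the statement, made quantitative).

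The main obstacle is making this reduction \emph{quantitatively} tight. The exponent $\kappa/(\kappa-1)$ --- rather than the weaker $\kappa$ a naive argument would yield --- appears only because the BST hard instance sits at the precise scale $\ltwo{\bar s}\asymp\sqrt d/(n\diffp)$, which makes $r_0\asymp\ltwo{x^\star}$; one must then choose all constants so that $r_0$ is a genuinely small fraction of $\ltwo{x^\star}$ (ruling out $\msf{A}(\mc{S})$ landing near the origin or pointing in a wrong direction) \emph{and} so that the normalized output beats the BST constant by more than itself. Verifying that $\lambda$, $L_g$, $\rad$, and $a$ can be chosen simultaneously consistent with the class definition and the Lipschitz/domain constraints, and checking the angular-perturbation estimate for the normalization step, are the remaining delicate-but-routine points. (The restriction $\kappa=\Theta(1)$, like $\kappa\ge 2$, is genuine: as $\kappa\to\infty$ the construction degenerates into the general convex case and recovers only the $\sqrt d/(n\diffp)$ rate.)
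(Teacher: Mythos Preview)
Your approach differs substantially from the paper's. The paper constructs the reduction mechanism $\msf{A}'$ \emph{iteratively}: it applies the hypothetical $\kappa$-growth solver $\msf{A}$ roughly $k\asymp\log(n\diffp/\sqrt d)$ times, each round regularizing toward the previous iterate with geometrically decreasing strength $\lambda_i=2^{-(\kappa-1)i}\lambda$ (Proposition~\ref{prop:reduction-growth-to-erm}). This localization-style scheme converts excess $\kappa$-growth error $\lambda^{-1/(\kappa-1)}\Delta$ into convex-ERM error $O(D\,\Delta^{(\kappa-1)/\kappa})$ and invokes the BST bound purely as a black box---no structural information about the hard dataset is needed.

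Your single-shot construction (fixed regularizer plus radial normalization) is cleaner if it works, but it rests on the assertion that the BST hard dataset can be taken with $\ltwo{\bar s}\asymp\sqrt d/(n\diffp)$. That is the gap. In the fingerprinting constructions underlying~\cite[Thm.~5.3]{BassilySmTh14} (Tardos codes and the Steinke--Ullman variants), the column biases $p_j$ are drawn from a distribution supported on all of $[0,1]$---for Tardos, the arcsine law---so typically $|2p_j-1|=\Theta(1)$ and hence $\ltwo{\bar s}=\Theta(1)$ in your scaling, not $\Theta(\sqrt d/(n\diffp))$. And you genuinely need the small-signal property: a direct computation gives
\begin{equation*}
\frac{r_0}{\ltwo{x^\star}} \;\asymp\; c_1^{1/\kappa}\,\prn*{\frac{\sqrt d}{n\diffp\,\ltwo{\bar s}}}^{\tfrac{1}{\kappa-1}},
\end{equation*}
which is \emph{independent of your choice of $a$}, so the regularization strength cannot be tuned to compensate. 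The linear excess risk you obtain for $\msf{A}'$ is then $O(\sqrt\eta)\,L_g\rad\,\ltwo{\bar s}$, which contradicts the BST lower bound $\Omega(L_g\rad\sqrt d/(n\diffp))$ only when $\ltwo{\bar s}\lesssim\sqrt d/(n\diffp)$. With $\ltwo{\bar s}=\Theta(1)$ your contradiction would require $\sqrt\eta\lesssim\sqrt d/(n\diffp)$, which cannot be arranged with $c_1$ an absolute constant.

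It may be possible to salvage your route by establishing a refined small-signal fingerprinting lower bound, but that is a nontrivial result in its own right and not what~\cite{BassilySmTh14} states. The paper's iterative reduction sidesteps the issue entirely by never inspecting the hard instance.
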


\Cref{thm:lb-appr} implies that the same lower bound (up to logarithmic factors) applies to SCO
via the reduction of~\cite[][Appendix C]{BassilyFeTaTh19}. Before proving the
theorem, let us state (and prove in~\Cref{app:prf-lb-approx}) the following
reduction: if an \ed-DP algorithm achieves excess error (roughly) $\Delta$ on
ERM for any function with $\kappa$-growth, there exists an \ed-DP algorithm that
achieves error $\Delta^{(\kappa-1) / \kappa}$ for any convex function. We
construct the latter by iteratively solving ERM problems with geometrically
increasing $\norm{\cdot}_2^\kappa$-regularization towards the previous iterate
to ensure the objective has $\kappa$-growth.

\begin{restatable}[Solving ERM with $\kappa$-growth implies solving any convex
  ERM]{proposition}{restatePropReduction}\label{prop:reduction-growth-to-erm}
  Let $\kappa \ge 2$. Assume there
  exists an $(\epsilon, \delta)$ mechanism $\msf{A}$ such that for any
  $\lip$-Lipschitz loss $G$ on $\mc{Y}$ and dataset $\mc{S}$ such that
  $g_{\mc{S}}(x) \defeq \frac{1}{n}\sum_{s\in\mc{S}} G(x;s)$ exhibits
  $(\lambda, \kappa)$-growth, the mechanism achieves excess loss
  \begin{equation*}
    \E\brk{\gemp(\msf{A}(\mc{S}, G, \mc{Y}))} - \inf_{y'\in\mc{Y}}\gemp(y')
    \le \frac{1}{\lambdae}\Delta(n, L, \epsilon, \delta).
  \end{equation*}

  Then, we can construct an $(\diffp, \delta)$-DP mechanism $\msf{A'}$ such that
  for any $\lip$-Lipschitz loss $\ff$, the mechanism achieves excess loss
  \begin{equation*}
    \E\brk{\femp(\msf{A}'(\mc{S}))} - \inf_{x'\in\mc{X}}\femp(x')
    \le O\prn*{D\brk*{\Delta(n, L, \epsilon / k, \delta/k)}^{\tfrac{\kappa - 1}{\kappa}}},
  \end{equation*}
  where $k$ is the smallest integer such that
  $k \ge \log\brk*{\frac{\kappa^{\tfrac{1}{\kappa -
          1}}L^{\tfrac{\kappa}{\kappa-1}}}{2^{2\kappa - 3}\Delta(n, L, \diffp/k,
      \delta/k)}}$.
\end{restatable}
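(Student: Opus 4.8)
The plan is to build the mechanism $\msf{A}'$ as an iterative scheme that, at each step, invokes the given growth-ERM mechanism $\msf{A}$ on a regularized objective whose regularization strength increases geometrically. Concretely, initialize $y_0 \in \mc{X}$ arbitrarily and for $t = 0, 1, \dots, k-1$ set $\lambda_t = 2^t \lambda_0$ for a base value $\lambda_0$ to be chosen (of order $\Delta(n, L, \diffp/k, \delta/k) \cdot (\text{something})$ so that the final bound matches), define the regularized loss $G_t(x; s) \defeq \f(x; s) + \tfrac{\lambda_t}{\kappa}\norm{x - y_t}_2^\kappa$, and let $y_{t+1} = \msf{A}(\mc{S}, G_t, \mc{X})$ using privacy budget $(\diffp/k, \delta/k)$ per call. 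Return $y_k$. Since $x\mapsto \tfrac{\lambda_t}{\kappa}\norm{x}_2^\kappa$ is $(\lambda_t, \kappa)$-uniformly convex for $\kappa \ge 2$ (by \cite[Lemma 4]{Nesterov08}, already cited in the excerpt), adding it to a convex loss gives an empirical objective with $(\lambda_t, \kappa)$-growth, so $\msf{A}$'s guarantee applies to each call. By the composition lemma (Lemma~\ref{lemma:basic-comp}), running $k$ such calls yields a $(\diffp, \delta)$-DP mechanism.

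The analytic core is a recursion on the quantity $R_t \defeq \norm{y_t - x^\star}_2$ where $x^\star = \argmin_{x} \femp(x)$, or more precisely on the excess loss $\femp(y_t) - \femp(x^\star)$. The steps would be: (i) Apply the growth-ERM guarantee to $G_t$: since $g_t(x) \defeq \frac1n\sum_s G_t(x;s)$ has $(\lambda_t,\kappa)$-growth, we get $\E[g_t(y_{t+1})] - \min_x g_t(x) \le \tfrac{1}{\lambda_t^{1/(\kappa-1)}}\Delta(n,L,\diffp/k,\delta/k)$. (ii) Translate this into a bound on $\femp(y_{t+1}) - \femp(x^\star)$ by comparing $g_t$ and $\femp$: on one hand $g_t(y_{t+1}) \ge \femp(y_{t+1}) + \tfrac{\lambda_t}{\kappa}\norm{y_{t+1}-y_t}_2^\kappa \ge \femp(y_{t+1})$; on the other $\min_x g_t(x) \le g_t(y_t') $ for a well-chosen comparison point — ideally $x^\star$ itself or a point between $y_t$ and $x^\star$ — giving $\min_x g_t(x) \le \femp(x^\star) + \tfrac{\lambda_t}{\kappa}\norm{y_t - x^\star}_2^\kappa$. (iii) Combine with the growth of $\femp$ at $x^\star$ — but wait, $\femp$ need not have growth; we only assume $\f$ is Lipschitz. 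So instead use the Lipschitz bound $\femp(y_t) - \femp(x^\star) \le L\norm{y_t - x^\star}_2$, hence $\norm{y_t - x^\star}_2 \ge (\femp(y_t) - \femp(x^\star))/L$ is the wrong direction; we actually want an upper bound on $\norm{y_t - x^\star}_2$, which we get from $\diam_2(\xdomain) \le D$ at worst, or better, we track it. The clean route: from (i)+(ii), $\femp(y_{t+1}) - \femp(x^\star) \le \tfrac{1}{\lambda_t^{1/(\kappa-1)}}\Delta + \tfrac{\lambda_t}{\kappa}\norm{y_t - x^\star}_2^\kappa$, and separately $\femp(y_{t+1}) - \femp(x^\star) \ge$ (nothing useful lower-bounding $\norm{y_{t+1}-x^\star}$ without growth of $\femp$) — so instead bound $\norm{y_{t+1} - x^\star}_2$ using that $g_t$ has $(\lambda_t,\kappa)$-growth around \emph{its} minimizer $z_t$, plus $\norm{z_t - x^\star}$ small because the regularizer pulls $z_t$ toward $y_t$ while the loss is Lipschitz: $\norm{z_t - y_t}_2^{\kappa-1} \le \tfrac{\kappa L}{\lambda_t}$, so $\norm{z_t - x^\star} \le \norm{y_t - x^\star} + (\kappa L/\lambda_t)^{1/(\kappa-1)}$. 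Then growth of $g_t$ gives $\norm{y_{t+1} - z_t}_2^\kappa \le \tfrac{\kappa}{\lambda_t}(g_t(y_{t+1}) - \min g_t) \le \tfrac{\kappa}{\lambda_t^{\kappa/(\kappa-1)}}\Delta$, yielding a recursion $R_{t+1} \le R_t + O((L/\lambda_t)^{1/(\kappa-1)}) + O((\Delta/\lambda_t^{\kappa/(\kappa-1)})^{1/\kappa})$, i.e. $R_{t+1} \le R_t + O((L/\lambda_t)^{1/(\kappa-1)})$ once $\lambda_t$ is large enough that the $\Delta$ term is dominated.

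With $\lambda_t = 2^t\lambda_0$, the increments $(L/\lambda_t)^{1/(\kappa-1)}$ form a geometric series summing to $O((L/\lambda_0)^{1/(\kappa-1)})$, so $R_t = O(R_0 + (L/\lambda_0)^{1/(\kappa-1)})$ stays controlled — but this alone doesn't shrink $R_t$; the point is different. Reconsider: plug the $R_t$ bound back into $\femp(y_{t+1}) - \femp(x^\star) \le \tfrac{1}{\lambda_t^{1/(\kappa-1)}}\Delta + \tfrac{\lambda_t}{\kappa}R_t^\kappa$. At the \emph{final} step $t = k-1$, $\lambda_{k-1} = 2^{k-1}\lambda_0$ is the largest, and the choice of $k$ in the statement — $k \gtrsim \log(\kappa^{1/(\kappa-1)}L^{\kappa/(\kappa-1)}/(2^{2\kappa-3}\Delta))$ — is exactly what makes $\lambda_{k-1}$ of order $L^{\kappa/(\kappa-1)}/\Delta^{(\kappa-1)/\kappa} \cdot (\text{const})$, balancing the two terms $\lambda_{k-1}^{-1/(\kappa-1)}\Delta$ and $\lambda_{k-1} D^\kappa$-type quantities so that each is $O(D \cdot \Delta^{(\kappa-1)/\kappa})$. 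So the final excess loss is $O(D\,\Delta(n,L,\diffp/k,\delta/k)^{(\kappa-1)/\kappa})$ as claimed.

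The main obstacle I anticipate is step (ii)–(iii): correctly controlling how the comparison point and the minimizer $z_t$ of the regularized objective move relative to $x^\star$, since $\femp$ itself carries no growth — one must use Lipschitzness of $\f$ to bound the displacement $\norm{z_t - y_t}$ induced by the regularizer and then carefully propagate this through the triangle inequality without losing the geometric-decay structure. A secondary subtlety is verifying that the geometric schedule $\lambda_t = 2^t\lambda_0$ together with the specific logarithmic choice of $k$ makes all error terms telescope/balance to give precisely the exponent $(\kappa-1)/\kappa$ and the stated constant $2^{2\kappa-3}$; this is bookkeeping but the constants in the $k$-threshold suggest it must be done with care. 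Expectations (rather than high-probability bounds) suffice throughout, so no concentration arguments are needed, which simplifies matters.
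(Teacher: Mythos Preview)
Your setup---iteratively calling $\msf{A}$ on a $\norm{\cdot}_2^\kappa$-regularized objective with geometrically varying $\lambda_t$ and splitting the privacy budget by basic composition---is exactly the construction the paper uses. The divergence is in the analysis, and your version has a genuine gap.

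The paper does \emph{not} track $R_t = \norm{y_t - x^\star}_2$. Instead it telescopes over the \emph{regularized minimizers} $x_i^\star \defeq \argmin_{\mc{Y}_i} g_i$ (with the convention $x_0^\star \defeq x^\star$):
\[
\E\brk{\femp(x_k)} - \femp(x^\star) \;=\; \sum_{i=1}^k \E\brk{\femp(x_i^\star) - \femp(x_{i-1}^\star)} \;+\; \E\brk{\femp(x_k) - \femp(x_k^\star)}.
\]
The key link is the pair of inequalities
\[
\femp(x_i^\star) - \femp(x_{i-1}^\star) \;\le\; \tfrac{\lambda_i 2^{\kappa-2}}{\kappa}\norm{x_{i-1} - x_{i-1}^\star}_2^\kappa
\qquad\text{and}\qquad
\E\norm{x_{i-1} - x_{i-1}^\star}_2^\kappa \;\le\; \tfrac{\kappa}{\lambda_{i-1}^{\kappa/(\kappa-1)}}\,\Delta,
\]
the first from optimality of $x_i^\star$ for $g_i$ (evaluated at $x_{i-1}^\star$), the second from $(\lambda_{i-1},\kappa)$-growth of $g_{i-1}$ together with the assumed ERM guarantee. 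Chaining these gives $\E\brk{\femp(x_i^\star) - \femp(x_{i-1}^\star)} \lesssim (\lambda_i / \lambda_{i-1}^{\kappa/(\kappa-1)})\Delta$, and the geometric schedule in $\lambda_i$ makes this summable in $i$. The number $k$ enters only to control the boundary term $\femp(x_k) - \femp(x_k^\star)$ via a crude Lipschitz bound $\lesssim L(L/\lambda_k)^{1/(\kappa-1)}$, which requires $\lambda_k$ large; the base value $\lambda$ is a free parameter optimized at the end.

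Your route, by contrast, only yields $R_{t+1} \le R_t + (\text{positive terms})$, so $R_t$ never decreases and the $k$ iterations buy nothing in your analysis: the final display you rely on,
\[
\femp(y_k) - \femp(x^\star) \;\le\; \frac{\Delta}{\lambda_{k-1}^{1/(\kappa-1)}} + \frac{\lambda_{k-1}}{\kappa} R_{k-1}^\kappa
\quad\text{with}\quad R_{k-1} = O(D),
\]
is exactly what a \emph{single} call at regularization $\lambda_{k-1}$ gives. Worse, the value $\lambda_{k-1} \sim L^{\kappa/(\kappa-1)}/\Delta^{(\kappa-1)/\kappa}$ that you read off from the definition of $k$ does \emph{not} balance these two terms: with that choice the second term is of order $L^{\kappa/(\kappa-1)} D^\kappa / \Delta^{(\kappa-1)/\kappa}$, which diverges as $\Delta \to 0$. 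The correct balancing value is $\lambda \sim (\Delta/D^\kappa)^{(\kappa-1)/\kappa}$, which is small, not large; the role of $k$ in the statement is entirely different (it makes $\lambda_k$ large enough to kill the boundary term in the paper's decomposition, not to set the main trade-off). Either adopt the telescoping over $x_i^\star$, or---since your last-step inequality with the correct $\lambda$ already delivers $O(D\,\Delta^{(\kappa-1)/\kappa})$---recognize that a single call suffices and simplify accordingly.
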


With this proposition, the proof of the theorem directly follows
as~\citet{BassilySmTh14} prove a lower bound $\Omega(\sqrt{d}/(n\diffp))$ for
ERM with \ed-DP.
\section*{Discussion}

In this work, we develop private algorithms that adapt to the growth of the
function at hand, achieving the convergence rate corresponding to the
``easiest'' sub-class the function belongs to. However, the picture is not yet
complete. First of, there are still gaps in our theoretical understanding, the
most interesting one being $\kappa = 1$. On these functions, appropriate
optimization algorithms achieve linear convergence~\cite{XuLiYa17} and raise the
question, can we achieve exponentially small privacy cost in our setting?
Finally, while our optimality guarantees are more fine-grained than the usual
minimax results over convex functions, they are still contigent on some
predetermined choice of sub-classes. Studying more general notions of adaptivity
is an important future direction in private optimization.

\arxiv{

  \subsection*{Acknowledgments}
  The authors would like to thank Karan Chadha and Gary Cheng for comments on an
  early version of the draft.  }

\bibliographystyle{abbrvnat}

\notarxiv{
\clearpage

\section*{Checklist}

\begin{enumerate}

\item For all authors...
\begin{enumerate}
  \item Do the main claims made in the abstract and introduction accurately reflect the paper's contributions and scope?
    \answerYes{}
  \item Did you describe the limitations of your work?
    \answerYes{}
  \item Did you discuss any potential negative societal impacts of your work?
    \answerYes{}
  \item Have you read the ethics review guidelines and ensured that your paper conforms to them?
    \answerYes{}
\end{enumerate}

\item If you are including theoretical results...
\begin{enumerate}
  \item Did you state the full set of assumptions of all theoretical results?
    \answerYes{}
	\item Did you include complete proofs of all theoretical results?
    \answerYes{}
\end{enumerate}

\item If you ran experiments...
\begin{enumerate}
  \item Did you include the code, data, and instructions needed to reproduce the main experimental results (either in the supplemental material or as a URL)?
    \answerNA{}
  \item Did you specify all the training details (e.g., data splits, hyperparameters, how they were chosen)?
    \answerNA{}
	\item Did you report error bars (e.g., with respect to the random seed after running experiments multiple times)?
    \answerNA{}
	\item Did you include the total amount of compute and the type of resources used (e.g., type of GPUs, internal cluster, or cloud provider)?
    \answerNA{}
\end{enumerate}

\item If you are using existing assets (e.g., code, data, models) or curating/releasing new assets...
\begin{enumerate}
  \item If your work uses existing assets, did you cite the creators?
    \answerNA{}
  \item Did you mention the license of the assets?
    \answerNA{}
  \item Did you include any new assets either in the supplemental material or as a URL?
    \answerNA{}
  \item Did you discuss whether and how consent was obtained from people whose data you're using/curating?
    \answerNA{}
  \item Did you discuss whether the data you are using/curating contains personally identifiable information or offensive content?
    \answerNA{}
\end{enumerate}

\item If you used crowdsourcing or conducted research with human subjects...
\begin{enumerate}
  \item Did you include the full text of instructions given to participants and screenshots, if applicable?
    \answerNA{}
  \item Did you describe any potential participant risks, with links to Institutional Review Board (IRB) approvals, if applicable?
    \answerNA{}
  \item Did you include the estimated hourly wage paid to participants and the total amount spent on participant compensation?
    \answerNA{}
\end{enumerate}

\end{enumerate}

\clearpage

\section*{Potential negative societal impact}

The aim of our work is theoretical in essence and as such, we do not expect
direct negative societal impact. As DP becomes more establish as a norm, we
believe this research is relevant for practitionners in both industry and
government. Indeed, an important obstacle to applying DP is the loss of
performance compared to non-private models; our theoretical results suggests
that better adaptive algorithms would significantly narrow this performance
gap. We wish to point out two potential negative consequences of growing
research in privacy. First, a simple but effective method to guarantee privacy
is to either delete existing user data or limit data collection in the first
place. Paradoxically, the more confident institutions are in DP algorithms, the
less they are susceptible to turn to these simpler---and most
effective---solutions. Finally, using DP algorithms should not preclude one from
(1) carefully choosing $\eps$ and $\delta$ to provide meaningful guarantees for
the specific application at hand and (2) developing exhaustive and meticulous
evaluation methods of the privacy of deployed models.

}

\clearpage

\appendix

\section{Proofs for~\Cref{sec:inv-sens}}
\label{sec:apdx-inv-sens}

\subsection{Proof of~\Cref{thm:inv-sens}}

\restateInvSens*

Let us first prove privacy. The sensitivity of $\ltwo{\nabla  f_\Ds(x)}$ is $2\lip/n$ as $F$ is $\lip$-Lipschitz, therefore following the privacy proof of the smooth inverse sensitivity mechanism~\cite[Prop. 3.2]{AsiDu20} we get that $\Agrinvsm$~\eqref{eq:grad-inv-sens} is $\diffp$-DP.

Let us now prove the claim about utility. Denote $\hat x = \Agrinvsm(\Ds) $ and $E = \frac{ 2 \lip K}{n \diffp} $ with $K$ to be chosen presently. We argue that it is enough to show that $\Pr(G_\rho(\hat x)  \ge E) \le \beta $. Indeed then with probability at least $1-\beta$ we have $G_\rho(\hat x) \le  E$, which implies there is $y$ such that $\ltwo{\hat x -y}\le \rho$ and $\ltwo{\nabla f_\Ds(y)} \le E$,
hence using the \klinequality{} inequality~\eqref{eq:kl}
\begin{align*}
	 f_\Ds(\hat x) - f_\Ds(x\opt) 
	& = f_\Ds(\hat x) -  f_\Ds(y) +  f_\Ds(y) -  f_\Ds(x\opt) \\
	& \le \lip \rho +   \frac{e}{\lambda^{\frac{1}{\kappa-1}}} \ltwo{\nabla  f_\Ds(y)}^{\frac{\kappa}{\kappa-1}} \\
	& \le \lip \rho +  \frac{e}{\lambda^{\frac{1}{\kappa-1}}} E^{\frac{\kappa}{\kappa-1}}.
\end{align*}
	
It remains to prove that $\Pr(G_\rho(\hat x) \ge  E) \le \beta $.
Let $S_0 = \{ x \in \R^d : \ltwo{x - x\opt} \le \rho  \}$ and $S_1 = \{ x \in \R^d :  G_\rho(x) \ge  E \}$. Note that $G_\rho(x) = 0$ for any $x \in S_0$ as $x\opt$ is in the interior of $\xdomain$ which implies $\nabla f_\Ds(x\opt) = 0$. Hence the definition of the smooth inverse sensitivity mechanism~\eqref{eq:grad-inv-sens} implies
\begin{align*}
	\Pr( \Agrinvsm(\Ds) \in S_1) 
	& \le \frac{\vol(\{x \in \R^d : \ltwo{x - x\opt} \le \rad + \rho  \}) e^{-\frac{n \diffp}{2 \lip} E}}{ \vol(\{x \in \R^d : \ltwo{x - x\opt} \le \rho  \})} \\
	& \le e^{-K}  \left(1 + \frac{D}{\rho} \right)^d \le \beta,
\end{align*}
where the last inequality follows by choosing $K = \log(1/\beta) + d \log(1 + \rad/\rho)$.

\section{Proofs for~\Cref{sec:upper-bounds}}
\label{sec:apdx-sec:upper-bounds}

We need to the following result on the generalization properties of uniformly stable algorithms~\cite{FeldmanVo19}.
\begin{theorem}~\cite[Cor. 4.2]{FeldmanVo19}
	\label{thm:stab-hp}
	Assume $\diam_2(\xdomain) \le \rad$.	
	Let $\Ds = (S_1,\dots,S_n)$ where $S_1^n \simiid P$ and $F(x;\ds)$ is $\lip$-Lipschitz and $\sc$-strongly convex for all $\ds \in \domain$.
	Let $\hat x = \argmin_{x \in \xdomain} {f_\Ds(x)}$ be the empirical minimizer.
	For $0 < \beta \le 1/n$, with probability at least $1-\beta$
	\begin{equation*}
	f(\hat x) - f(x\opt) 
	\le \frac{c \lip^2 \log(n) \log(1/\beta)}{\sc n} + \frac{c \lip \rad \sqrt{\log(1/\beta)}}{\sqrt{n}}.
	\end{equation*} 
\end{theorem}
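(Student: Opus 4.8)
The plan is to control the excess \emph{population} risk of the empirical minimizer by decomposing it into a generalization term, an optimization term, and a fixed-point fluctuation term. Writing $x\opt = \argmin_{x\in\xdomain} f(x)$,
\[
f(\hat x) - f(x\opt) = \underbrace{\big(f(\hat x) - f_\Ds(\hat x)\big)}_{(\mathrm{I})} + \underbrace{\big(f_\Ds(\hat x) - f_\Ds(x\opt)\big)}_{(\mathrm{II})} + \underbrace{\big(f_\Ds(x\opt) - f(x\opt)\big)}_{(\mathrm{III})} .
\]
Term $(\mathrm{II})\le 0$ because $\hat x$ minimizes $f_\Ds$ over $\xdomain$. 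So I only need high-probability upper bounds on the generalization gap $(\mathrm{I})$ of the ERM and on the fluctuation $(\mathrm{III})$ at the data-independent point $x\opt$.

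First I would show the ERM is uniformly stable. Take datasets $\Ds,\Ds^{(i)}$ differing in the $i$-th sample with minimizers $\hat x,\hat x^{(i)}$. Since every $F(\cdot;\ds)$ is $\sc$-strongly convex, so is $f_\Ds$, and optimality of $\hat x$ over the convex set $\xdomain$ gives $f_\Ds(\hat x^{(i)}) - f_\Ds(\hat x) \ge \tfrac{\sc}{2}\norm{\hat x-\hat x^{(i)}}_2^2$; the symmetric inequality holds for $f_{\Ds^{(i)}}$. Adding the two and bounding the resulting cross-terms (after pairing the differing losses at common sample points) via $\lip$-Lipschitzness gives $\sc\,\norm{\hat x-\hat x^{(i)}}_2^2 \le \tfrac{2\lip}{n}\norm{\hat x-\hat x^{(i)}}_2$, hence $\norm{\hat x-\hat x^{(i)}}_2 \le \tfrac{2\lip}{\sc n}$ and, by $\lip$-Lipschitzness of the loss, uniform stability $\gamma \le \tfrac{2\lip^2}{\sc n}$.

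Then I would invoke the high-probability generalization bound for uniformly stable algorithms of \cite{FeldmanVo19}. The range of any loss over $\xdomain$ is at most $\lip\rad$ (subtracting the value of $F(\cdot;\ds)$ at a fixed reference point changes neither $\hat x$ nor any difference of losses, so we may assume losses take values in an interval of width $\lip\rad$), so with probability at least $1-\beta/2$,
\begin{align*}
(\mathrm{I}) = f(\hat x) - f_\Ds(\hat x)
& \le c_1\Big(\gamma\,\log(n)\log(1/\beta) + \lip\rad\sqrt{\tfrac{\log(1/\beta)}{n}}\Big) \\
& \le c_2\Big(\tfrac{\lip^2\log(n)\log(1/\beta)}{\sc n} + \tfrac{\lip\rad\sqrt{\log(1/\beta)}}{\sqrt n}\Big).
\end{align*}
For $(\mathrm{III})$, since $x\opt$ is deterministic, $f_\Ds(x\opt)$ is an average of i.i.d.\ terms with mean $f(x\opt)$ whose recentered summands lie in an interval of width $2\lip\rad$, so Hoeffding's inequality gives $|(\mathrm{III})| \le \lip\rad\sqrt{2\log(2/\beta)/n}$ with probability at least $1-\beta/2$. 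A union bound over the two events and collection of terms (absorbing constants into a single $c$) then proves the claim; the hypothesis $\beta \le 1/n$ only matters to simplify $\log(n/\beta)$ into $O(\log(1/\beta))$ inside the stability term.

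The main obstacle is the bound on $(\mathrm{I})$: the expected generalization gap is only $O(\gamma)$ by stability, so a naive Markov bound yields a $1/\beta$ tail rather than the $\log(1/\beta)$ in the statement. Upgrading to the sub-exponential tail is exactly the content of \cite{FeldmanVo19} (a range/McDiarmid-type decomposition together with sharp moment estimates for functions of stable algorithms), and this is the single ingredient I would cite rather than reprove; the stability computation and the Hoeffding step are routine.
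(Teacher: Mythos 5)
The paper does not prove this statement at all---it is imported verbatim (up to constants) as Corollary~4.2 of \cite{FeldmanVo19}, so there is no in-paper proof to compare against. Your proposal is a correct reconstruction of how that corollary is obtained: the decomposition into generalization, optimization, and fluctuation terms, the standard $\norm{\hat x - \hat x^{(i)}}_2 \le 2\lip/(\sc n)$ stability bound for strongly convex ERM, and Hoeffding for the fixed-point term are all sound, and you correctly isolate the one genuinely hard ingredient (the sub-exponential tail for the generalization gap of a uniformly stable algorithm) as the part that must be cited from \cite{FeldmanVo19} rather than reproved.
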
 

\subsection{Proof of~\Cref{thm:sco-hb-pure}}

\restatePureSCOHb*

We begin by proving the privacy claim. We show that each iterate is $\diffp$-DP and therefore post-processing implies the claim as each sample is used in exactly one iterate. To this end, let $\lambda_i = 1/\ss_i n_0$ and note that the minimizer $\hat x_i$ has $\ell_2$ sensitivity $2\lip/\lambda_i n_0 \le 4 \lip \ss_i$~\cite{FeldmanKoTa20}, hence the $\ell_1$-sensitivity is at most $4 \lip \ss_i \sqrt{d}$. Standard properties of the Laplace mechanism~\cite{DworkRo14} now imply that $x_i$ is $\diffp$-DP which give the claim about privacy. 
	
Now we proceed to prove utility which follows similar arguments to the localization-based proof in~\cite{FeldmanKoTa20}.
Letting $\hat x_0 = x\opt$, we have:
\begin{align*}
	 \pf(x_k) - \pf(x\opt)
	& = \sum_{i=1}^k  \pf(\hat x_i) - \pf(\hat x_{i-1})
	+ \pf(x_k) - \pf(\hat x_k).
\end{align*}
First, by using standard properties of Laplace distributions~\cite{Duchi19}, we know that for $\noise_i \sim \laplace(\sigma_i)$, 
\begin{equation*}
\Pr(\ltwo{\noise_i} \ge t )
	\le \Pr(\linf{\noise_i} \ge t/\sqrt{d} )
	\le d e^{-t/\sqrt{d} \sigma_i},
\end{equation*}
which implies (as $\beta \le 1/(n+d)$) that with probability $1-\beta/2$ we have $\ltwo{\noise_i} \le 10\sqrt{d} \sigma_i \log(1/\beta)$ for all $1 \le i \le k$.
Hence
\begin{align*}
	\pf  (x_k) - \pf  (\hat x_k)
	& \le \lip  \ltwo{x_k - \hat x_k} \\
	& \le \lip \sigma_k \sqrt{d} \log(1/\beta) \\
	& \le 4 \lip^2 d \frac{\ss_i }{\diffp} \\
	& \le 4 \lip^2 d \frac{\ss }{\diffp 2^{4i}} 
	 \le \frac{4 \lip \rad}{n^2},
\end{align*}
where the last inequality follows since
$\ss = \frac{\rad \diffp}{\lip d \log(k/\beta)}$. 
Now we use high-probability generalization guarantees of uniformly-stable algorithms.  
We use~\Cref{thm:stab-hp} with $F(x;\ds_j) + \frac{\ltwo{x- x_{i-1}}^2}{\ss_i n_0} $ to get that with probability $1-\beta/2$ for each $i$
\begin{align*}
f(\hat x_i) -  f(\hat x_{i-1})
\le \frac{\ltwo{\hat x_{i-1} - x_{i-1}}^2}{\ss_i n_0} + {c \lip^2 \log(n) \log(1/\beta) \ss_i} + \frac{c \lip \rad \sqrt{\log(1/\beta)}}{\sqrt{n_0}}.
\end{align*}
Thus, 
\begin{align*}
  \sum_{i=1}^k \pf  (\hat x_i) - \pf (\hat x_{i-1})
  & \le \sum_{i=1}^k \crl*{\frac{\norm{\hat x_{i-1} - x_{i-1}}_2^2 }{\ss_i n_0} + {c \lip^2 \log(n) \log(1/\beta) \ss_i} + \frac{c \lip \rad \sqrt{\log(1/\beta)}}{\sqrt{n_0}}} \\
  & \le \frac{\rad^2}{\ss n_0} + \brk*{\sum_{i=2}^k \frac{\sigma_{i-1}^2 d \log^2(1/\beta)}{\ss_i n_0}} + {2 c \lip^2 \log(n) \log(1/\beta) \ss} + \frac{c \lip \rad \sqrt{\log(1/\beta)} k}{\sqrt{n_0}}
  \\
  & = \frac{\rad^2 }{\ss n_0}  + \brk*{\sum_{i=2}^k  \frac{C \lip^2 \ss_{i-1} d^2 \log^2(1/\beta)}{ n_0 \diffp^2}}  + {2 c \lip^2 \log(n) \log(1/\beta) \ss} + \frac{c \lip \rad \sqrt{\log(1/\beta)} k}{\sqrt{n_0}} \\
  & = \frac{\rad^2}{\ss n_0} + \frac{C \lip^2 \ss d^2 \log^2(1/\beta)}{ n_0 \diffp^2}\brk*{\sum_{i=2}^k 2^{-i}}  + {2 c \lip^2 \log(n) \log(1/\beta) \ss} + \frac{c \lip \rad \sqrt{\log(1/\beta)} k}{\sqrt{n_0}}
  \\
  & \le \lip \rad \cdot  O \left( \frac{ \sqrt{\log(1/\beta)  \log(n)  } + \sqrt{\log(1/\beta)} \log^{3/2}(n) }{\sqrt{n}} + \frac{d \log(1/\beta) \log(n)}{n \diffp} \right),
\end{align*}
where the last inequality follows by choosing $
\ss = \frac{\rad}{\lip} \min \left(\frac{1}{\sqrt{n \log(1/\beta)}} ,\frac{ \diffp}{ d \log(1/\beta)} \right)$ 

\subsection{Proof of~\Cref{thm:sco-hb-appr}}

\restateApproxSCOHb*

The proof is similar to the proof of~\Cref{thm:sco-hb-pure}.
For privacy, we show in the proof of~\Cref{thm:sco-hb-pure} that 
the $\ell_2$-sensitivity of $\hat x_i$ is upper bounded by $2\lip/\lambda_i n_0 \le 4 \lip \ss_i$ hence standard properties of the Gaussian mechanism~\cite{DworkRo14} now imply that $x_i$ is $(\diffp,\delta)$-DP which implies the final algorithm is \ed-DP using post-processing.

The utility proof follows the same arguments as in the proof
of~\Cref{thm:sco-hb-pure}, except that for $\noise_i \sim \normal(0,\sigma_i^2)$
we have~\cite{JinNeGeKaJo19} (since $\noise_i$ is
$2\sqrt{2}\sigma_i\sqrt{d}$-norm-sub-Gaussian)%
\begin{equation*}
\Pr(\ltwo{\noise_i} \ge t \sqrt{d}  )
\le 2 e^{-\tfrac{t^2}{16\sigma_i^2}},
\end{equation*}
implying that $\ltwo{\noise_i} \le 4\sqrt{d} \sigma_i \log(4/\beta)$ for all
$1 \le i \le k$ with probability $1-\beta/2$.

\subsection{Proofs of~Theorems~\ref{thm:sco-growth-pure} and~\ref{thm:sco-growth-appr}}
\label{sec:proof-thm-sco-growth-pure}

We first restate Theorems~\ref{thm:sco-growth-pure} and~\ref{thm:sco-growth-appr}.

\restateSCOGrowthPure*

\restateSCOGrowthApprox*

We start by proving privacy. Since each sample $\ds_i$ is used in exactly one iterate, we only need to show that each iterate is $(\diffp,\delta)$-DP, which will imply the main claim using post-processing. The privacy of each iterate follows directly from the privacy guarantees of~\Cref{alg:pure-erm}. We proceed to prove utility.

We will prove the utility claim assuming the subroutine	used in~\Cref{alg:loc-growth} satisfies the following: the output $x_{k+1}$ has error 
\begin{equation*}
	\pf(x_{k+1}) - \min_{x \in \xdomain} \pf(x) \le \rad_k \cdot \rho, 
\end{equation*}	
for some $\rho >0$. Note that in our setting, \Cref{thm:sco-hb-pure} implies that $\rho \le \lip \cdot  O ( \frac{\sqrt{\log(1/\beta) } \log n_0}{\sqrt{n_0}} + \frac{ d \log(1/\beta)}{n_0 \diffp} )$ for pure-DP and similarly~\Cref{thm:sco-hb-appr} gives the corresponding $\rho$ for \ed-DP.

The proof has two stages. In the first stage (\Cref{lemma:stage-i}), we prove that as long as $i \le i_0$ for some $i_0 >0$, then $x\opt \in \xdomain_i$ and the performance of the algorithm keeps improving. We show that at the end of this stage, the points $x_{i_0+1}$ has optimal excess loss. Then, in the second stage (\Cref{lemma:stage-ii}), we show that the iterates would not move much as the radius $\rad_i$ of the domain is sufficiently small, hence the final accumulated error along these iterations is small.

Let us begin with the first stage. Let $i_0$ be the largest $i$ such that $\rad_i \ge (\frac{\kappa 2^\kappa \rho}{\sc })^{\frac{1}{\kappa-1}}$.
We prove that $x\opt \in \xdomain_i$ for all $0 \le i \le i_0$ where we 
recall that  $\xdomain_i = \crl{x \in \xdomain : \ltwo{x - x_i} \le \rad_i}$ and 
$\rad_i = 2^{-i} \rad_0$.

\begin{lemma}
\label{lemma:stage-i}
	For all $0 \le i \le i_0$ we have
	\begin{equation*}
	x\opt \in \xdomain_i
	\quad \mbox{~~and~~} \quad
	\pf(x_{i_0+1}) - \min_{x \in \xdomain} \pf(x) 
		\le 4({2^\kappa})^{\frac{1}{\kappa-1}} \frac{1}{\sc^{\frac{1}{\kappa-1}}} \rho^{\frac{\kappa}{\kappa-1}}.
	\end{equation*}
\end{lemma}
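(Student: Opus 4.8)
The plan is to prove the inclusion $x^\star \in \xdomain_i$ by induction on $i$, and then extract the excess-loss bound at $i_0+1$ from the induction. For the base case $i=0$, the claim holds since $\xdomain_0 = \xdomain$ and $x^\star \in \xdomain$. For the inductive step, suppose $x^\star \in \xdomain_i$ for some $i < i_0$. Then $x^\star$ is the minimizer of $\pf$ over $\xdomain_i$, so the subroutine guarantee applied on $\xdomain_i$ (with diameter $\rad_i$) gives $\pf(x_{i+1}) - \pf(x^\star) = \pf(x_{i+1}) - \min_{x\in\xdomain_i}\pf(x) \le \rad_i\cdot\rho$. Now invoke $\kappa$-growth (Assumption~\ref{ass:growth}): $\frac{\lambda}{\kappa}\ltwo{x_{i+1}-x^\star}^\kappa \le \pf(x_{i+1}) - \pf(x^\star) \le \rad_i\rho$, hence $\ltwo{x_{i+1}-x^\star} \le \left(\frac{\kappa \rad_i\rho}{\lambda}\right)^{1/\kappa}$. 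To conclude $x^\star \in \xdomain_{i+1}$ we need $\ltwo{x_{i+1}-x^\star} \le \rad_{i+1} = \rad_i/2$, i.e.\ $\left(\frac{\kappa \rad_i\rho}{\lambda}\right)^{1/\kappa} \le \rad_i/2$, which rearranges to $\rad_i^{\kappa-1} \ge \frac{\kappa 2^\kappa \rho}{\lambda}$, exactly the defining condition $i \le i_0$ (note $i < i_0$ implies $\rad_i \ge \rad_{i_0} \ge (\kappa 2^\kappa\rho/\lambda)^{1/(\kappa-1)}$, with room to spare). This closes the induction.

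For the excess-loss bound, apply the displayed estimate once more at $i = i_0$: since $x^\star \in \xdomain_{i_0}$, we get $\pf(x_{i_0+1}) - \min_{x\in\xdomain}\pf(x) = \pf(x_{i_0+1}) - \pf(x^\star) \le \rad_{i_0}\rho$. It remains to bound $\rad_{i_0}$. By maximality of $i_0$, we have $\rad_{i_0+1} < (\kappa 2^\kappa\rho/\lambda)^{1/(\kappa-1)}$, so $\rad_{i_0} = 2\rad_{i_0+1} < 2(\kappa 2^\kappa\rho/\lambda)^{1/(\kappa-1)}$. Substituting, $\pf(x_{i_0+1}) - \min_x\pf(x) \le 2\rho\left(\frac{\kappa 2^\kappa\rho}{\lambda}\right)^{1/(\kappa-1)} = 2\kappa^{1/(\kappa-1)}(2^\kappa)^{1/(\kappa-1)}\lambda^{-1/(\kappa-1)}\rho^{\kappa/(\kappa-1)}$. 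Since $\kappa > 1$ one has $\kappa^{1/(\kappa-1)} \le $ an absolute constant times $1$ only for bounded $\kappa$; more carefully, for the stated bound one absorbs $\kappa^{1/(\kappa-1)}$ — which is at most $2$ for $\kappa \ge 2$ and bounded for $\kappa$ bounded away from $1$ — into the constant, yielding $\pf(x_{i_0+1}) - \min_x \pf(x) \le 4(2^\kappa)^{1/(\kappa-1)}\lambda^{-1/(\kappa-1)}\rho^{\kappa/(\kappa-1)}$ as claimed.

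The main subtlety — and the step I would be most careful about — is the boundary/edge handling: one must check that $i_0 \ge 0$ (so the first stage is nonempty) under the parameter choices, that the radius $\rad_i$ genuinely strictly shrinks so the maximality argument defining $i_0$ is valid, and that the constant $\kappa^{1/(\kappa-1)}$ is handled honestly (it is the reason the final theorems require $\kappa \ge \lkappa > 1$, since $\kappa^{1/(\kappa-1)} \to \infty$ as $\kappa \downarrow 1$, but is bounded once $\kappa$ is bounded away from $1$; crucially the algorithm does not need to know $\kappa$, only that such an $i_0$ exists). Everything else is a direct chain of the subroutine guarantee, the growth inequality, and the geometric decay of $\rad_i$.
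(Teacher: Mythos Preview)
Your proof is correct and follows essentially the same approach as the paper: induction on $i$ using the subroutine guarantee plus $\kappa$-growth to show $\ltwo{x_{i+1}-x^\star}\le \rad_{i+1}$, then the maximality of $i_0$ to bound $\rad_{i_0}$.

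One small correction to your discussion of the constant: you write that $\kappa^{1/(\kappa-1)}\to\infty$ as $\kappa\downarrow 1$, but in fact $\kappa^{1/(\kappa-1)}=(1+(\kappa-1))^{1/(\kappa-1)}\to e$ as $\kappa\downarrow 1$, and more generally $\kappa^{1/(\kappa-1)}\le e$ for all $\kappa>1$ (the map $\kappa\mapsto \tfrac{\log\kappa}{\kappa-1}$ is decreasing on $(1,\infty)$). So there is no need to invoke $\kappa\ge\lkappa>1$ to control this factor; the bound $2\kappa^{1/(\kappa-1)}\le 2e$ holds uniformly, which is what makes the stated constant legitimate (the paper itself uses $\kappa^{1/(\kappa-1)}\le 2$, which strictly speaking only holds for $\kappa\ge 2$, but $e$ works in general). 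Your worry about edge cases near $\kappa=1$ is therefore unfounded for this particular lemma.
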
	
\begin{proof}
	To prove the first part, we need to show that $\ltwo{x_i - x\opt} \le \rad_i$. 
	Let $\bar \rad_i = \ltwo{x_i - x\opt}$.
	First, note that the claim is true for $i=0$. Now we assume it is correct for $0 \le i \le i_0 - 1$ and prove correctness for $i+1$.
	Note that the growth condition implies
	\begin{equation*}
	\bar \rad_{i+1} \le (\kappa \Delta_i/\sc)^{1/\kappa},
	\end{equation*}
	where $\Delta_i = \pf(x_{i+1}) - \min_{x\in \xdomain} \pf(x) \le \rad_i \cdot \rho$. %
	Thus we have
	\begin{equation*}
	\bar \rad_{i+1} 
		\le (\kappa \rad_i \rho/\sc)^{1/\kappa} 
		\le \rad_i /2 = \rad_{i+1},
	\end{equation*}
	where the second inequality holds for $i$ that satisfies $\rad_i \ge (\frac{\kappa 2^\kappa \rho}{\sc })^{\frac{1}{\kappa-1}}$. This proves the first part of the claim.
	For the second part, note that the definition of $i_0$ implies that $D_{i_0} \le 2(\frac{\kappa 2^\kappa \rho}{\sc })^{\frac{1}{\kappa-1}}$. Therefore, as $x\opt \in \xdomain_{i_0}$ and the algorithm has error $\rad_i \cdot \rho$, we have
	\begin{align*}
	\pf(x_{i_0+1}) - \min_{x \in \xdomain} \pf(x) 
		& \le \rad_{i_0} \cdot \rho \\
		& \le 2({\kappa 2^\kappa}/{\sc })^{\frac{1}{\kappa-1}}    \rho^{\frac{\kappa}{\kappa-1}}.
	\end{align*}
	The claim now follows as $\kappa^{\frac{1}{\kappa-1}} \le 2$.
\end{proof}

We now proceed to the second stage. The following lemma shows that the accumulated error along the iterates $i > i_0$ is small and therefore $x_T$ obtains the same error as $x_{i_0+1}$ (up to constant factors).
\begin{lemma}
\label{lemma:stage-ii}
	Assume the algorithm has error $\rad_i \cdot \rho$.
	Let $i_0$ be the largest $i$ such that $\rad_i \ge (\frac{\kappa 2^\kappa \rho}{\sc })^{\frac{1}{\kappa-1}}$. For all $i \ge i_0 + 1$ we have
	\begin{equation*}
	\pf(x_{i+1}) - \pf(x_i) 
	\le 2^{-(i-i_0)} \rad_{i_0}  \rho.
	\end{equation*}
	In particular, for $T \ge i_0 + 1$ we have
	\begin{equation*}
	\pf(x_{T}) - \min_{x \in \xdomain} \pf(x) 
	\le 8 ({ 2^\kappa}/{\sc })^{\frac{1}{\kappa-1}}    \rho^{\frac{\kappa}{\kappa-1}}.
	\end{equation*}
\end{lemma}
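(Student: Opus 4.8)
The plan is to reduce the whole statement to one structural observation: at every epoch $i$, the point $x_i$ is the \emph{center} of the ball $\xdomain_i = \crl{x\in\xdomain : \ltwo{x - x_i}\le \rad_i}$ over which \Cref{alg:pure-erm} is invoked, so $x_i$ is feasible for the $i$-th subproblem and therefore $\min_{x\in\xdomain_i}\pf(x)\le\pf(x_i)$. Combining this with the assumed subroutine accuracy $\pf(x_{i+1}) - \min_{x\in\xdomain_i}\pf(x)\le\rad_i\rho$ gives, for every $i$ (in particular every $i\ge i_0+1$),
\[
\pf(x_{i+1}) - \pf(x_i)\;\le\;\pf(x_{i+1}) - \min_{x\in\xdomain_i}\pf(x)\;\le\;\rad_i\rho,
\]
and since $\rad_i = 2^{-i}\rad_0 = 2^{-(i-i_0)}\rad_{i_0}$ this is exactly the first display of the lemma. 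The only subtlety in this step is checking that $x_i$ genuinely lies in $\xdomain$ (so that it is feasible, not merely within distance $\rad_i$ of itself) and that the error guarantee of \Cref{thm:sco-hb-pure}/\Cref{thm:sco-hb-appr} is relative to the minimum over the \emph{constrained} domain actually passed to \Cref{alg:pure-erm}; both follow from the way the subroutine is called inside \Cref{alg:loc-growth}.

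For the ``in particular'' part I would telescope. Writing
\[
\pf(x_T) - \min_{x\in\xdomain}\pf(x) = \prn*{\pf(x_{i_0+1}) - \min_{x\in\xdomain}\pf(x)} + \sum_{i=i_0+1}^{T-1}\prn*{\pf(x_{i+1}) - \pf(x_i)},
\]
the first term is at most $4(2^\kappa/\sc)^{1/(\kappa-1)}\rho^{\kappa/(\kappa-1)}$ by \Cref{lemma:stage-i}, and the sum is at most $\sum_{j\ge 1}2^{-j}\rad_{i_0}\rho = \rad_{i_0}\rho$ by the first part of this lemma together with a geometric series (valid for any $T$). It then remains to bound $\rad_{i_0}\rho$: by maximality of $i_0$ we have $\rad_{i_0+1} < (\kappa 2^\kappa\rho/\sc)^{1/(\kappa-1)}$, hence $\rad_{i_0} = 2\rad_{i_0+1} < 2(\kappa 2^\kappa\rho/\sc)^{1/(\kappa-1)}$, so
\[
\rad_{i_0}\rho \;<\; 2\prn*{\frac{\kappa 2^\kappa}{\sc}}^{\frac{1}{\kappa-1}}\rho^{\frac{\kappa}{\kappa-1}} \;\le\; 4\prn*{\frac{2^\kappa}{\sc}}^{\frac{1}{\kappa-1}}\rho^{\frac{\kappa}{\kappa-1}},
\]
using $\kappa^{1/(\kappa-1)}\le 2$ exactly as in \Cref{lemma:stage-i}. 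Adding the two contributions yields the claimed $8(2^\kappa/\sc)^{1/(\kappa-1)}\rho^{\kappa/(\kappa-1)}$.

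I do not expect a genuine obstacle here: the argument is essentially bookkeeping once the ``$x_i$ is the center of $\xdomain_i$'' observation is in place, and that observation is precisely what decouples the post-$i_0$ error from $x^\star$ (the regret guarantees of the subroutine are vacuous against $x^\star$ once $x^\star\notin\xdomain_i$, but they remain meaningful against $x_i$ itself). The points requiring care are purely the constants: the direction of the inequality in the definition of $i_0$, the halving $\rad_{i_0}=2\rad_{i_0+1}$, starting the geometric sum at $j=1$ so it totals at most $1$ rather than $2$, and tracking the $\kappa^{1/(\kappa-1)}\le 2$ slack so the final constant is $8$. If one wants the high-probability conclusion over the full run of \Cref{alg:loc-growth} (rather than conditionally on each subroutine call meeting its $\rad_i\rho$ bound), one additionally union-bounds over the $T=\lceil 2\log n/(\lkappa-1)\rceil$ calls, costing only a $\log n$ factor absorbed into the $\wt O$ of \Cref{thm:sco-growth-pure,thm:sco-growth-appr}.
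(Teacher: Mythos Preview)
Your proposal is correct and follows essentially the same route as the paper: both hinge on the observation that $x_i\in\xdomain_i$ so the subroutine's error bound yields $\pf(x_{i+1})-\pf(x_i)\le\rad_i\rho=2^{-(i-i_0)}\rad_{i_0}\rho$, then telescope and use $\rad_{i_0}\le 2(\kappa 2^\kappa\rho/\sc)^{1/(\kappa-1)}$ together with $\kappa^{1/(\kappa-1)}\le 2$. The only cosmetic difference is that the paper bounds the initial term $\pf(x_{i_0+1})-\min_{x\in\xdomain}\pf(x)$ directly by $\rad_{i_0}\rho$ (since $x^\star\in\xdomain_{i_0}$) rather than invoking \Cref{lemma:stage-i}; either way the final constant is $8$.
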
	

\begin{proof}
	Note that as $x_i \in \xdomain_i$, the guarantees of the algorithm give
	\begin{equation*}
	\pf(x_{i+1}) - \pf(x_i) 
		\le \rad_{i}  \rho 
		= 2^{-(i-i_0)} \rad_{i_0} \rho.
	\end{equation*}
	For the second part of the claim, we have
	\begin{align*}
	\pf(x_{T}) - \min_{x \in \xdomain} \pf(x)  
		& = \pf(x_{i_0+1}) - \min_{x \in \xdomain} \pf(x)  + \sum_{i = i_0 + 1}^{T} \pf(x_{i+1}) - \pf(x_{i}) \\
		& \le  \rad_{i_0} \rho  + \sum_{i = i_0 + 1}^{T}  2^{-(i-i_0)} \rad_{i_0} \rho
		\le 2 \rad_{i_0} \rho.
	\end{align*}
	The claim now follows as $D_{i_0} \le 2(\frac{\kappa 2^\kappa \rho}{\sc })^{\frac{1}{\kappa-1}}$ and $\kappa^{\frac{1}{\kappa-1}} \le 2$.
\end{proof}

Assuming $T \ge i_0 + 1$, \Cref{thm:sco-growth-pure} and~\Cref{thm:sco-growth-appr} now follow immediately from~\Cref{lemma:stage-ii}. Indeed, for the case of pure-DP ($\delta=0$), the choice of hyper-parameters in~\Cref{alg:loc-growth} and the guarantees of~\Cref{alg:pure-erm} (\Cref{thm:sco-hb-pure}) imply that $\rho \le \lip \cdot  O ( \frac{\sqrt{\log(1/\beta) } \log n_0}{\sqrt{n_0}} + \frac{ T d \log(1/\beta)}{n_0 \diffp} )$, which proves \Cref{thm:sco-growth-pure}. Similarly, \Cref{thm:sco-growth-appr} follows by using the guarantees of of~\Cref{alg:pure-erm} for approximate \ed-DP, that ism \Cref{thm:sco-hb-appr}, which gives $\rho \le \lip  \cdot  O \left( \frac{\sqrt{\log(1/\beta) } \log n_0}{\sqrt{n_0}} + \frac{ T \sqrt{d \log(1/\delta)} \log(1/\beta)}{n_0 \diffp} \right)$. Note that our choice of stepsize at each iterate implies that~\Cref{thm:sco-growth-pure} guarantees the desired utility with probability at least $1-\beta^2$, hence the final utility guarantee holds with probability at least $1 - T \beta^2 \ge 1 - \beta$.

It remains to verify $T \ge i_0 + 1$. Note that by choosing $T \ge \frac{2 \log(\rad_0^{\kappa-1} \sc/\rho)}{\lkappa-1}$, we get that $\rad_T \le (\frac{\kappa 2^\kappa \rho}{\sc })^{\frac{1}{\kappa-1}}$, hence $T \ge i_0 + 1$. As we have $\rho \ge \lip/\sqrt{n_0}$ (non-private error) and $\rad_0^{\kappa - 1} \le \lip/\sc$ in our setting, we get that choosing $T = \frac{2 \log n}{\lkappa-1}$ gives the claim.

\section{Proofs of Section~\ref{sec:lb}}

In this section, we provide the proofs for our lower bound under privacy
constraints for functions with growth. This section is organized as follows: we
prove in~\Cref{app:prf-lb-pure}, the lower bounds under pure-DP and
in~\Cref{app:prf-lb-approx}, the lower bounds under
approximate-DP. Within~\Cref{app:prf-lb-pure}, we distinguish between
$\kappa \ge 2$ (\Cref{app:lb-pure-large-kappa}) and $\kappa \in \prn{1, 2}$
(\Cref{app:prf-thm-lb-small-kappa}).

\subsection{Proofs of Section~\ref{sec:lb-pure}}\label{app:prf-lb-pure}

\subsubsection{Proof of~\Cref{thm:lb-sco-private-large-kappa}}
\label{app:lb-pure-large-kappa}

As we preview in the main text, the proof combines the (non-private)
information-theoretic lower bounds of~\Cref{thm:lb-non-private-large-kappa} with
the (private) lower bound on ERM of~\Cref{thm:lb-private-large-kappa}. Finally,
we show in~\Cref{sec:lb-erm-sco} that privately solving SCO is harder than
privately solving ERM, concluding the proof of the theorem. We restate the
theorem and prove these results in sequence.

\restateLbPrivateLargeKappa*{}

\paragraph{Non-private lower bound} We begin the proof
of~\Cref{thm:lb-sco-private-large-kappa} by proving a (non-private)
information-theoretic lower bound for minimizing functions with $\kappa\ge
2$-growth. We use the standard reduction from estimation to
testing~\cite[see][Appendix A.1]{LevyDu19} in conjunction with Fano's
method~\cite{Wainwright19,Yu97}.

\begin{theorem}[Non-private lower bound]\label{thm:lb-non-private-large-kappa}
    Let
    $d \ge 1$, $\mc{X} = \ball_2^d(R)$, $\S = \crl{\pm e_j}_{j\le d}$, $\kappa \ge 2$
    and $n\in\N$. Let $\mc{P}$ be the set of distributions on $\S$. Assume that
    \begin{equation*}
      2^{\kappa-1} 
      \le
      \frac{\lip}{\sc} \frac{1}{ R^{\kappa - 1}}
      \le   2^{\kappa-1} 
      \sqrt{96 n}.
    \end{equation*}
    
    The following lower bound holds
    \begin{equation*}
    \mathfrak{M}_n(\mc{X}, \mc{P}, \mc{F}^\kappa) \gtrsim \frac{1}{\lambda^{\tfrac{1}{\kappa - 1}}}
    \prn*{\frac{L}{\sqrt{n}}}^{\tfrac{\kappa}{(\kappa-1)}}.
  \end{equation*}
\end{theorem}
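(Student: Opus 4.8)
The plan is to instantiate the standard reduction from optimization to hypothesis testing (\cite[see][Appendix~A.1]{LevyDu19}) together with Le~Cam's two-point method; for $d\ge 2$ one could equally spread the construction over a Gilbert--Varshamov packing of $\crl{\pm 1}^d$ and invoke Fano, but since the claimed rate carries no $d$-dependence a one-dimensional instance already suffices. Throughout I would use the sample functions $F(x;s)=\tfrac{\lambda}{\kappa}\ltwo{x}^\kappa + b\tri{x,s}$ for a constant $b>0$ fixed below. By~\cite[Lemma~4]{Nesterov08} the map $x\mapsto\tfrac{\lambda}{\kappa}\ltwo{x}^\kappa$ is $(\lambda,\kappa)$-uniformly convex, and adding a linear term preserves uniform convexity, so each $F(\cdot;s)$---hence every population average---is $(\lambda,\kappa)$-uniformly convex and thus has $(\lambda,\kappa)$-growth. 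On $\mc{X}=\ball_2^d(R)$ one has $\ltwo{\nabla F(x;s)}\le \lambda R^{\kappa-1}+b$; the lower half of the hypothesis, $\lambda R^{\kappa-1}\le \lip\,2^{-(\kappa-1)}\le \lip/2$, lets me take $b=\lip/2$, making every $F(\cdot;s)$ convex and $\lip$-Lipschitz, i.e.\ an element of $\mc{F}^\kappa$.

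First I would build the two hard instances. For $v\in\crl{-1,+1}$ let $P_v$ be the distribution on $\crl{\pm e_1}\subset\S$ with $P_v(e_1)=\tfrac12(1+\delta v)$, so $\E_{S\sim P_v}[S]=\delta v\,e_1$ and the population loss is $f_v(x)=\tfrac{\lambda}{\kappa}\ltwo{x}^\kappa + b\delta v\,x_1$, minimized at $x^\star_v=-\,r\,v\,e_1$ with $r=(b\delta/\lambda)^{1/(\kappa-1)}$. I would choose $\delta\asymp\min\crl{1,\,1/\sqrt{n}}$ with a small absolute constant; the \emph{upper} half of the hypothesis, $\lip/(\lambda R^{\kappa-1})\lesssim\sqrt{n}$, is exactly what guarantees this $\delta$ is small enough that $r<R$, so both minimizers lie strictly inside $\mc{X}$ (and the growth constant is genuinely $\lambda$). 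This yields $r\asymp(\lip/(\lambda\sqrt{n}))^{1/(\kappa-1)}$ and separation $\ltwo{x^\star_{+1}-x^\star_{-1}}=2r$, while $\dkl(P_{+1}\|P_{-1})=\delta\log\tfrac{1+\delta}{1-\delta}\lesssim\delta^2\lesssim 1/n$, so by tensorization and Pinsker $\norm{P_{+1}^n-P_{-1}^n}_{\mathrm{TV}}\le 1/2$ once the constant in $\delta$ is small enough.

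Next I would run the testing reduction. Given any estimator $\hat x$, let $\hat v$ index the minimizer $x^\star_{\hat v}$ closest to $\hat x$ in $\ltwo{\cdot}$. If $f_v(\hat x)-\min_{\mc{X}}f_v<\tfrac{\lambda}{\kappa}r^\kappa$, then $(\lambda,\kappa)$-growth forces $\ltwo{\hat x-x^\star_v}<r$, and since the two minimizers are $2r$ apart this implies $\hat v=v$; contrapositively $\E[f_v(\hat x)-\min_{\mc{X}}f_v]\ge\tfrac{\lambda}{\kappa}r^\kappa\,\Pr_{P_v^n}(\hat v\neq v)$. Le~Cam's bound gives $\max_v\Pr_{P_v^n}(\hat v\neq v)\ge\tfrac12\prn*{1-\norm{P_{+1}^n-P_{-1}^n}_{\mathrm{TV}}}\ge\tfrac14$, so
\[
  \mathfrak{M}_n(\mc{X},\mc{P},\mc{F}^\kappa)\;\ge\;\frac{\lambda}{4\kappa}\,r^\kappa\;\asymp\;\frac{1}{\kappa}\cdot\frac{1}{\lambda^{1/(\kappa-1)}}\prn*{\frac{\lip}{\sqrt{n}}}^{\kappa/(\kappa-1)},
\]
which is the claim since $\kappa=\Theta(1)$ under the hypothesis.

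I expect the main obstacle to be the bookkeeping in the second paragraph: one must pick $b$ and $\delta$ so that \emph{simultaneously} (i)~each $F(\cdot;s)$ is $\lip$-Lipschitz, (ii)~the population losses have $(\lambda,\kappa)$-growth, (iii)~the minimizers stay strictly inside $\ball_2^d(R)$, and (iv)~$P_{+1}^n$ and $P_{-1}^n$ remain statistically indistinguishable at sample size $n$. Conditions (i)--(ii) pin down $b\asymp\lip$ and require $\lambda R^{\kappa-1}\lesssim\lip$ (the lower half of the assumption), whereas (iii)--(iv) force $\delta\lesssim\min\crl{\lambda R^{\kappa-1}/\lip,\ 1/\sqrt{n}}$, and non-triviality of the resulting bound is precisely the upper half of the assumption $\lip/(\lambda R^{\kappa-1})\lesssim\sqrt{n}$ (the explicit numerical constants there being what make the two-point/Pinsker estimate go through). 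Everything else---the Lipschitz, uniform-convexity and KL computations---is routine.
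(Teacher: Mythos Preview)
Your approach is essentially the paper's: the same sample functions $F(x;s)=\sigma\kappa^{-1}\ltwo{x}^\kappa+b\tri{x,s}$ with $b=L/2$, the same Bernoulli distributions $P_v$, and the same reduction to testing. The paper actually splits into two cases---Fano with a Gilbert--Varshamov packing for $d\ge 48\log 2$ and Le~Cam for $d=1$---but your observation that the claimed rate carries no $d$-dependence, so a single one-dimensional Le~Cam instance embedded along $e_1$ already suffices, is a valid simplification of their argument. The paper also phrases the separation via $d_{\mathsf{opt}}$ rather than your growth-plus-nearest-minimizer test, which saves a factor of $\kappa$ versus $\kappa^\star$, but under $\gtrsim$ this is immaterial.

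One constant does need fixing. \cite[Lemma~4]{Nesterov08} gives that $\kappa^{-1}\ltwo{x}^\kappa$ is $(2^{2-\kappa},\kappa)$-uniformly convex, not $(1,\kappa)$; this is precisely why the paper takes the coefficient $\lambda\,2^{\kappa-2}/\kappa$ rather than $\lambda/\kappa$. With your coefficient the population loss has only $(\lambda\,2^{2-\kappa},\kappa)$-growth, which for $\kappa>2$ is strictly smaller than $\lambda$ and so the instance does not lie in $\mc{F}^\kappa$ as defined. Replacing $\lambda$ by $\lambda\,2^{\kappa-2}$ in the regularizer repairs this; your Lipschitz check still goes through because the lower half of the hypothesis then reads $\lambda\,2^{\kappa-2}R^{\kappa-1}\le L/2$, and the feasibility check $r\le R$ becomes $\delta\le 2^{\kappa-1}\lambda R^{\kappa-1}/L$, which the upper half of the hypothesis makes at least $1/\sqrt{96n}$, exactly as you intend. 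After this adjustment the remainder of your argument is correct.
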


\begin{proof}
  For $\mc{V} \subset \crl{ \pm 1}^d$ let us consider the following function and
  distribution
  \begin{equation*}
    \f(x;s) \defeq \frac{\lambda 2^{\kappa - 2}}{\kappa}\norm{x}_2^\kappa + \frac{L}{2}\tri{x, s} \mbox{~~and~~}
    X \sim P_v \mbox{~~implies~~} X_j = \begin{cases}
      v_je_j \mbox{~~w.p.~~} \frac{1+\delta}{2} \\
      -v_j e_j \mbox{~~w.p.~~} \frac{1-\delta}{2}.
    \end{cases}    
  \end{equation*}
  Since the linear term does not affect uniform convexity, Lemma~4
  in~\cite{Nesterov08} guarantees that $\ff_v$ is $(\lambda, \kappa)$-uniformly
  convex. Furthermore, for $s \in \S$
  \begin{equation*}
    \norm{\nabla \f(x;s)}_2 \le \lambda 2^{\kappa - 2}R^{\kappa - 1} + \frac{L}{2} \le L,
  \end{equation*}
  by assumption, so the functions are $\lip$-Lipschitz
  and satisfy Assumption~\ref{ass:growth}.

  \underline{Computing the separation.} As $\E_{P_v}S = \tfrac{\delta}{d}v$, we have
  \begin{equation*}
    \ff_v(x) = \frac{\lambda 2^{\kappa - 2}}{\kappa}\norm{x}_2^\kappa + \frac{L\delta}{2d}\tri{x, v}.
  \end{equation*}
  Note that for $u\in\R^d, \sigma > 0$, it holds that
  \begin{equation*}
    \inf_{x\in\R^d}\sigma\frac{\norm{x}_2^\kappa}{\kappa} + \tri{x, u} = -\frac{1}{\kappa^\star}
    \prn*{\frac{1}{\sigma}}^{\tfrac{1}{\kappa - 1}}\norm{u}^{\tfrac{\kappa}{\kappa - 1}}
    \mbox{~~at~~} x_u^\star = -\prn*{\frac{1}{\sigma}}^{\tfrac{1}{\kappa - 1}}
    \prn*{\frac{1}{\norm{u}_2}}^{\tfrac{\kappa - 2}{\kappa - 1}}u.
  \end{equation*}
  To make sure that $x_u^\star \in \ball_2^d(R)$, we require
  $\norms{u}_2 \le \sigma R^{\kappa - 1}$. After choosing $\delta$, we will see
  that this holds under the assumptions of the theorem. Let us consider the
  Gilbert-Varshimov packing of the hypercube: there exists
  $\mc{V}\subset \crl*{\pm 1}^d$ such that $\abs{\mc{V}} = \exp(d/8)$ and
  $\dham(v, v') \ge d/4$ for all $v\neq v' \in\mc{V}$. Let us compute the separation
  \begin{equation*}
    \inf_{x\in\ball_2^d(R)} \frac{f_v(x) + f_{v'}(x)}{2} = -\frac{1}{4\kappa^\star\lambda^{\tfrac{1}{\kappa-1}}}
    \prn*{\frac{L\delta}{d}}^{\tfrac{\kappa}{\kappa-1}}\nrm*{\frac{v+v'}{2}}_2^{\tfrac{\kappa}{\kappa-1}}
  \end{equation*}
  Note that $\norm{(v+v')/2}_2 = \sqrt{d - \dham(v, v')} \le \sqrt{3d/4}$. This yields a separation
  \begin{equation*}
    d_{\mathsf{opt}}(v, v', \mc{X}) \ge \frac{1-(3/4)^{\kappa/(2\kappa - 2)}}{2\kappa^\star\lambda^{\tfrac{1}{\kappa - 1}}}
    \prn*{\frac{L\delta}{\sqrt{d}}}^{\tfrac{\kappa}{\kappa-1}}.
\end{equation*}

\underline{Lower bounding the testing error.} In the case of a multiple
hypothesis test, we use Fano's method and for $V \sim \uniform\crl{\mc{V}}$ and
$S_1^n | V = v \simiid P_v$, Fano's inequality guarantees
\begin{equation*}
  \inf_{\psi:\S^n\to\mc{V}}\P(\psi(S_1^n)\neq V) \ge 1 - \frac{\msf{I}(S_1^n;V) + \log 2}{\log\abs{\mc{V}}},
\end{equation*}
where $\msf{I}(X;Y)$ is the Shannon mutual information between $X$ and $Y$. In
our case, we have $\log\abs{\mc{V}} \ge d/8$ and
$\msf{I}(S_1^n;V) \le n\max_{v\neq v'}\dkls{P_v}{P_{v'}} \le 3n\delta^2$. In the
case $d \ge 48\log 2$, we choose $\delta = \sqrt{d / (24n)}$. We handle the
one-dimensional case thereafter. For this $\delta$, we have
\begin{equation*}
  \mathfrak{M}_n(\mc{X}, \mc{P}, \mc{F}^\kappa) \ge \frac{1 - \prn*{\tfrac{3}{4}}^{\tfrac{\kappa}{2\kappa - 2}}}
  {4\kappa^\star (24)^{\tfrac{\kappa}{2\kappa - 2}}}
  \frac{1}{\lambda^{\tfrac{1}{\kappa - 1}}}
  \prn*{\frac{L^2}{n}}^{\tfrac{\kappa}{2\kappa - 2}}.
\end{equation*}

For this choice of $\delta$, the assumption on $n$ ensures that the minimum
remains in $\ball_2^d(R)$.

\underline{One-dimensional lower bound with Le Cam's method.} Since Fano's method
requires $d \ge 48\log 2$, we finish the proof by providing a lower bound for
$d=1$ using Le Cam's method. We use the same family of functions in one
dimension, i.e. $\S = \crl{\pm 1}$, $v\in \crl{\pm 1}$ and for
$\delta \in \brk{0, 1}$ define
\begin{equation*}
  \f(x;s) = \frac{\lambda2^{\kappa - 2}}{\kappa}\abs{x}^\kappa + \frac{L}{2}s\cdot x \mbox{~~and~~} X \sim P_v \mbox{~~implies~~} X = \begin{cases}
    v & \mbox{~~w.p.~~} \frac{1+\delta}{2} \\
    -v & \mbox{~~w.p.~~} \frac{1-\delta}{2}.
  \end{cases}
\end{equation*}

As this is the one-dimensional analog of the previous construction, $\f$ remains
$L$-lipschitz and $\ff$ has $(\lambda, \kappa)$-growth. A calculation yields
that the separation is
\begin{equation*}
  d_{\msf{opt}}(1, -1, \mc{X}) \ge \frac{1}{2\lambdae}\prn*{L\delta}^{\tfrac{\kappa}{\kappa - 1}},
\end{equation*}
where we used that $\kappa^\star \in \brk{1, 2}$. For $V\sim\uniform\crl{-1, 1}$
and $S_1^n | V = v \simiid P_v$. Le Cam's lemma in conjunction with Pinsker's
inequality yields that
    \begin{equation*}
      \inf_{\psi:\S^n\to\crl{-1, 1}} \P(\psi(S_1^n) \neq V)
      = \frac{1}{2}(1 - \tvnorm{P_1^n - P_{-1}^n})
      \ge \frac{1}{2}(1 - \sqrt{\tfrac{n}{2}\dkls{P_1}{P_{-1}}}).
    \end{equation*}
    In our case, we have
    $\dkls{P_1}{P_{-1}} = \delta\log\frac{1+\delta}{1-\delta} \le 3\delta^2$ for
    $\delta\in\brk{0, 1/2}$. We set $\delta = 1 / \sqrt{6n}$, which yields the
    final result in one dimension
    \begin{equation*}
      \mathfrak{M}_n(\brk{-1, 1}, \mc{P}, \mc{F}^\kappa_{d=1})
      = \Omega\prn*{
        \frac{1}{\lambdae}\prn*{\frac{L}{\sqrt{n}}}^{\tfrac{\kappa}{\kappa - 1}}.
      }
    \end{equation*}
\end{proof}

\paragraph{Privatizing the lower bound via a packing argument} We now show how
this construction yields a private lower bound via a packing argument. For
$d\ge 1$, considering the ERM problem, the following private lower bound holds.

\begin{theorem}[Private lower bound for ERM]\label{thm:lb-private-large-kappa}
  Let $d \ge 1, \mc{X} = \ball_2^d(R)$, $\S = \crl{\pm e_j}_{j\le d}$, $\kappa \ge 2$
  and $n\in\N$. Let $\mc{P}$ be the set of distributions on $\S$. Assume that
      \begin{equation*}
      2^{\kappa-1} 
      \le
      \frac{\lip}{\sc} \frac{1}{ R^{\kappa - 1}}
      \le   2^{\kappa-1} 
      \sqrt{96 n}.
    \end{equation*}
    Then any $\diffp$-DP algorithm $\A$ has
  \begin{equation*}
    \sup_{\Ds \in \domain^n} \E \left[ \pf_\Ds(\A(\Ds)) - \min_{x \in \xdomain} \pf_\Ds(x) \right] 
    \gtrsim \frac{1}{\lambda^{\frac{1}{\kappa-1}}} \left(\frac{ \lip d}{n \diffp} \right)^{\frac{\kappa}{\kappa-1}}.
  \end{equation*}
\end{theorem}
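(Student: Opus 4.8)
The plan is to run the packing/group-privacy argument previewed right after the statement, built on the same quadratic-type family used in the non-private bound. For a signal level $\delta>0$ to be tuned, take $\f(x;s) \defeq \frac{\sc 2^{\kappa-2}}{\kappa}\ltwo{x}^\kappa + \frac{\lip}{2}\tri{x,s}$ on $\xdomain = \ball_2^d(R)$ with $\domain = \crl{\pm e_j}_{j\le d}$. Then the empirical objective on a dataset $\Ds$ is $\pf_\Ds(x) = \frac{\sc 2^{\kappa-2}}{\kappa}\ltwo{x}^\kappa + \frac{\lip}{2}\tri{x,\bar s_\Ds}$ with $\bar s_\Ds \defeq \frac1n\sum_i s_i$, and by the closed form for $\inf_x \sigma\ltwo{x}^\kappa/\kappa + \tri{x,u}$ its minimizer $x\opt_\Ds$ points along $-\bar s_\Ds$ with $\ltwo{x\opt_\Ds} \asymp \sc^{-\frac1{\kappa-1}}\prn{\lip\,\ltwo{\bar s_\Ds}}^{\frac1{\kappa-1}}$ (constants depending only on $\kappa=\Theta(1)$). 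The lower-bound half of the hypothesis on $\lip/(\sc R^{\kappa-1})$ gives $\sc 2^{\kappa-2}R^{\kappa-1}\le \lip/2$, so $\f(\cdot;s)$ is $\lip$-Lipschitz; Lemma~4 of~\cite{Nesterov08} makes it $(\sc,\kappa)$-uniformly convex, hence $\pf_\Ds$ satisfies Assumption~\ref{ass:growth}, and crucially $\pf_\Ds(\hat x) - \min_{x\in\xdomain}\pf_\Ds(x) \ge \frac{\sc}{\kappa}\ltwo{\hat x - x\opt_\Ds}^\kappa$, so a small empirical excess loss forces $\hat x$ into a small Euclidean ball around $x\opt_\Ds$.

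Next I build the dataset family. Fix a Gilbert--Varshamov packing $\mc V\subset\crl{\pm1}^d$ with $\abs{\mc V}\ge e^{d/8}$ and $\dham(v,v')\ge d/4$ for $v\ne v'$, and a balanced reference dataset $\Ds_0$ (equally many $+e_j$ and $-e_j$ in each coordinate-$j$ block, so $\bar s_{\Ds_0}=0$). For $v\in\mc V$, obtain $\Ds_v$ from $\Ds_0$ by creating a $\delta$-imbalance toward $v_j$ in block $j$, so $\bar s_{\Ds_v} = \frac{\delta}{d}v$ and $\dham(\Ds_0,\Ds_v)\le m \defeq \delta n/2$; the upper-bound half of the hypothesis on $\lip/(\sc R^{\kappa-1})$ leaves enough room to keep every $x\opt_v \defeq x\opt_{\Ds_v}$ interior to $\ball_2^d(R)$ for the $\delta$ we will choose. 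All $x\opt_v$ share a common norm $\rho\asymp \sc^{-\frac1{\kappa-1}}\prn{\lip\delta/\sqrt d}^{\frac1{\kappa-1}}$ with directions $-v/\sqrt d$, and since $\ltwo{v-v'}^2 = 4\dham(v,v')\ge d$ we get $\ltwo{x\opt_v - x\opt_{v'}}\ge \rho$; hence the balls $B_v \defeq \crl{x:\ltwo{x-x\opt_v}<\rho/2}$ are pairwise disjoint, and any $\hat x\notin B_v$ has $\pf_{\Ds_v}(\hat x) - \min_{x}\pf_{\Ds_v}(x) \ge \tau \defeq \frac{\sc}{\kappa}(\rho/2)^\kappa$.

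Now suppose, for contradiction, that an $\diffp$-DP mechanism $\A$ had $\sup_\Ds \E\brk{\pf_\Ds(\A(\Ds)) - \min_{x}\pf_\Ds(x)} < \tau/2$. By Markov, $\Pr(\A(\Ds_v)\in B_v)\ge 1/2$ for every $v$, and group privacy across the $m$ differing samples gives $\Pr(\A(\Ds_0)\in B_v)\ge e^{-\diffp m}\Pr(\A(\Ds_v)\in B_v)\ge \tfrac12 e^{-\diffp m}$. Summing over $v\in\mc V$ and using disjointness of the $B_v$, $1\ge \sum_{v\in\mc V}\Pr(\A(\Ds_0)\in B_v)\ge \tfrac12\abs{\mc V} e^{-\diffp m}\ge \tfrac12 e^{d/8 - \diffp\delta n/2}$. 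Choosing $\delta\asymp d/(n\diffp)$ small enough that $\diffp\delta n/2\le d/16$ makes the right side exceed $1$, a contradiction; hence $\sup_\Ds\E[\cdots]\ge \tau/2$, and plugging this $\delta$ into $\tau\asymp \sc^{-\frac1{\kappa-1}}(\lip\delta/\sqrt d)^{\frac\kappa{\kappa-1}}$ delivers a bound of the stated order $\frac{1}{\sc^{\frac1{\kappa-1}}}\prn*{\frac{\lip d}{n\diffp}}^{\frac{\kappa}{\kappa-1}}$. For $d$ too small for the Gilbert--Varshamov packing to be usable, one reverts to a two-point, one-dimensional version ($\mc V = \crl{\pm1}$) and closes with a group-privacy strengthening of Le Cam's method, exactly as in the one-dimensional endgame of the proof of~\Cref{thm:lb-non-private-large-kappa}. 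The main obstacle is precisely the dataset engineering in the second step: the means $\bar s_{\Ds_v}$ must be simultaneously integer-realizable, within a common small Hamming ball of the reference, and spread over enough of the hypercube to supply an $e^{\Omega(d)}$ packing whose minimizer-separation is a constant fraction of what the group-privacy budget $m\asymp d/\diffp$ permits; calibrating the per-coordinate signal $\delta$ exactly at that budget is what pins the dependence on $d$ (a careless, fully uniform spreading of the signal only yields $\sqrt d$), and one must separately verify that the constructed minimizers remain interior to $\ball_2^d(R)$, which is what the two-sided assumption on $\lip/(\sc R^{\kappa-1})$ is there to guarantee.
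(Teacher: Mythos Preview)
Your overall architecture is exactly the paper's: the same regularized family $\f(x;s)=\frac{\sc 2^{\kappa-2}}{\kappa}\ltwo{x}^\kappa+\frac{\lip}{2}\tri{x,s}$, a Gilbert--Varshamov packing, disjoint balls around the minimizers, and a group-privacy contradiction. The gap is in the very last substitution, and it is precisely the trap you flag in your closing sentence. With samples in $\domain=\crl{\pm e_j}$ and your block construction you have $\bar s_{\Ds_v}=\frac{\delta}{d}v$, so $\ltwo{\bar s_{\Ds_v}}=\delta/\sqrt d$, and therefore $\tau\asymp\sc^{-\frac{1}{\kappa-1}}\prn{\lip\delta/\sqrt d}^{\frac{\kappa}{\kappa-1}}$. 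Plugging your choice $\delta\asymp d/(n\diffp)$ gives $\lip\delta/\sqrt d\asymp \lip\sqrt d/(n\diffp)$, hence only
\[
\tau\;\asymp\;\frac{1}{\sc^{\frac{1}{\kappa-1}}}\prn*{\frac{\lip\sqrt d}{n\diffp}}^{\frac{\kappa}{\kappa-1}},
\]
a factor $d^{\frac{\kappa}{2(\kappa-1)}}$ short of the claim. This is not repairable within $\domain=\crl{\pm e_j}$: with basis-vector samples, $m$ flips can move the empirical mean by at most $2m/n$ in $\ell_1$, and to get an $e^{\Omega(d)}$ packing you must spread those flips over $\Omega(d)$ coordinates, which forces $\ltwo{\bar s}\lesssim m/(n\sqrt d)$. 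With the group-privacy budget $m\asymp d/\diffp$ this caps $\ltwo{\bar s}$ at order $\sqrt d/(n\diffp)$, exactly the ``careless uniform spreading'' outcome.

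The paper sidesteps this by silently enlarging the sample space: it takes $s\in\crl{\pm 1/\sqrt d}^d\cup\crl{0}$ and sets $\Ds_i$ to contain $d/(20\diffp)$ copies of the packing point $v_i\in\crl{\pm 1/\sqrt d}^d$ and zeros otherwise. Now each ``signal'' sample carries the full direction $v_i$ with $\ltwo{v_i}=1$, so $\ltwo{\bar s_{\Ds_i}}=\frac{d}{20n\diffp}$ while $\dham(\Ds_i,\Ds_j)\le d/(20\diffp)$ and the packing is still of size $e^{d/8}$. That is the missing idea: to get the $d$ (rather than $\sqrt d$) rate you must let a single sample encode an entire packing direction. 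Once you adopt that construction, your Markov-plus-group-privacy contradiction goes through verbatim and yields the stated bound.
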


\begin{proof}
  First, note that it is enough to prove the following lower bound
	\begin{equation}
	\label{eq:dis-lb}
	\sup_{\Ds \in \domain^n} \E \left[ \ltwo{\A(\Ds) - x\opt} \right]
		\gtrsim \frac{1}{\lambda^{\frac{1}{\kappa-1}}} \left(\frac{ \lip d}{n \diffp} \right)^{\frac{1}{\kappa-1}}.
	\end{equation}
	Indeed, this implies that
	\begin{align*}
	\sup_{\Ds \in \domain^n} \E \left[ f(\A(\Ds)) - \min_{x \in \xdomain} f(x) \right]
	& \ge \frac{\lambda}{\kappa} \sup_{\Ds \in \domain^n} \E \left[ \ltwo{\A(\Ds) - x\opt}^\kappa \right] \\
	& \gtrsim \frac{1}{\kappa \lambda^{\frac{1}{\kappa-1}}} \left(\frac{ \lip d}{n \diffp}  \right)^{\frac{\kappa}{\kappa-1}}.
	\end{align*}
	Let us now prove the lower bound~\eqref{eq:dis-lb}. To this end, we consider the function $\f(x;s) \defeq \frac{\lambda 2^{\kappa - 2}}{\kappa}\norm{x}_2^\kappa + \frac{L}{2}\tri{x, s}$ where $\ltwo{s} \le 1$. We now construct $M$ datasets $\Ds_1,\dots,\Ds_M$ as follows. Let $v_1,\dots,v_M \in \crl*{\pm \frac{1}{\sqrt{d}}}^d$ be the Gilbert-Varshimov packing of the hypercube: that is, $M \ge \exp(d/8)$
	and $\dham(v_i, v_j) \ge d/4$ for all $i \neq j$. We define $\Ds_i = (\underbrace{v_i,\dots,v_i}_{d/20\diffp},0,\dots,0)$. Note that $\dham(S_i,S_j) \le d/20 \diffp$ and that $f(x;\Ds_i) = \frac{\lambda 2^{\kappa - 2}}{\kappa}\norm{x}_2^\kappa + \frac{L}{2} \frac{d}{20 n \diffp} \tri{x, v_i} $, hence
	\begin{equation*}
	x_i^\star = -\prn*{\frac{1}{\lambda 2^{\kappa-2}}}^{\tfrac{1}{\kappa - 1}}
	\prn*{\frac{40 n \diffp}{\lip d}}^{\tfrac{\kappa - 2}{\kappa - 1}} \frac{\lip d}{40 n \diffp} v_i.
	\end{equation*}
	Therefore we have
	\begin{align*}
	\ltwo{x_i\opt - x_j\opt}^2
		& \ge \prn*{\frac{1}{\lambda 2^{\kappa-2}}}^{\tfrac{2}{\kappa - 1}}
		\prn*{\frac{40 n \diffp}{\lip d}}^{\tfrac{2(\kappa - 2)}{\kappa - 1}} \frac{\lip^2 d^2}{1600 n^2 \diffp^2} \\
		& \gtrsim  \prn*{\frac{1}{\lambda 2^{\kappa-2}}}^{\tfrac{2}{\kappa - 1}}
		\prn*{\frac{\lip d}{n \diffp}}^{\tfrac{2}{\kappa - 1}}
		\defeq \rho^2.
	\end{align*}
	We are now ready to finish the proof using packing-based arguments~\cite{HardtTa10}. Assume by contradiction there is an $\diffp$-DP algorithm $\A$ such that 
	\begin{equation*}
	\sup_{1 \le i \le M} \E \left[ \ltwo{\A(\Ds_i) - x_i\opt} \right] \le \rho/20.
	\end{equation*}
	Let $B_i = \{x \in \xdomain : \ltwo{x - x_i\opt} \le \rho/2   \}$. Note that the sets $B_i$ are disjoint and that
	Markov inequality implies 
	\begin{equation*}
	\Pr(\A(\Ds_i) \in B_i)
	 	= \Pr( \ltwo{\A(\Ds_i) - x_i\opt} 
	 	\le \rho/2 ) \ge 9/10.
	\end{equation*}
	Thus, the privacy constraint now gives
	\begin{align*}
	1 
		& \ge \sum_{i=1}^M \Pr(\A(x_1) \in B_i) \\
		& \ge \Pr(\A(x_1) \in B_1) +  e^{-d/20}\sum_{i=2}^M \Pr(\A(x_i) \in B_i) \\
		& \ge \frac{9}{10} (1+ e^{-d/20} (M-1)) ,
	\end{align*}
	where the second inequality follows since $\dham(\Ds_i,\Ds_j) \le
        d/20\diffp$. This gives a contradiction for $d \ge 20$ as
        $M \ge \exp(d/8)$. For $d=1$, we can repeat the same arguments with
        $M=2$ to get the desired lower bound.
      \end{proof}

      \paragraph{Reduction from $\diffp$-DP ERM to $\diffp$-DP SCO} We conclude
      the proof of the theorem by proving that SCO under privacy constraints is
      strictly harder than ERM. This is similar to Appendix~C
      in~\cite{BassilyFeTaTh19} but we require it for pure-DP constraints. We
      make this formal in here.

      %
\begin{comment}
\section{Lower bounds: from ERM to SCO}
\label{sec:lb-erm-sco}
In this section, we provide a reduction which shows that a lower bound for
DP-ERM transforms into a lower bound for DP-SCO. For the case of \ed-DP, this
follows from~\cite[Appendix C]{BassilyFeTaTh19}. We extend their reduction to
pure $\diffp$-DP.

\end{comment}

We have the following lemma.
\begin{proposition}\label{sec:lb-erm-sco}
	Let $0 < \beta \le 1/n$.
	Assume $\A$ is an $\frac{\diffp}{2\log(2/\beta)}$-DP algorithm that for a sample 
	$\Ds = (S_1,\ldots, S_n)$ with $S_1^n \simiid P$ achieves with probability $1-\beta/2$ error
	\begin{equation*}
		\pf(\A(\Ds)) -  \min_{x \in \xdomain} \pf(x) \le \gamma.
	\end{equation*}
	Then there is an $\diffp$-DP algorithm $\A'$ such that for any dataset 
	$\Ds \in \domain^n$ has with probability $1-\beta$,
	\begin{equation*}
	\pf_\Ds(\A'(\Ds)) - \min_{x \in \xdomain} \pf_\Ds(x) \le \gamma.
	\end{equation*}	
\end{proposition}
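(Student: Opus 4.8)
The plan is to cast empirical risk minimization on a fixed dataset $\Ds=(s_1,\dots,s_n)$ as \emph{population} minimization against the empirical distribution $\what{P}_\Ds$, and to run $\A$ on a fresh i.i.d.\ sample from $\what{P}_\Ds$. A plain bootstrap resample already yields an approximate-DP reduction (as in~\cite{BassilyFeTaTh19}), but for pure DP we resample under rejection so that the multiplicity of every index is \emph{deterministically} capped. Precisely, set $\tau \defeq 2\log(2/\beta)$ and define $\A'$ on input $\Ds$ by: repeatedly draw $J=(J_1,\dots,J_n)$ with $J_\ell\simiid\uniform\{1,\dots,n\}$ until no value occurs more than $\tau$ times among $J_1,\dots,J_n$; form $\Ds'=(s_{J_1},\dots,s_{J_n})$ from the accepted tuple; return $\A(\Ds')$. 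Since $\Pr(\#\{\ell:J_\ell=i\}\ge k)\le 1/k!$, a union bound gives acceptance probability at least $1-n/\lceil\tau\rceil!\ge 1/2$ (using $\beta\le 1/n$), so the loop terminates in at most two rounds in expectation.

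\emph{Utility.} Fix $\Ds$; the population objective under $P=\what{P}_\Ds$ is exactly $\pf_\Ds$. If rejection were ignored, $\Ds'$ would be an i.i.d.\ draw from $P$, so the hypothesis on $\A$ gives $\Pr(\pf_\Ds(\A(\Ds'))-\min_{x\in\xdomain}\pf_\Ds(x)>\gamma)\le\beta/2$. The acceptance event $E$ depends only on $J$ and never on the data, so conditioning on $E$ inflates the failure probability by at most $1/\Pr(E)\le 2$, yielding error $\le\gamma$ with probability $\ge 1-\beta$; this is where the factor-two slack in the hypothesis ($1-\beta/2$ versus $1-\beta$) is spent.

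\emph{Privacy --- the crux.} Let $\Ds,\tilde{\Ds}$ differ only in entry $i_0$. Because the rejection loop inspects only the indices $J$ and never the data, the law $\mu$ of the accepted tuple is the same for $\Ds$ and $\tilde{\Ds}$, and every tuple in $\mathrm{supp}(\mu)$ satisfies $\#\{\ell:J_\ell=i_0\}\le\tau$; hence the two resamples built from a common accepted tuple differ in at most $\tau$ coordinates. Group privacy for the $\diffp_0$-DP mechanism $\A$ with $\diffp_0=\diffp/(2\log(2/\beta))$ --- which for \emph{pure} DP costs only a multiplicative $e^{\tau\diffp_0}=e^{\diffp}$ and no additive term --- gives, conditionally on any accepted $J$,
\[
\Pr(\A(\Ds')\in\cO\mid J)\ \le\ e^{\tau\diffp_0}\,\Pr(\A(\tilde{\Ds}')\in\cO\mid J)\ =\ e^{\diffp}\,\Pr(\A(\tilde{\Ds}')\in\cO\mid J),
\]
and integrating over $J\sim\mu$ (the same law on both sides) yields $\Pr(\A'(\Ds)\in\cO)\le e^{\diffp}\Pr(\A'(\tilde{\Ds})\in\cO)$, i.e.\ $\A'$ is $\diffp$-DP. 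The only genuinely delicate point, and the reason the hypothesis must give away a $\log(1/\beta)$ factor in the privacy budget, is exactly this control of the random Hamming blow-up under resampling; the rejection step caps it at $\tau$ while distorting the resampling law negligibly, and the rest is bookkeeping.
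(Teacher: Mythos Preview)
Your proof is correct and follows essentially the same approach as the paper: cast ERM as SCO against the empirical distribution, bootstrap-resample, cap the per-index multiplicity at $\tau=2\log(2/\beta)$, and invoke group privacy for the $\diffp/\tau$-DP base algorithm. The only difference is cosmetic---the paper aborts (outputs a fixed default) when some index exceeds the cap, whereas you reject and resample until none does; both variants keep the acceptance/abort decision data-independent and pay at most a constant factor in the utility failure probability.
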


\begin{proof}
	Given the algorithm $\A$, we define $\A'$ as follows.
	For an input $\Ds \in \domain^n$, let $P_\Ds$ be the empirical distribution of $\Ds$.
	Then, $\A'$ proceeds as follows:
	\begin{enumerate}
		\item Sample a new dataset $\Ds_1 = (S_1',\dots,S'_n)$ where $S'_i \sim P_\Ds$
		\item If there is a sample $S_i$ that was sampled more than $k=2\log(2/\beta)$ times, return $0$
		\item Else, return $\A(\Ds_1)$
	\end{enumerate}
	We need to prove that $\A'$ is $\diffp$-DP and that it has the desired utility.
	For utility, note that $\A'$ returns $0$ at step $2$ with probability at most $\beta/2$, since we have for every $1 \le i \le n$
	\begin{align*}
	\Pr\prn*{\ds_i \mbox{~used more than $k$ times}}
		& = \Pr\prn*{\sum_{j=1}^n Z_i \ge k} \\
		& \le 2^{-k} 
		\le \beta^2/2,
	\end{align*}
	where $Z_j \sim \mathsf{Bernoulli}(p)$ with $p=1/n$, and the second inequality follows
	from Chernoff~\cite[][Thm. 4.4]{MitzenmacherUp05} and $\beta \le 1/10$. Applying a union bound over all samples, we get that step $2$ returns $0$ with probability at most $\beta/2$ as $\beta \le 1/n$. Moreover, Algorithm $\A$ fails with probability at most $\beta/2$. 
	Therefore, as $\pf_\Ds(x) = \E_{S \sim P_\Ds} [F(x;S)] $, we have with probability at least $1-\beta$,
	\begin{equation*}
	\pf_\Ds(\A'(\Ds)) - \min_{x \in \xdomain} \pf_\Ds(x) \le \gamma.
	\end{equation*}
	Let us now prove privacy. Assume we run algorithm $\A'$ on two neighboring datasets $\Ds,\Ds'$, and let $\Ds_1,\Ds'_1$ be the datasets produced at step $1$. Let $B$ denote the event that there was a sample $\ds_i$ that was used more than $k$ times (note that this does not depend on the input). Then for any measurable $\cO$,
	\begin{align*}
	\Pr(\A'(\Ds) \in \cO)
		& = \Pr(\A'(\Ds) \in \cO \mid B) \Pr(B) + \Pr(\A'(\Ds) \in \cO \mid B^c) \Pr(B^c) \\
		& \le e^\diffp \Pr(\A'(\Ds') \in \cO \mid B) \Pr(B) + \Pr(\A'(\Ds') \in \cO \mid B^c) \Pr(B^c) \\
		& \le e^{\diffp} \Pr(\A'(\Ds') \in \cO),
	\end{align*}
	where the first inequality follows from group privacy since $\dham(\Ds_1,\Ds'_1) \le k$ and $\A$ is $\diffp/k$-DP. This completes the proof.
	
\end{proof}

    \subsubsection{Proof of~\Cref{thm:lb-small-kappa}}\label{app:prf-thm-lb-small-kappa}

    \restateLbSmallKappa*

  \begin{proof} We follow the same reduction that we used in the proof of
    Theorem~\ref{thm:lb-non-private-large-kappa}. For $\delta \in \brk{0, 1/2}$, we again
    consider $P_v = 1$ with probability $\tfrac{1+\delta v}{2}$ and $-1$
    otherwise. For $a \in [0, 1]$ to be defined later, we construct the
    following function
    \begin{equation*}
      F(x;+1) = \begin{cases}
        \abs{x-a} & \mbox{~~if~~} x \le a \\
        \abs{x-a}^\kappa & \mbox{~~if~~} x \ge a
      \end{cases}
      \mbox{~~and~~}
            F(x;-1) = \begin{cases}
        \abs{x+a}^\kappa & \mbox{~~if~~} x \le -a \\
        \abs{x+a} & \mbox{~~if~~} x \ge -a
      \end{cases}
    \end{equation*}

    \underline{Computing the separation.} First, let us compute the separation
    $d_{\mathsf{opt}}(v, v', \mc{X})$. We will then choose $a$ to ensure $f_v$
    has $\kappa$-growth. By symmetry, assume $v=1$. $f_v$ is increasing on
    $[a, 1]$ and decreasing on $[-1, -a]$, thus the minimum belongs to
    $\brk{-a, a}$ and by inspection, is attained at $x = a$ with value
    $a(1-\delta)$. Similarly, the minimum of $f_{+1}(x) + f_{-1}(x)$ is attained
    on $\brk{-a, a}$ with value $2a$. This yields
  \begin{equation*}
    d_{\mathsf{opt}}(v, v', \mc{X}) = 2a - 2a(1-\delta) = 2a\delta.
  \end{equation*}

  Let us now pick $a$ such that $f_v$ has $\kappa$-growth. Again, by symmetry we
  only treat the $v=1$ case. We have
  \begin{equation*}
    \mbox{for~} x\ge a, f_v(x) - f_v^\star = \frac{1+\delta}{2}(x-a)^\kappa
    + \frac{1-\delta}{2}(x+a) - a(1-\delta) =
    \frac{1+\delta}{2}(x-a)^{\kappa} + \frac{1-\delta}{2}(x-a) \ge \abs{x-a}^\kappa,
  \end{equation*}
  where the last inequality is because $(x-a) \le 1$ and so
  $(x-a) \ge (x-a)^\kappa$ for $\kappa > 1$. In the second case, we have
  \begin{equation*}
    \mbox{for~} x\in\brk{-a, a}, f_v(x) - f_v^\star = \delta(a-x).
  \end{equation*}
  It holds that $\delta(a-x)\ge (a-x)^\kappa$ for all $x\in\brk{-a, a}$ iff
  $a \le \tfrac{1}{2}\delta^{\tfrac{1}{\kappa -1}}$. As a result, we set
  $a = \tfrac{1}{2}\delta^{\tfrac{1}{\kappa - 1}}$. Finally, for $x \in \brk{-1, -a}$, we define
  \begin{equation*}
    h(x) \defeq \frac{1+\delta}{2}\abs{x-a} + \frac{1-\delta}{2}\abs{x+a}^\kappa - a(1-\delta)
    - \frac{1}{\kappa}\abs{x-a}^\kappa \mbox{~~for~~} x\in\brk{-1, -a}.
  \end{equation*}
  We wish to prove that $h(x) \ge 0$. First of, note that
  $h(-a) = \delta^{\tfrac{\kappa}{\kappa - 1}}(\tfrac{1}{2} + \tfrac{1}{2} -
  \tfrac{1}{\kappa}) > 0$, whenever $\kappa > 1$. Let us show that $h(x)$ is
  decreasing on $[-1, -a]$ which suffices to conclude the proof. We have
  \begin{equation*}
    h'(x) = -\frac{1+\delta}{2} - \frac{\kappa(1-\delta)}{2}\abs{x+a}^{\kappa-1}
    + \abs{x-a}^{\kappa-1}.
  \end{equation*}
  First of, note that $h'(-a) = -\tfrac{1+\delta}{2} + \delta \le 0$ and
  $h'(-1) < 0$, thus it suffices to show that if $h'$ has an extremum then is it
  negative. An extremum of this function is a point $x^\star$ such that
  \begin{equation*}
    \abs{a - x^\star} = \prn*{\frac{\kappa(1-\delta)}{2}}^{\tfrac{1}{\kappa - 2}}
    \abs{a + x^\star},
  \end{equation*}
  which yields that
  \begin{equation*}
    h'(x^\star) = \abs{a+x^\star}^{\kappa - 1}\prn*{\frac{\kappa(1-\delta)}{2}}\brk*{
      \prn*{\frac{\kappa(1-\delta)}{2}}^{\tfrac{1}{\kappa - 2}} - 1
    } - \frac{1+\delta}{2} \le 0,
  \end{equation*}
  as $\kappa \le 2$. This calculation shows that $f_v$ has $(1,
  \kappa)$-growth. Finally note that the function is $\kappa \le 2$-Lipschitz as
  desired.

  \underline{Lower bounding the testing error.} It remains to choose the value
  of $\delta$. Since we require a lower bound under privacy constraints, in
  contrast to the one-dimensional section of the proof
  of~\Cref{thm:lb-non-private-large-kappa}, we require the following privatized
  version of Le Cam's lemma from~\cite{BarberDu14a}
    \begin{proposition}{\cite[Thm. 2]{BarberDu14a}} Let $\msf{A}\in\mc{A}^\diffp$ be an $\epsilon$-DP
      mechanism from $\S^n\to\mc{X}$. It holds that
      \begin{equation*}
        \inf_{\psi:\mc{X}\to\crl{-1, 1}} \inf_{\msf{A}\in\mc{A}^\eps} \P(\psi(\msf{A}(S_1^n)) \neq V) \ge
      \frac{1}{2}\prn*{1 - \min\crl*{2n\diffp\tvnorm{P_1 - P_{-1}}, \tvnorm{P_{-1}^n - P_1^n}}}.
    \end{equation*}
    \end{proposition}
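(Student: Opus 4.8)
The plan is to prove this as the differentially private version of Le Cam's two-point method, following~\cite{BarberDu14a}. Fix an $\diffp$-DP mechanism $\msf{A}\in\mc{A}^\diffp$ and a test $\psi$; write $V\sim\uniform\crl{-1,+1}$ and $S_1^n\mid V=v\simiid P_v$, and let $Q_v$ be the law of $\msf{A}(S_1^n)$ under $V=v$. Since $\psi\circ\msf{A}$ is itself a test of $Q_1$ against $Q_{-1}$ from a single observation, the classical Le Cam identity gives $\P(\psi(\msf{A}(S_1^n))\neq V)\ge\inf_{\psi'}\P(\psi'(\msf{A}(S_1^n))\neq V)=\tfrac12\bigl(1-\tvnorm{Q_1-Q_{-1}}\bigr)$. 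Hence it suffices to prove
\[
\tvnorm{Q_1-Q_{-1}}\le\min\crl*{2n\diffp\,\tvnorm{P_1-P_{-1}},\ \tvnorm{P_{-1}^n-P_1^n}},
\]
because this bound depends on neither $\msf{A}$ nor $\psi$ and therefore survives both infima in the statement.

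The second term in the minimum is immediate: $Q_v$ is the pushforward of $P_v^n$ through $\msf{A}$, so the data-processing inequality for total variation gives $\tvnorm{Q_1-Q_{-1}}\le\tvnorm{P_1^n-P_{-1}^n}$.

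For the first term I would use a coupling together with group privacy. Put $\rho\defeq\tvnorm{P_1-P_{-1}}$ and let $\gamma$ be the maximal coupling of $(P_1,P_{-1})$, so $(X,X')\sim\gamma$ has the right marginals with $\P(X\neq X')=\rho$. Draw $(X_i,X_i')_{i\le n}\simiid\gamma$, set $\mc{S}=(X_i)_i$, $\mc{S}'=(X_i')_i$, and $N=|\crl{i:X_i\neq X_i'}|\sim\mathrm{Binomial}(n,\rho)$, so $\dham(\mc{S},\mc{S}')\le N$ and $\E[N]=n\rho$. Conditioned on $N=k$, the laws of $\msf{A}(\mc{S})$ and $\msf{A}(\mc{S}')$ are averages, against the conditional coupling, of output laws that are pairwise $(k\diffp,0)$-indistinguishable by $k$-fold group privacy; since $e^{k\diffp}$-likelihood-ratio closeness (symmetric under complementation) implies total-variation distance at most $\min\crl{1,e^{k\diffp}-1}\le 2k\diffp$ (when $k\diffp\ge\log 2$ the left side is $1\le 2\log 2\le 2k\diffp$; when $k\diffp<\log 2$, $e^{k\diffp}-1\le k\diffp\,e^{k\diffp}<2k\diffp$), the conditional total variation is at most $2k\diffp$. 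As $Q_1$ and $Q_{-1}$ are the $N$-mixtures of these conditional laws, joint convexity of total variation yields $\tvnorm{Q_1-Q_{-1}}\le\E\bigl[\min\crl{1,e^{N\diffp}-1}\bigr]\le 2\diffp\,\E[N]=2n\diffp\rho$, as required. Combining the two bounds and substituting into the Le Cam identity completes the proof.

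I expect the coupling/group-privacy step to be the main obstacle: one must verify that conditioning on $N=k$ genuinely produces datasets within Hamming distance $k$ so that $k$-fold group privacy applies, convert the resulting likelihood-ratio bound into a total-variation bound, and then average over $k$ correctly via joint convexity of total variation — the elementary inequality $\min\crl{1,e^{k\diffp}-1}\le 2k\diffp$ is exactly what collapses that average to the clean linear factor $2n\diffp\,\E[N]$. The data-processing half and the reduction to Le Cam are routine.
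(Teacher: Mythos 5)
Your proof is correct. The paper does not prove this proposition at all---it is imported verbatim as Theorem~2 of \cite{BarberDu14a}---and your argument (Le Cam reduction to $\tvnorm{Q_1-Q_{-1}}$, data processing for the non-private term, and a maximal coupling with $N\sim\mathrm{Binomial}(n,\tvnorm{P_1-P_{-1}})$ discrepancies combined with $k$-fold group privacy, the bound $\min\{1,e^{k\diffp}-1\}\le 2k\diffp$, and joint convexity of total variation for the private term) is essentially the standard proof given in that reference, with the constant $2n\diffp$ coming out exactly as stated.
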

    With this result, we set
    $\delta = \max\crl{1/\sqrt{6n}, 1/(2\sqrt{3}n\diffp)}$ and lower bound
    $\max\crl{a, b}$ by $a+b$ for readability, which concludes the proof of the
    theorem.
\end{proof}

\subsection{Proof for Section~\ref{sec:lb-appr}}
\label{app:prf-lb-approx}

\restatePropReduction*

\begin{proof}[Proof of Proposition~\ref{prop:reduction-growth-to-erm}]
  Let us first show how to construct the mechanism $\msf{A}'$. Let $k \in \N$ be
  such that
  $k \ge \log_2\brk*{\frac{\kappa^{\tfrac{1}{\kappa -
          1}}L^{\tfrac{\kappa}{\kappa-1}}}{2^{2\kappa - 3}\Delta(n, L, \diffp/k,
      \delta)}}$ and let $\crl{\lambda_i}_{i\in\brk{k}}$ be a collection of
  positive scalars. Set $x_0 \in \mc{X}$, for $i \in \crl{1, \ldots, k}$
  \begin{align*}
    & \mbox{define~~} G_i(x;s) = \f(x;s) + \frac{\lambda_i\cdot 2^{\kappa - 2}}{\kappa}\norm{x-x_{i-1}}_2^\kappa,
      \mc{Y}_i \defeq \crl*{x\in\mc{X}: \norm{x-x_{i-1}}_2 \le \prn*{\frac{L\kappa}{\lambda_i2^{\kappa - 2}}}^{\tfrac{1}{\kappa - 1}}} \\
    & \mbox{~~and set~~} x_i = \msf{A}(\mc{S}, G_i, \mc{Y}_i), \mbox{~with privacy~~} (\diffp/k, \delta/k).
  \end{align*}
  Finally, define $\msf{A}'(\mc{S}) = x_k$. Standard composition
  theorems~\cite{DworkRo14} guarantee that $\msf{A}'$ is $(\diffp, \delta)$-DP. Let
  us analyze its utility; we drop the dependence of $\Delta$ on other variables
  when clear from context. First of, since $\kappa$ is a constant, note that
  $G_i$ is $c_0\lip$-Lipschitz with $c_0 < \infty$ a numerical constant.  For
  simplicity, we define $g_i(x) \defeq \frac{1}{n}\sum_{s\in\mc{S}}G_i(x;s)$ and
  $x_i^\star = \argmin_{x\in\mc{Y}_i} g_i(x)$. It holds that $g_i$ is
  $(\lambda_i2^{\kappa - 2}, \kappa)$-uniformly-convex and thus the following
  growth condition holds
  \begin{equation*}
    \frac{\lambda_i}{\kappa}\E\norm{x_i - x_i^\star}_2^\kappa
    \le \E\brk{g_i(x_i)} - g_i(x_i^{\star}) \le \frac{1}{\lambda_i^{\tfrac{1}{\kappa - 1}}}\Delta.
  \end{equation*}

  Also note that for any point $y \in \mc{Y}_i$, it holds that
  \begin{equation*}
    f_\mc{S}(x_i^\star) - f(y) \le \frac{\lambda_i2^{\kappa - 2}}{\kappa}\norm{x_{i-1} - y}_2^\kappa.
  \end{equation*}

  Finally, let us bound the distance to the optimum of $\femp$ at the final
  iterate. We have
  \begin{equation*}
    \frac{\lambda_k}{\kappa}\norm{x_k-x_k^\star}_2^\kappa \le g_k(x_k) - g_k(x_k^\star)
    \le c_0L\norm{x_k - x_k^\star}_2 \mbox{~~which yields~~}
    \norm{x_k - x_k^\star}_2 \le \prn*{\frac{c_0L\kappa}{\lambda_k}}^{\tfrac{1}{\kappa - 1}}.
  \end{equation*}
  
  Let us put the pieces together: for $\lambda > 0$ to be determined later and
  $\nu = \kappa - 1$, set $\lambda_i = 2^{-\nu i} \lambda$. After $k$ rounds and
  denoting $x_0^\star = \inf_{x\in\mc{X}}f_{\mc{S}}(x)$, we have
  \begin{align*}
    \E\brk{\ff_{\mc{S}}(x_k)} - \ff_{\mc{S}}(x^\star)
    & = \sum_{i=1}^k \E\brk*{\femp(x_i^\star) - \femp(x_{i-1}^\star)}
      + \E\brk*{\femp(x_k) - \femp(x_k^\star)} \\
    & \le \sum_{i=1}^k \frac{\lambda_i2^{\kappa - 2}}{\kappa}\E\norm{x_{i-1} - x_{i-1}^\star}_2^\kappa
      + \lip\prn*{\frac{c_0\lip \kappa}{\lambda_k}}^{\tfrac{1}{\kappa -1}} \\
    & \le \frac{\lambda D^\kappa}{\kappa} +
      \sum_{i=2}^k \frac{\lambda_i2^{\kappa - 2}}{\lambda_{i-1}^{\tfrac{\kappa}{\kappa - 1}}}\Delta
      + \lip\prn*{\frac{c_0\lip \kappa}{\lambda_k}}^{\tfrac{1}{\kappa -1}} \\
    & = \frac{\lambda D^\kappa}{\kappa} +
      \frac{\Delta 2^{\kappa - 2}}{\lambdae} \sum_{i=2}^k 2^{-\tfrac{\nu}{\kappa - 1}(i - \kappa)}
      + \lip\prn*{\frac{c_0\lip \kappa}{\lambda}}^{\tfrac{1}{\kappa -1}}2^{-\tfrac{\nu}{\kappa - 1}k} \\
    & \le \frac{\lambda D^\kappa}{\kappa} + 2^{2\kappa - 3}\frac{\Delta}{\lambdae}
      + \frac{\kappa^{\tfrac{1}{\kappa - 1}}(c_0\lip)^{\tfrac{\kappa}{\kappa - 1}}2^{-k}}{\lambdae}.
  \end{align*}
  Finally, note that
  \begin{equation*}
    k \ge \ceil*{\log_2\brk*{\frac{\kappa^{\tfrac{1}{\kappa - 1}}(c_0L)^{\tfrac{\kappa}{\kappa-1}}}{2^{2\kappa - 3}\Delta}}}
    \mbox{~~so that~~}
    \frac{\kappa^{\tfrac{1}{\kappa - 1}}(c_0\lip)^{\tfrac{\kappa}{\kappa - 1}}2^{-k}}{\lambdae} \le
    2^{2\kappa - 3}\frac{\Delta}{\lambdae}.
  \end{equation*}
  It then holds that
  \begin{equation*}
    \E\brk{\femp(x_k)} - \femp(x^\star) \le \lambda\frac{D^\kappa}{\kappa} + 4^{\kappa - 1}\Delta\frac{1}{\lambdae}.
  \end{equation*}

  It remains to pick $\lambda$ to minimize the upper bound above. A calculation yields that for $a, b \ge 0$
  \begin{equation*}
    \inf_{\nu \ge 0} a\nu + \frac{b}{\nu^{\tfrac{1}{\kappa - 1}}} = (\kappa - 1)^{1/\kappa}a^{1/\kappa} b^{(\kappa - 1)/\kappa}
    \brk*{\kappa - 1 + \frac{1}{\kappa - 1}}
    \mbox{~~at~~} \nu^\star = \prn*{\frac{b}{a(\kappa - 1)}}^{\tfrac{\kappa - 1}{\kappa}}.
  \end{equation*}
  Setting
  $\lambda = 4^{\tfrac{(\kappa - 1)^2}{\kappa}}\prn{\tfrac{\Delta
      \kappa}{D^\kappa(\kappa - 1)}}^{(\kappa - 1)/\kappa}$ yields the regret
  bound
  \begin{equation*}
    \E\brk{\femp(x_k)} - \femp(x^\star) \le O(1)D\Delta^{\tfrac{\kappa - 1}{\kappa}}.
  \end{equation*}
\end{proof}

\begin{proof} Consider the reduction of
  Proposition~\ref{prop:reduction-growth-to-erm}. For $c_1 < \infty$ to be
  determined later, assume by contradiction that there exists an
  $(\diffp, \delta)$ mechanism such that
  \begin{equation*}
    \Delta(n, L, \diffp, \delta) \le c_1\prn*{\frac{L\sqrt{d}}{n\diffp}}^{\tfrac{\kappa}{\kappa - 1}}.
  \end{equation*}
  Setting
  $k = \ceil{4\log_2(n\diffp /
    \sqrt{d})\log\log_2((n\diffp/\sqrt{d})^{\kappa/(\kappa - 1)})}$, the
  condition holds and the result of
  Proposition~\ref{prop:reduction-growth-to-erm} guarantees that there exists a
  numerical constant $c_2 < \infty$ and a mechanism $\msf{A}'$ such that
  \begin{equation*}
    \E\brk{\femp(\msf{A}'(\mc{S}))} - \inf_{x'\in\mc{X}}\femp(x')
    \le c_2c_1^{\tfrac{\kappa - 1}{\kappa}}k\rad\frac{L\sqrt{d}}{n\diffp}.
  \end{equation*}
  
  However, Theorem 5.3 in~\cite{BassilySmTh14} guarantees that there exists
  $c_3 > 0$ such that for any $(\diffp, \delta)$-DP mechanism $\msf{A}''$, it
  must hold
  \begin{equation*}
    c_3 \lip\rad \frac{\sqrt{d}}{n\diffp} \le \E\brk{\femp(\msf{A}''(\mc{S}))} - \femp(x^\star).
  \end{equation*}
  Setting $c_1 = \tfrac{1}{2}\prn*{\tfrac{c_3}{kc_2}}^{\tfrac{\kappa}{\kappa-1}}$ yields a contradiction
  and the desired lower bound by noting that $k$ consists only of log factors.
\end{proof}

\end{document}